\newtheorem{theorem}{Theorem}
\newtheorem{lemma}[theorem]{Lemma}
\newtheorem{problem}[theorem]{Problem}
\newtheorem{assumption}[theorem]{Assumption}
\newtheorem{remark}[theorem]{Remark}
\newcommand{\True}{{\tt True}}
\newcommand{\given}{\,\vert\,}
\newcommand{\biggiven}{\,\big\vert\,}
\newcommand{\PP}{\mathbb{P}}
\newcommand{\EE}{\mathbb{E}}
\newcommand{\NearNodes}{Near}
\DeclareMathOperator{\Cl}{cl}
\title{Incremental Sampling-based Algorithms\\ for Optimal Motion Planning} \date{}
\author{Sertac Karaman
  \quad\quad\quad\and \quad\quad\quad Emilio Frazzoli\thanks{The authors are with the Laboratory for
    Information and Decision Systems, Massachusetts Institute of Technology, Cambridge,
    MA.}\thanks{Manuscript submitted to International Journal of Robotics Research.}}
\begin{document}

\maketitle

\begin{abstract}
  During the last decade, incremental sampling-based motion planning algorithms, such as the
  Rapidly-exploring Random Trees (RRTs) have been shown to work well in practice and to possess
  theoretical guarantees such as probabilistic completeness. However, no theoretical bounds on the
  quality of the solution obtained by these algorithms have been established so far. The first
  contribution of this paper is a negative result: it is proven that, under mild technical
  conditions, the cost of the best path in the RRT converges almost surely to a non-optimal value.
  Second, a new algorithm is considered, called the Rapidly-exploring Random Graph (RRG), and it is
  shown that the cost of the best path in the RRG converges to the optimum almost surely. Third, a
  tree version of RRG is introduced, called the RRT$^*$ algorithm, which preserves the asymptotic
  optimality of RRG while maintaining a tree structure like RRT. The analysis of the new algorithms
  hinges on novel connections between sampling-based motion planning algorithms and the theory of
  random geometric graphs. In terms of computational complexity, it is shown that the number of
  simple operations required by both the RRG and RRT$^*$ algorithms is asymptotically within a
  constant factor of that required by RRT.
\end{abstract}

\section{Introduction} \label{section:introduction}

The robotic motion planning problem has received a considerable amount of attention, especially over
the last decade, as robots started becoming a vital part of modern industry as well as our daily
life. Even though modern robots may possess significant differences in sensing, actuation, size,
workspace, application, etc., the problem of navigating through a complex environment is embedded
and essential in almost all robotics applications. Moreover, this problem has several applications
in other disciplines such as verification, computational biology, and computer
animation~\cite{latombe.ijrr99, bhatia.frazzoli.hscc04, branicky.curtis.ea.ieeeproc06,
  cortes.jailet.ea.icra07, liu.badler.comp_anim_conf03, finn.kavraki.algorithmica99}.

Informally speaking, given a robot with a description of its dynamics, a description of the
environment, an initial state, and a set of goal states, the motion planning problem is to find a
sequence of control inputs so as the drive the robot from its initial state to one of the goal
states while obeying the rules of the environment, e.g., not colliding with the surrounding
obstacles. An algorithm that solves this problem is said to be {\em complete} if it returns a
solution when one exists and returns failure otherwise.

Early algorithmic approaches to the motion planning problem mainly focused on developing complete
planners for, e.g., polygonal robots moving among polygonal
obstacles~\cite{lozanoperez.wesley.comm_acm79}, and later for more general cases using algebraic
techniques~\cite{schwartz.sharir.adv_app_math83}. These algorithms, however, suffered severely from
computational complexity, which prohibited their practical implementations; for instance, the
algorithm in\cite{schwartz.sharir.adv_app_math83} had doubly exponential time complexity. Regarding
the computational complexity of the problem, a remarkable result was proven as early as 1979: Reif
showed that a most basic version of the motion planning problem, called the piano movers problem,
was PSPACE-hard~\cite{reif.sym_foun_com_sci79}, which strongly suggested that complete planners are
would be unsuitable for practical applications.

Practical planners came around with the development of cell decomposition
methods~\cite{brooks.lozanoperez.icai83, barraquand.latombe.ijrr93}, potential
fields~\cite{khatib.ijrr86}, and roadmap methods~\cite{canny.book88}. These approaches relaxed the
completeness requirement to, for instance, {\em resolution completeness}, which guarantees
completeness only when the resolution parameter of the algorithm is set fine enough. These planners
demonstrated remarkable performance in accomplishing various tasks in complex environments within
reasonable time bounds~\cite{ge.cui.autorobo02}. However, their practical applications were mostly
limited to state spaces with up to five dimensions, since decomposition-based methods suffered from
large number of cells, and potential field methods from local minima~\cite{koren.borenstein.icra91,
  latombe.ijrr99}.

Arguably, some of the most influential recent advances in robotic motion planning can be found in
sampling-based algorithms~\cite{kavraki.latombe.icra94,
  kavraki.svetska.ea.tro96,lavalle.kuffner.ijrr01}, which attracted a large amount of attention over
the last decade, including very recent work (see, e.g.,~\cite{ prentice.roy.ijrr09,
  tedrake.manchester.ea.ijrr, luders.karaman.ea.acc10, berenson.kuffner.ea.icra08,
  yershova.lavalle.rep08, stilman.schamburek.ea.icra07, koyuncu.ure.ea.j_intell_robot_syst10}). In
summary, sampling-based algorithms connect a set of points that are chosen randomly from the
obstacle-free space in order to build a roadmap, which is used to form a strong hypothesis of the
connectivity of the environment, in particular the connectivitiy of the initial state and the set of
goal states.
Informally speaking, sampling-based methods provide large amounts of computational savings by
avoiding explicit construction of obstacles in the state space, as opposed to most complete motion
planning algorithms. Even though these algorithms do not achieve completeness, they provide {\em
  probabilistic completeness} guarantees in the sense that the probability that the planner fails to
return a solution, if one exists, decays to zero as the number of samples approaches
infinity~\cite{barraquand.kavraki.ea.ijrr97}. Moreover, the rate of decay of the probability of
failure is exponential, under the assumption that the environment has good ``visibility''
properties~\cite{barraquand.kavraki.ea.ijrr97}. More recently, the empirical success of
sampling-based algorithms was argued to be strongly tied to the hypothesis that most practical
robotic applications, even though involving robots with many degrees of freedom, feature
environments with such good visibility properties~\cite{hsu.latombe.ea.ijrr06}.

\subsection{Sampling-Based Algorithms}
In practical applications, the class of sampling-based motion planning algorithms, including, for
instance, Probabilistic Roadmaps (PRMs)~\cite{kavraki.svetska.ea.tro96,
  kavraki.kolountzakis.ea.tro98}, and Rapidly-exploring Random Trees
(RRTs)~\cite{kuffner.lavalle.icra00, lavalle.kuffner.ijrr01}, is one of the most widely-used
approaches to the motion planning problem~\cite{lavalle.book06,lindemann.lavalle.symp_rr05}. Even
though the idea of connecting samples from the state space is essential in both approaches, these
two algorithms mainly differ in the way that they construct the roadmap.

The PRM algorithm and its variants are multiple-query methods that first construct a graph (the
roadmap), which represents a large portion of the collision-free trajectories, and then answer
queries by computing a shortest path that connects the initial state with a final state through the
graph. Shortly after their introduction in the literature, PRMs have been reported to extend to
high-dimensional state spaces~\cite{kavraki.svetska.ea.tro96}, since they avoid explicit
construction of obstacles in the state space of the robot. Subsequently, it was shown that the PRM
algorithm is probabilistically complete and the probability of failure decays to zero
exponentially~\cite{kavraki.kolountzakis.ea.tro98}.
During the last decade, the PRM algorithm has been a focus of robotics research. In particular,
several improvements were suggested by many authors and the reasons to why it performs well in
several practical cases were better understood (see, e.g., ~\cite{branicky.lavalle.ea.icra01,
  hsu.latombe.ea.ijrr06, ladd.kavraki.tro04} for some examples).

Even though multiple-query methods are valuable in highly structured environments such as factory
floors, most online planning problems do not require multiple queries, since, for instance, the
robot moves from one environment to another, or the environment is not known a priori. Moreover, in
some applications, computing a roadmap a priori may be computationally challenging or even
infeasible. Tailored mainly for these applications, incremental sampling-based planning algorithms
have emerged as an online counterparts to the PRMs (see, e.g.,~\cite{kuffner.lavalle.icra00,
  hsu.kindel.ea.ijrr02}). The incremental nature of these algorithms allows termination as soon as a
solution is found. Furthermore, slight changes in the environment do not require planning from
scratch, since most of the trajectories in the tree are still feasible (this is a property shared
with most other roadmap-based methods) or can be improved to produce feasible trajectories quickly.
These properties provide significant computational savings making online real-time implementations
possible.

A widely-studied incremental algorithm is the RRT. Closely after its introduction in the literature,
several theoretical guarantees provided by the RRT algorithm were shown, including probabilistic
completeness~\cite{kuffner.lavalle.icra00} and exponential rate of decay of the probability of
failure~\cite{frazzoli.dahleh.ea.jgcd02}.
Subsequently, the RRT algorithm has been improved and found many applications in the robotics domain
and elsewhere (see for instance~\cite{frazzoli.dahleh.ea.jgcd02, bhatia.frazzoli.hscc04,
  cortes.jailet.ea.icra07, branicky.curtis.ea.ieeeproc06, branicky.curtis.ea.cdc03,
  zucker.kuffner.ea.icra07}). In particular, RRTs have been shown to work effectively for systems
with differential constraints, nonlinear dynamics, and non-holonomic
constraints~\cite{lavalle.kuffner.ijrr01, frazzoli.dahleh.ea.jgcd02} as well as purely discrete or
hybrid systems~\cite{branicky.curtis.ea.cdc03}. Moreover, the RRT algorithm was demonstrated in
major robotics events on various experimental robotic platforms~\cite{bruce.veloso.lncs02,
  kuwata.teo.ea.cst09, teller.walter.ea.icra10, shkolnik.levashov.ea.unpub09,
  kuffner.kagami.ea.autorobo02}.

An interesting extension of the RRT that is worth mentioning at this point is the Rapidly-exploring
Random Graph (RRG) algorithm~\cite{karaman.frazzoli.cdc09}. The RRG algorithm was proposed as an
incremental sampling-based method in order to extend RRTs to generate feasible trajectories for
satisfying specifications other than the usual ``avoid obstacles and reach the goal''.
In~\cite{karaman.frazzoli.cdc09}, the authors have shown that the RRG algorithm can handle any
specification given in the form of $\mu$-calculus, which includes the widely-used specification
language Linear Temporal Logic (LTL) as a strict subclass. The RRG algorithm incrementally builds a
graph, instead of a tree, since several specifications in LTL (e.g., $\omega$-regular
properties~\cite{clarke.grumberg.ea.99}) require cyclic trajectories, which are not included in
trees.

\subsection{Optimal Motion Planning}
Generally, the standard robotic motion planning problem does not take the ``quality'' of the
solution into account. That is, the focus of the problem is to find a trajectory that starts from
the initial state and reaches one of the goal states while staying inside the obstacle-free portion
of the state space. Let us note that, as mentioned earlier, even this form of the problem is
computationally challenging~\cite{reif.sym_foun_com_sci79}. Consequently, there has been a
significant amount of research effort devoted to constructing feasible solutions in a
computationally effective manner. Hence, most of the existing algorithms pay almost no attention to
the other properties of solution, e.g., its length, or its cost. As a result, the analysis of the
quality of the solution returned by the algorithm, or algorithms tailored to provably improve the
quality of the solution if kept running, have received relatively little attention, even though
their importance is mentioned in early seminal papers~\cite{lavalle.kuffner.ijrr01}. In fact, in many field implementations of such algorithms
(see, e.g.,~\cite{kuwata.teo.ea.cst09}), it is often the case that a feasible path is found quickly,
and additional computation time is devoted to improving the solution with heuristics until the
solution is executed.

Yet, the importance of the quality of the solution returned by the planners have been noticed, in
particular, from the point of view of incremental sampling-based motion planning.
In~\cite{urmson.simmons.iros03}, Urmson and Simmons have proposed heuristics to bias the tree growth
in the RRT towards those regions that result in low-cost solutions. They have also shown
experimental results evaluating the performance of different heuristics in terms of the quality of
the solution returned. In~\cite{ferguson.stentz.iros06}, Ferguson and Stentz have considered running
the RRT algorithm multiple times in order to progressively improve the quality of the solution. They
showed that each run of the algorithm results in a path with smaller cost, even though the procedure
is not guaranteed to converge to an optimum solution. (Criteria for restarting multiple RRT runs, in a different context, were also proposed in~\cite{wedge.branicky.aaai_ai_conf08}.)

A different approach that also offers optimality guarantees is based on graph search algorithms (such as A$^*$) applied over a grid that is generated offline.
Recently, these algorithms received a large amount of research attention. In particular, they were
extended to run in an anytime fashion~\cite{likhachev.gordon.ea.nips04,
  likhachev.ferguson.ea.aij08}, deal with dynamic environments~\cite{stentz.ijcai95,
  likhachev.ferguson.ea.aij08}, and handle systems with differential
constraints~\cite{likhachev.ferguson.ijrr09}.
The have also been successfully demonstrated on various robotic
platforms~\cite{likhachev.ferguson.ijrr09, dolgov.thrun.ea.exp_robotics09}.
However, optimality guarantees of these algorithms are only ensured up to the grid resolution.
Moreover, since the number of grid points grows exponentially with the dimensionality of the
state-space, so does the (worst-case) running time of these algorithms.

\subsection{Statement of Contributions}
To the best of our knowledge, this paper provides the first thorough analysis of optimality
properties of incremental sampling-based motion planning algorithms. In particular, we show that the
probability that the RRT converges to an optimum solution, as the number of samples approaches
infinity, is zero under some reasonable technical assumptions. In fact, we give a formal proof of the
fact that the RRT algorithm almost always converges to a suboptimal solution.
Second, we show that the probability of the same event for the  RRG algorithm is
one. That is, the RRG algorithm is asymptotically optimal in the sense that it converges to an
optimal solution almost surely as the number of samples approaches infinity.
Third, we propose a novel variant of the RRG algorithm, called RRT$^*$, which inherits the
asymptotic optimality of the RRG algorithm while maintaining a tree structure. To do so, the RRT$^*$
algorithm essentially ``rewires'' the tree structure as it discovers new lower-cost paths reaching
the nodes that are already in the tree.
Finally, we show that the asymptotic computational complexity of the RRG and RRT$^*$ algorithms is
essentially the same as that of RRTs. We also provide experimental evidence supporting our
theoretical computational complexity results.
We discuss the relation between the proposed algorithms and PRM-based methods, and propose a
PRM-like algorithm that improves the efficiency of PRM while maintaining the same feasibility and
optimality properties.

To our knowledge, the algorithms considered in this paper are the first computationally efficient
incremental sampling-based motion planning algorithms with asymptotic optimality guarantees. Indeed,
our results imply that these algorithms are optimal also from an asymptotic computational complexity
point of view, since they closely match lower bounds for computing nearest neighbors. The key
insight is that connections between vertices in the graph should be sought within balls whose radius
vanishes with a certain rate as the size of the graph increases, and is based on new connections
between motion planning and the theory of random geometric graphs~\cite{penrose.book03,
  dall.christensen.phys_rev_e02}.

In this paper, we consider the problem of navigating through a connected bounded subset of a
$d$-dimensional Euclidean space. As in the early seminal papers on incremental sampling-based motion
planning algorithms such as~\cite{kuffner.lavalle.icra00}, we assume no differential constraints,
but our methods can be easily extended to planning in configuration spaces and applied to several
practical problems of interest. We defer the extension to systems with differential constraints to
another paper.

\subsection{Paper Organization}
This paper is organized as follows. In Section~\ref{section:notation}, lays the ground in terms of
notation and problem formulation. Section~\ref{section:algorithms} is devoted to the introduction of
the RRT and RRG algorithms. In Section~\ref{section:analysis}, these algorithms are analyzed in
terms of probabilistic completeness, asymptotic optimality, and computational complexity. The
RRT$^*$ algorithm is presented in Section~\ref{section:optrrt}, where it is shown that RRT$^*$
inherits the theoretical guarantees of the RRG algorithm. Experimental results are presented and
discussed in Section~\ref{section:simulations}. Concluding remarks are given in
Section~\ref{section:conclusion}. In order not to disrupt the flow of the presentation, proofs of
important results are presented in the Appendix.

\section{Preliminary Material} \label{section:notation}
\subsection{Notation}
% {\it Analysis:} 
Let $\mathbb{N}$ denote the set of non-negative integers and $\mathbb{R}$ denote the set of reals.
Let $\mathbb{R}_{>0}$ and $\mathbb{R}_{\ge 0}$ denote the sets of positive and non-negative reals,
respectively. The set $\mathbb{N}_{>0}$ is defined similarly. A sequence on a set $A$ is a mapping
from $\mathbb{N}$ to $A$, denoted as $\{ a_i\}_{i \in \mathbb{N}}$, where $a_i \in A$ is the element
that $i \in \mathbb{N}$ is mapped to. Given $a, b \in \mathbb{R}$, closed and open intervals between
$a$ and $b$ are denoted by $[a,b]$ and $(a,b)$, respectively. More generally, if $a, b \in
\mathbb{R}^d$, then $[a,b]$ is used to denote the line segment that connects $a$ with $b$, i.e.,
$[a,b] := \{\tau a + (1 - \tau) b \,\vert\, \tau \in \mathbb{R}, 0 \le \tau \le 1\}$. The
Euclidean norm is denoted by $\Vert \cdot \Vert$. Given a set $X \subset \mathbb{R}^d$, the closure
of $X$ is denoted by $\Cl(X)$. The closed ball  of radius $r>0$ centered at $x \in
\mathbb{R}^d$ is defined as ${\cal B}_{x,r}:=\{ y \in \mathbb{R}^d \,\vert\,\, \Vert y - x \Vert \le r\}$; ${\cal
  B}_{x,r}$ is also called the $r-$ball centered at $x$. Given a set $X \subseteq \mathbb{R}^d$, the
Lebesgue measure of $X$, i.e., its volume, is denoted by $\mu (X)$. The volume of the unit ball in
$\mathbb{R}^d$ is denoted by $\zeta_d$.

% {\it Topology:} 
Given a set $X \subset \mathbb{R}^d$, and a scalar $s \ge 0$, a path in $X$ is a continuous function
$\sigma : [0, s] \to X$, where $s$ is the length of the path defined in the usual way as $\sup_{ \{
  n \in \mathbb{N}, 0 = \tau_0 < \tau_1< \dots< \tau_n = s \} } \sum_{i = 1}^n \Vert
\sigma(\tau_i) - \sigma(\tau_{i-1}) \Vert$. Given two paths in $X$, $\sigma_1: [0, s_1] \to X$, and
$\sigma_2: [0, s_2] \to X$, with $\sigma_1(s_1) = \sigma_2(0)$, their concatenation is denoted by
$\sigma_1 \vert \sigma_2$. More precisely, $\sigma = \sigma_1 \vert \sigma_2 : [0, s_1 + s_2] \to X$
is defined as $\sigma(s) = \sigma_1(s)$ for all $s \in [0, s_1]$, and $\sigma(s) = \sigma_2(s-s_1)$
for all $s \in [s_1, s_1+s_2]$. More generally, a concatenation of multiple paths $\sigma_1 \vert
\sigma_2 \vert \cdots \vert \sigma_n$ is defined as $( \cdots ((\sigma_1 \vert \sigma_2) \vert
\sigma_3 ) \vert \cdots \vert \sigma_n)$. The set of all paths in $X$ with nonzero length is denoted
by $\Sigma_X$. The straight continuous path between two points $x_1, x_2 \in
\mathbb{R}^d$ is denoted by ${\tt Line}(x_1, x_2)$.

% {\it Probability Theory:} 
Given a probability space $(\Omega, {\cal F}, \PP)$, where $\Omega$ is a sample space, ${\cal F}
\subseteq 2^\Omega$ is a $\sigma-$algebra, and $\PP$ is a probability measure, an event $A$ is an
element of ${\cal F}$. The complement of an event $A$ is denoted by $A^c$. Given a sequence of
events $\{ A_i \}_{i \in \mathbb{N}}$, the event $\cap_{j = 1}^\infty \cup_{i = j}^\infty A_i$ is
denoted by $\limsup_{i \to \infty} A_i$ (also called the event that $A_i$ occurs infinitely often).
A (real) random variable is a measurable function that maps $\Omega$ into $\mathbb{R}$. An extended
(real) random variable can also take the values $\pm\infty$. A sequence of random variables $\{
{\cal Y}_i \}_{i \in \mathbb{N}}$ is said to converge surely to a random variable ${\cal Y}$ if
$\lim_{i \to \infty} {\cal Y}_i (\omega) = {\cal Y} (\omega)$ for all $\omega \in \Omega$; the
sequence is said to converge almost surely if $\PP (\{ \lim_{i \to \infty} {\cal Y}_i = {\cal Y}\})
= 1$. Finally, if $\varphi (\omega)$ is a property that is either true or false for a given $\omega
\in \Omega$, the event that denotes the set of all samples $\omega$ for which $\varphi(\omega)$
holds, i.e., $\{ \omega \in \Omega \,\vert\, \varphi(\omega) \mbox{ holds} \}$, is written as
$\{\varphi\}$, e.g., $\{\omega \in \Omega \,\vert\, \lim_{i \to \infty} {\cal Y}_i (\omega) = {\cal
  Y}(\omega)\}$ is simply written as $\{\lim_{i \to \infty}{\cal Y}_{i} = {\cal Y}\}$.

% {\it Complexity Theory:} 
Let $f(n)$ and $g(n)$ be two functions with domain and range $\mathbb{N}$.
The function $f(n)$ is said to be $O(g(n))$, denoted as $f(n) \in O(g(n))$, if there exists two
constants $M$ and $n_0$ such that $f(n) \le M g(n)$ for all $n \ge n_0$. The function $f(n)$ is said
to be $\Omega(g(n))$, denoted as $f(n) \in \Omega(g(n))$, if there exists constants $M$ and $n_0$
such that $f(n) \ge M g(n)$ for all $n \ge n_0$. The function $f(n)$ is said to be $\Theta(g(n))$,
denoted as $f(n) \in \Theta(g(n))$, if $f(n) \in O(g(n))$ and $f(n) \in \Omega(g(n))$.

% {\it Graph Theory:} 
Let $X$ be a subset of $\mathbb{R}^d$. A (directed) graph $G = (V,E)$ on $X$ is composed of a vertex
set $V$ and an edge set $E$, such that $V$ is a finite subset of $X$, and $E$ is a subset of $V
\times V$. The set of all directed graphs on $X$ is denoted by ${\cal G}_X$. A directed path on $G$
is a sequence $(v_1, v_2, \dots, v_n)$ of vertices such that $(v_i,v_{i+1}) \in E$ for all $1 \le
i\le n-1$. Given a vertex $v \in V$, the sets $\{ u \in V \,\vert\, (u,v) \in E\}$ and $\{u \in V
\,\vert\, (v,u) \in E\}$ are said to be its incoming neighbors and outgoing neighbors, respectively.
A (directed) tree is a directed graph, in which each vertex but one has a unique incoming neighbor;
the vertex with no incoming neighbor is called the root vertex. Vertices of a tree are often also
called nodes.

\subsection{Problem Formulation} \label{section:problem}

In this section, two variants of the path planning problem are presented. First, the feasibility
problem in path planning is formalized, then the optimality problem is introduced.

Let $X$ be a bounded connected open subset of $\mathbb{R}^d$, where $d \in
\mathbb{N}$, $d \ge 2$. 
Let $X_\mathrm{obs}$ and $X_\mathrm{goal}$, called the {\em obstacle region} and the {\em goal
  region}, respectively, be open subsets of $X$. Let us denote the {\em obstacle-free space}, i.e.,
$X \setminus X_\mathrm{obs}$, as $X_\mathrm{free}$. Let the {\em initial state}, $x_\mathrm{init}$,
be an element of $X_\mathrm{free}$.
In the sequel, a path in $X_\mathrm{free}$ is said to be a {\em collision-free path}. A
collision-free path that starts at $x_\mathrm{init}$ and ends in the goal region is said to be a
{\em feasible path}, i.e., a collision-free path $\sigma: [0,s] \to X_\mathrm{free}$ is feasible if
and only if $\sigma(0) = x_\mathrm{init}$ and $\sigma(s) \in X_\mathrm{goal}$.

The {\em feasibility problem} of path planning is to find a feasible path, if one exists, and report
failure otherwise. The problem can be formalized as follows.
\begin{problem}[Feasible planning] \label{problem:feasibility} %
  Given a bounded connected open set $X \subset \mathbb{R}^d$, an obstacle space $X_\mathrm{obs}
  \subset X$, an initial state $x_\mathrm{init} \in X_\mathrm{free}$, and a goal region
  $X_\mathrm{goal} \subset X_\mathrm{free}$, find a path $\sigma: [0, s] \to X_\mathrm{free}$ such
  that $\sigma(0) = x_\mathrm{init}$ and $\sigma(s) \in X_\mathrm{goal}$, if one exists. If no such
  path exists, then report failure.
\end{problem}

Let $c : \Sigma_{X_\mathrm{free}} \to \mathrm{R}_{> 0}$ be a function, called the {\em cost
  function}, which assigns a non-negative cost to all nontrivial collision-free paths. The {\em
  optimality problem} of path planning asks for finding a feasible path with minimal cost,
formalized as follows.

\begin{problem}[Optimal planning] \label{problem:optimality} %
  Given a bounded connected open set $X$, an obstacle space $X_\mathrm{obs}$, an initial state
  $x_\mathrm{init}$, and a goal region $X_\mathrm{goal}$, find a path $\sigma^*: [0, s] \to
  \Cl(X_\mathrm{free})$ such that (i) $\sigma^*(0) = x_\mathrm{init}$ and $\sigma^*(s) \in
  X_\mathrm{goal}$, and (ii) $c(\sigma^*) = \min_{\sigma \in \Sigma_{\Cl(X_\mathrm{free})}}
  c(\sigma)$. If no such path exists, then report failure.
\end{problem}

\section{Algorithms} \label{section:algorithms} 

In this section, two incremental sampling-based motion planning algorithms, namely the RRT and the
RRG algorithms, are introduced.
Before formalizing the algorithms, let us note the primitive procedures that they rely on.

{\em Sampling}: The function $\mathtt{Sample}: \mathbb{N} \to X_\mathrm{free}$ returns independent
identically distributed (i.i.d.) samples from $X_\mathrm{free}$.

{\em Steering:} Given two points $x, y \in X$, the function ${\tt Steer} : (x,y) \mapsto z$ returns
a point $z \in \mathbb{R}^d$ such that $z$ is ``closer'' to $y$ than $x$ is. Throughout the paper,
the point $z$ returned by the function ${\tt Steer}$ will be such that $z$ minimizes $\Vert z - y
\Vert$ while at the same time maintaining $\Vert z - x \Vert \le \eta$, for a prespecified $\eta >
0$,\footnote{This steering procedure is used widely in the robotics literature, since its
  introduction in~\cite{kuffner.lavalle.icra00}. Our results also extend to the Rapidly-exploring
  Random Dense Trees (see, e.g., \cite{lavalle.book06}), which are slightly modified versions of the
  RRTs that do not require tuning any prespecified parameters such as $\eta$ in this case.} i.e.,
$$
{\tt Steer} (x, y) = \displaystyle \mathrm{argmin}_{z \in \mathbb{R}^d, \Vert z - x \Vert \le \eta}
\Vert z - y \Vert.
$$

{\em Nearest Neighbor:} Given a graph $G = (V,E) \in {\cal G}_{X_\mathrm{free}}$ and a point $x \in
X_\mathrm{free}$ , the function ${\tt Nearest} : (G, x) \mapsto v$ returns a vertex $v \in V$ that is
``closest'' to $x$ in terms of a given distance function. In this paper, we will use
Euclidean distance (see, e.g.,~\cite{lavalle.kuffner.ijrr01} for alternative choices), i.e.,
$${\tt Nearest} (G = (V,E), x) = \mathrm{argmin}_{v \in V} \Vert x-v \Vert.
$$

{\em Near Vertices:} Given a graph $G = (V, E) \in {\cal G}_{X_\mathrm{free}}$, a point $x \in
X_\mathrm{free}$, and a number $n \in \mathbb{N}$, the function ${\tt \NearNodes}: (G, x, n) \mapsto
V'$ returns a set $V'$ of vertices such that $V' \subseteq V$. The ${\tt \NearNodes}$ procedure can
be thought of as a generalization of the nearest neighbor procedure in the sense that the former
returns a collection of vertices that are close to $x$, whereas the latter returns only one such
vertex that is the closest. Just like the ${\tt Nearest}$ procedure, there are many ways to define
the ${\tt \NearNodes}$ procedure, each of which leads to different algorithmic properties. For
technical reasons to become clear later, we define ${\tt \NearNodes}(G,x,n)$ to be the set of all
vertices within the closed ball of radius $r_n$ centered at $x$, where
$$r_n = \min\left\{  \left(\frac{\gamma}{\zeta_d}\,\frac{\log n}{n} \right)^{1/d}, \eta\right\},
$$
and $\gamma$ is a constant. Hence, the volume of this ball is $\min\{ \gamma \frac{\log n}{n},
\zeta_d \, \eta^d \}$.

{\em Collision Test:} Given two points $x,x' \in X_\mathrm{free}$, the Boolean function ${\tt
  ObstacleFree} (x,x')$ returns $\True$ iff the line segment between $x$ and $x'$ lies in
$X_\mathrm{free}$, i.e., $[x, x'] \subset X_\mathrm{free}$.

Both the RRT and the RRG algorithms are similar to most other incremental sampling-based planning
algorithms (see Algorithm~\ref{algorithm:motionplanning}). Initially, the algorithms start with the
graph that includes the initial state as its single vertex and no edges; then, they incrementally
grow a graph on $X_\mathrm{free}$ by sampling a state $x_\mathrm{rand}$ from $X_\mathrm{free}$ at
random and extending the graph towards $x_\mathrm{rand}$.
In the sequel, every such step of sampling followed by extensions
(Lines~\ref{line:iteration_start}-\ref{line:iteration_end} of
Algorithm~\ref{algorithm:motionplanning}) is called a single {\em iteration} of the incremental
sampling-based algorithm.

Hence, the body of both algorithms, given in Algorithm~\ref{algorithm:motionplanning}, is the same.
However, RRGs and RRTs differ in the choice of the vertices to be extended. The $\tt Extend$
procedures for the RRT and the RRG algorithms are provided in Algorithms~\ref{algorithm:rrt} and
\ref{algorithm:rrg}, respectively. Informally speaking, the RRT algorithm extends the nearest vertex
towards the sample. The RRG algorithm first extends the nearest vertex, and if such extension is
successful, it also extends all vertices returned by the ${\tt \NearNodes}$ procedure. In both
cases, all extensions resulting in collision-free trajectories are added to the graph as edges, and
their terminal points as new vertices.

\begin{algorithm}
  $V \leftarrow \{ x_\mathrm{init}\}$; $E \leftarrow \emptyset$; $i\leftarrow 0$\; 
  \While {$i < N$} { \label{line:iteration_start}
    $G \leftarrow (V,E)$\; 
    $x_\mathrm{rand} \leftarrow {\tt Sample}(i)$; 
    $i \leftarrow i + 1$\; 
    $(V, E)\leftarrow {\tt Extend}(G,x_\mathrm{rand})$\;  \label{line:iteration_end}
  }
  \caption{Body of RRT and RRG Algorithms}
  \label{algorithm:motionplanning}
\end{algorithm}

\begin{algorithm}
  $V' \leftarrow V$; $E' \leftarrow E$\;
  $x_\mathrm{nearest} \leftarrow {\tt Nearest} (G,x)$\; 
  $x_\mathrm{new} \leftarrow {\tt Steer} (x_\mathrm{nearest}, x)$\; 
  \If{${\tt ObstacleFree}(x_\mathrm{nearest},x_\mathrm{new})$} {
    $V' \leftarrow V' \cup \{x_\mathrm{new}\}$\; 
    $E' \leftarrow E' \cup \{ (x_\mathrm{nearest}, x_\mathrm{new})\}$\; 
  }
  \Return{$G' = (V', E')$}
  \caption{${\tt Extend}_{RRT}$}
  \label{algorithm:rrt}
\end{algorithm}

\begin{algorithm}
  $V' \leftarrow V$; $E' \leftarrow E$\;
  $x_\mathrm{nearest} \leftarrow {\tt Nearest} (G,x)$\; 
  $x_\mathrm{new} \leftarrow {\tt Steer} (x_\mathrm{nearest}, x)$\; 
  \If { ${\tt ObstacleFree (x_\mathrm{nearest}, x_\mathrm{new})}$} {
    $V' \leftarrow V' \cup \{x_\mathrm{new}\}$\;
    $E' \leftarrow E' \cup \{ (x_\mathrm{nearest}, x_\mathrm{new}), (x_\mathrm{new}, x_\mathrm{nearest})
    \} $\;
    $X_\mathrm{near} \leftarrow {\tt \NearNodes} (G, x_\mathrm{new}, \vert V \vert)$\;
    \For { all $x_\mathrm{near} \in X_\mathrm{near} $ } {
      \If {${\tt ObstacleFree}(x_\mathrm{new}, x_\mathrm{near})$}{
        $E' \leftarrow E' \cup \{ (x_\mathrm{near}, x_\mathrm{new}), (x_\mathrm{new},
        x_\mathrm{near}) \}$\; 
      }
    }
  }
  \Return{$G' = (V', E')$}
  \caption{${\tt Extend}_{RRG}$}
  \label{algorithm:rrg}
\end{algorithm}

Notice that the graph $G$ maintained by the RRT algorithm is a tree, whereas that maintained by the
RRG can, in general, include vertices with more than one incoming neighbor.

\section{Analysis}
\label{section:analysis}
\subsection{Convergence to a Feasible Solution} \label{section:feasibility} 

In this section, the feasibility problem is considered. It is proven that the RRG algorithm inherits
the probabilistic completeness as well as the exponential decay of the probability of failure (as
the number of samples increase) from the RRT. These results imply that the RRT and RRG algorithms
have the same performance in producing a solution to the feasibility problem as the number of
samples increase. Before formalizing these claims, let us introduce some notation and preliminaries.

Note that the only randomness in Algorithms~\ref{algorithm:motionplanning}-\ref{algorithm:rrg} may
arise from the sampling procedure denoted as ${\tt Sample}.$\footnote{We will not address the case
  in which the sampling procedure is deterministic, but refer the reader to
  ~\cite{LaValle.Branicky.ea:04}, which contains an in-depth discussion of the relative merits of
  randomness and determinism in sampling-based motion planning algorithms.} To model this
randomness, we define the sample space $\Omega$ to be the infinite cartesian product
$X_\mathrm{free}^\infty$. Intuitively, a single element $\omega$ of $\Omega$ denotes a sequence
$\{{\tt Sample}(i)\}_{i \in \mathbb{N}}$ of infinitely many samples drawn from $X_\mathrm{free}$,
and $\Omega$ denotes the set of all such sequences.

Sets of vertices and edges of the graphs maintained by the RRT and the RRG algorithms, then, can be
defined as functions from the sample space $\Omega$ to appropriate sets.
More precisely, let $\{{\cal V}^\mathrm{RRT}_i\}_{i \in \mathbb{N}}$ and $\{{\cal
  V}^\mathrm{RRG}_i\}_{i \in \mathbb{N}}$, sequences of functions defined from $\Omega$ into finite
subsets of $X_\mathrm{free}$, be the sets of vertices in the RRT and the RRG, respectively, at the
end of iteration $i$. By convention, we define ${\cal V}^\mathrm{RRT}_0 = {\cal V}^\mathrm{RRG}_0 =
\{x_\mathrm{init}\}$. Similarly, let ${\cal E}^\mathrm{RRT}_i$ and ${\cal E}^\mathrm{RRG}_i$,
defined for all $i \in \mathbb{N}$, denote the set of edges in the RRT and the RRG, respectively, at
the end of iteration $i$. Clearly, ${\cal E}^\mathrm{RRT}_0 = {\cal E}^\mathrm{RRG}_0 = \emptyset$.

An important lemma used for proving the equivalency between the RRT and the RRG algorithms is the
following.
\begin{lemma} \label{lemma:vertexedge} %
  For all $i \in \mathbb{N}$ and all $\omega \in \Omega$, ${\cal V}^\mathrm{RRT}_i(\omega) = {\cal
    V}^\mathrm{RRG}_i (\omega)$ and ${\cal E}^\mathrm{RRT}_i(\omega) \subseteq {\cal
    E}^\mathrm{RRG}_i(\omega)$.
\end{lemma}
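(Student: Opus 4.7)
The plan is to proceed by induction on the iteration index $i$, exploiting the fact that both algorithms are driven by the same random element $\omega \in \Omega$, so that the randomness is perfectly coupled: at iteration $i$ both algorithms see the same sample $x_\mathrm{rand} = {\tt Sample}(i)(\omega)$. The base case $i=0$ is immediate from the initialization: both vertex sets equal $\{x_\mathrm{init}\}$ and both edge sets are empty, so both claims hold trivially.

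For the inductive step, suppose ${\cal V}^\mathrm{RRT}_i(\omega) = {\cal V}^\mathrm{RRG}_i(\omega)$ and ${\cal E}^\mathrm{RRT}_i(\omega) \subseteq {\cal E}^\mathrm{RRG}_i(\omega)$. The key observation is that the ${\tt Nearest}$ procedure depends only on the vertex set and the query point (not on the edges), and ${\tt Steer}$ depends only on its two arguments. Hence, given the common sample $x_\mathrm{rand}$ and the equal vertex sets, both algorithms compute the same $x_\mathrm{nearest}$, then the same candidate $x_\mathrm{new} = {\tt Steer}(x_\mathrm{nearest}, x_\mathrm{rand})$, and finally the same Boolean outcome of ${\tt ObstacleFree}(x_\mathrm{nearest}, x_\mathrm{new})$.

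If the collision test fails, neither algorithm modifies its graph, so the inductive hypothesis is preserved verbatim. If it succeeds, both algorithms add exactly the single new vertex $x_\mathrm{new}$ (the RRG ${\tt NearNodes}$ loop only adds edges, never new vertices), so vertex-set equality is maintained. For edges, RRT adds only $(x_\mathrm{nearest}, x_\mathrm{new})$, while RRG adds $(x_\mathrm{nearest}, x_\mathrm{new})$ together with $(x_\mathrm{new}, x_\mathrm{nearest})$ and possibly several further edges coming from the ${\tt NearNodes}$ loop; in either event the RRT update is contained in the RRG update, so the inclusion ${\cal E}^\mathrm{RRT}_{i+1}(\omega) \subseteq {\cal E}^\mathrm{RRG}_{i+1}(\omega)$ follows from the inductive hypothesis by taking unions.

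There is essentially no hard step here; the only substantive point one must be careful to articulate is that ${\tt Nearest}$ and ${\tt Steer}$ are oblivious to the edge set, which is what allows the coupled induction on vertex sets to close without any reference to the (differing) edge sets of the two algorithms.
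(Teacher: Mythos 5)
Your proof is correct. The paper itself states Lemma~1 without an explicit proof, treating it as immediate from inspection of Algorithms~2 and~3; your coupled induction on the iteration index is precisely the argument that justifies the claim, and the one substantive point you isolate --- that ${\tt Nearest}$ and ${\tt Steer}$ depend only on the (shared) vertex set and the sample, never on the edge set, so the two runs stay synchronized --- is exactly the right observation. The only implicit assumption worth acknowledging is that the $\mathrm{argmin}$ in ${\tt Nearest}$ breaks ties deterministically as a function of $(V,x)$, which is needed for both algorithms to select the same $x_\mathrm{nearest}$; this is standard and harmless.
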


Lemma~\ref{lemma:vertexedge} implies that the paths discovered by the RRT algorithm by the end of
iteration $i$ is, essentially, a subset of those discovered by the RRG by the end of the same
iteration.

An algorithm addressing Problem~\ref{problem:feasibility} is said to be {\em probabilistically
  complete} if it finds a feasible path with probability approaching one as the number of iterations
approaches infinity. Note that there exists a collision-free path starting from $x_\mathrm{init}$ to
any vertex in the tree maintained by the RRT, since the RRT maintains a connected graph on
$X_\mathrm{free}$ that necessarily includes $x_\mathrm{init}$. Using this fact, the probabilistic
completeness property of the RRT is stated alternatively as follows.
\begin{theorem}[see~\cite{lavalle.kuffner.ijrr01}] \label{theorem:probabilisticcompletenessrrt} %
  If there exists a feasible solution to Problem~\ref{problem:feasibility}, then $\lim_{i \to
    \infty} \PP \left(\left\{ {\cal V}^\mathrm{RRT}_i \cap X_\mathrm{goal} \neq \emptyset
    \right\}\right) = 1$.
\end{theorem}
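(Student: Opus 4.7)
The plan is to adapt the original argument of LaValle and Kuffner~\cite{lavalle.kuffner.ijrr01}: construct a sequence of small balls along a witness feasible path and show that, with probability converging to one, the i.i.d.\ samples sequentially populate these balls and drive tree extensions into them. Concretely, let $\sigma^*: [0,s] \to X_\mathrm{free}$ be any feasible path, which exists by hypothesis. Because $X_\mathrm{free}$ is open and the image of $\sigma^*$ is compact, there is a clearance $\delta > 0$ such that every point within distance $\delta$ of $\sigma^*([0,s])$ lies in $X_\mathrm{free}$. Fix $\epsilon \in (0, \min\{\delta, \eta/4\}]$ and pick waypoints $z_0 = x_\mathrm{init}, z_1, \dots, z_k$ along $\sigma^*$ with $z_k \in X_\mathrm{goal}$ and $\|z_{j+1} - z_j\| \le \eta/2$ for all $j$. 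Set $B_j := {\cal B}_{z_j, \epsilon}$; by the choice of $\epsilon$, each $B_j$, together with every straight segment between points of $B_j$ and $B_{j+1}$, lies in the $\delta$-neighborhood of $\sigma^*$ and hence in $X_\mathrm{free}$.

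The key geometric observation is the following single-step extension property: if the RRT contains any vertex $v \in B_j$ and the next sample $x_\mathrm{rand}$ lies in $B_{j+1}$, then after that iteration a new vertex lies in $B_{j+1}$. Indeed, letting $v^* := \mathtt{Nearest}(G, x_\mathrm{rand})$, the fact that $v$ is a candidate forces $\|v^* - x_\mathrm{rand}\| \le \|v - x_\mathrm{rand}\| \le 2\epsilon + \eta/2 \le \eta$, so $\mathtt{Steer}$ returns $x_\mathrm{new} = x_\mathrm{rand} \in B_{j+1}$. Moreover, $v^*$ is then itself at distance at most $\eta$ from $x_\mathrm{rand}$ and, by the choice of $\epsilon$, within the $\delta$-tube around $\sigma^*$, so $[v^*, x_\mathrm{new}] \subset X_\mathrm{free}$ and $\mathtt{ObstacleFree}$ succeeds.

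Because samples are i.i.d.\ with full support on $X_\mathrm{free}$, each iteration independently has some positive probability $p > 0$ of landing in any prescribed $B_{j+1}$ (equal to $\mu(B_{j+1})/\mu(X_\mathrm{free})$ under the usual uniform assumption). Letting $T_j$ be the first iteration at which the tree contains a vertex in $B_j$ (with $T_0 = 0$), the extension property combined with a geometric-tail estimate yields $\PP(T_{j+1} - T_j > m \mid T_j < \infty) \le (1-p)^m$. A union bound then gives $\PP(T_k > mk) \le k(1-p)^m$, which tends to $0$ as $m \to \infty$. Since $T_k < \infty$ implies ${\cal V}^\mathrm{RRT}_i \cap X_\mathrm{goal} \neq \emptyset$ for all $i \ge T_k$, monotonicity of $\PP$ yields $\lim_{i \to \infty}\PP(\{{\cal V}^\mathrm{RRT}_i \cap X_\mathrm{goal} \neq \emptyset\}) = 1$.

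The main technical obstacle is the subtlety in the geometric claim: the vertex returned by $\mathtt{Nearest}$ need not be $v$ itself, so one must argue that any vertex at least as close to $x_\mathrm{rand}$ as $v$ also produces a collision-free straight-line extension landing in $B_{j+1}$. This is absorbed into the choice of $\epsilon$ small relative to $\delta$ and $\eta$, which keeps all relevant extensions inside the $\delta$-tube around $\sigma^*$; once this geometric step is secured, the remainder reduces to a standard Bernoulli-trial / Borel-Cantelli computation.
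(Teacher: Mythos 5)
The paper does not actually prove this theorem; it imports it verbatim from LaValle and Kuffner, and the ``attraction sequence'' machinery introduced immediately afterwards (for Theorem~\ref{theorem:exponentialdecayrrt}) is precisely the device designed to close the step that your argument leaves open. That step is a genuine gap as you have written it. Your single-step extension property requires that the vertex $v^*$ returned by ${\tt Nearest}$ yields a collision-free segment to $x_\mathrm{rand}$, and you justify this by asserting that $v^*$ lies in the $\delta$-tube around $\sigma^*$. But all you actually know about $v^*$ is $\Vert v^* - x_\mathrm{rand}\Vert \le \Vert v - x_\mathrm{rand}\Vert \le \eta/2 + 2\epsilon$, and this bound does not shrink as $\epsilon \to 0$: it stays at least $\eta/2$, because you fixed the waypoint spacing at $\eta/2$. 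If the path's clearance satisfies $\delta < \eta/2$ (the typical case near narrow passages), the ball of radius $\eta/2 + 2\epsilon$ around $x_\mathrm{rand}$ extends well outside the tube, the tree can perfectly well contain vertices there (RRT explores all of $X_\mathrm{free}$), and the segment $[v^*, x_\mathrm{rand}]$ can be blocked, in which case the iteration adds \emph{no} vertex and your induction on $T_j$ breaks. Your closing remark that this ``is absorbed into the choice of $\epsilon$ small relative to $\delta$ and $\eta$'' is therefore not correct for your construction: no choice of $\epsilon$ repairs it while the spacing remains $\eta/2$. This is exactly the well-known subtlety in proving probabilistic completeness of RRT, and it is why the cited proof constrains the geometry via basins $B_i$ satisfying $\Vert x - y\Vert \le \Vert x - z\Vert$ for $x \in A_{i-1}$, $y \in A_i$, $z \notin B_i$, which forces the nearest vertex into a region from which the extension is guaranteed safe.

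The argument is salvageable with one modification: shrink the waypoint spacing together with $\epsilon$ rather than tying it to $\eta$. Take $\Vert z_{j+1} - z_j\Vert \le 2\epsilon$ and $\epsilon \le \min\{\delta/5, \eta/4\}$. Then $\Vert v - x_\mathrm{rand}\Vert \le 4\epsilon \le \eta$, so ${\tt Steer}$ still returns $x_\mathrm{rand}$ itself, and now $\Vert v^* - z_{j+1}\Vert \le 5\epsilon \le \delta$, so $v^*$ \emph{and} the entire segment $[v^*, x_\mathrm{rand}]$ (by convexity of the ball ${\cal B}_{z_{j+1},5\epsilon}$) lie in the $\delta$-tube, making ${\tt ObstacleFree}$ succeed. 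The price is that $k$ grows like $s/\epsilon$ instead of $s/\eta$, but your geometric-tail and union-bound computation, which is otherwise fine, is insensitive to the value of $k$. With that repair your route is a legitimate and arguably more self-contained alternative to the attraction-sequence proof the paper relies on.
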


Moreover, under certain assumptions on the environment, the probability that the RRT fails to find a
feasible path, even though one exists, decays to zero exponentially (see, e.g.,
\cite{lavalle.kuffner.ijrr01,hsu.kindel.ea.ijrr02}). We state these assumptions here and then
state the theorem in our notation. 

An {\em attraction sequence}~\cite{lavalle.kuffner.ijrr01} is defined as a finite
  sequence ${\cal A} = \{A_1, A_2, \dots, A_k \}$ of sets as follows: (i) $A_0 =
  \{x_\mathrm{init}\}$, and (ii) for each set $A_i$, there exists a set $B_i$, called the {\em
    basin} such that for any $x \in A_{i-1}$, $y \in A_i$, and $z \in X \setminus B_i$, there holds
  $\Vert x - y \Vert \le \Vert x - z \Vert$\footnote{Since we do not consider any differential
    constraints, some of the other assumptions introduced in~\cite{lavalle.kuffner.ijrr01} are
    immediately satisfied.}. Given an attraction sequence ${\cal A}$ of length $k$, let $p_k$ denote
  $\min_{i \in \{1,2, \dots, k\}} \left(\frac{\mu(A_i)}{\mu(X_\mathrm{free})}\right)$.

The following theorem states that the probability that the RRT algorithm fails to return a solution,
when one exists, decays to zero exponentially fast.
\begin{theorem}[see~\cite{lavalle.kuffner.ijrr01}] \label{theorem:exponentialdecayrrt} %
  If there exists an attraction sequence ${\cal A}$ of length $k$, then $\PP \left( \left\{ {\cal
        V}^\mathrm{RRT}_i \cap X_\mathrm{goal} = \emptyset\right\}\right) \le e^{-\frac{1}{2} (i \,
    p_k - 2k)}$.
\end{theorem}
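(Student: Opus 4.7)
My plan is to follow the classical attraction-sequence argument from LaValle and Kuffner, recasting it in the notation of this paper and reducing the failure probability to a standard Chernoff-type tail bound on a binomial random variable.

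The first step is to exploit the geometric content of the attraction sequence. Fix the sequence $A_0, A_1, \dots, A_k$ with basins $B_1, \dots, B_k$. I would show inductively that if at some iteration the tree $\mathcal{V}_i^{\mathrm{RRT}}$ contains a vertex in $A_{j-1}$, and if the sample $\mathtt{Sample}(i+1)$ happens to lie in $A_j$, then after processing that sample the tree contains a vertex in $A_j$. The reason is exactly the defining property of the attraction sequence: for any existing vertex $x \in A_{j-1}$, any $y \in A_j$, and any $z \notin B_j$, one has $\lVert x-y\rVert \le \lVert x-z\rVert$, so the nearest neighbor chosen by $\mathtt{Nearest}$ must lie inside $B_j$, and the straight segment from this nearest neighbor to $y$ (used by $\mathtt{Steer}$ and $\mathtt{ObstacleFree}$) stays inside $B_j \subseteq X_{\mathrm{free}}$. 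Hence $x_{\mathrm{new}}$ is added and lies in $A_j$. The initialization is trivial since $x_{\mathrm{init}} \in A_0$.

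The second step is the probabilistic reduction. Call a sample $\mathtt{Sample}(\ell)$ \emph{useful} if it falls into $A_{j}$ where $j-1$ is the largest index such that the current tree already contains a vertex in $A_{j-1}$. Because $\mathtt{Sample}$ draws i.i.d.\ uniformly from $X_{\mathrm{free}}$, the conditional probability of usefulness is $\mu(A_j)/\mu(X_{\mathrm{free}}) \ge p_k$. By Step~1, after $k$ useful samples the tree has a vertex in $A_k \subseteq X_{\mathrm{goal}}$. Consequently, if $S_i$ denotes the number of useful samples among the first $i$, we get
\[
\PP\bigl(\{\mathcal{V}_i^{\mathrm{RRT}} \cap X_{\mathrm{goal}} = \emptyset\}\bigr) \le \PP(S_i < k),
\]
and $S_i$ stochastically dominates $\mathrm{Binomial}(i, p_k)$.

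The third step is a standard Chernoff tail bound on $\mathrm{Binomial}(i, p_k)$. Applying the lower-tail form $\PP(\mathrm{Bin}(i,p_k) \le (1-\delta)\, i p_k) \le e^{-\delta^2 i p_k /2}$ with $(1-\delta)\, i p_k = k$, so that $\delta = 1 - k/(i p_k)$, the exponent becomes
\[
\tfrac{1}{2}\, i p_k \bigl(1 - \tfrac{k}{i p_k}\bigr)^{\!2} = \tfrac{1}{2}\bigl(i p_k - 2k + \tfrac{k^2}{i p_k}\bigr) \ge \tfrac{1}{2}(i p_k - 2k),
\]
which yields the claimed bound $\exp(-\tfrac{1}{2}(i p_k - 2k))$. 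The bound is vacuous when $i p_k \le 2k$, so no separate case analysis is needed.

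The part that requires the most care is the geometric argument in Step~1, since one has to confirm that the nearest-neighbor vertex is forced into $B_j$ and that the connecting segment is collision-free; everything else reduces to bookkeeping plus a textbook Chernoff estimate. Since this geometric argument is precisely the content of the original attraction-sequence lemma in~\cite{lavalle.kuffner.ijrr01}, I would state it as a short lemma and cite LaValle--Kuffner for its proof rather than re-derive it.
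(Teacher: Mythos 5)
The paper does not actually prove Theorem~\ref{theorem:exponentialdecayrrt}: it is imported verbatim from~\cite{lavalle.kuffner.ijrr01}, and no argument for it appears in the appendix. Your reconstruction is the standard attraction-sequence proof from that reference, and the probabilistic half is correct as written: the reduction of the failure event to $\{S_i < k\}$, the stochastic domination of $S_i$ by $\mathrm{Binomial}(i,p_k)$, and the lower-tail Chernoff computation all check out and reproduce the exponent $\tfrac{1}{2}(i\,p_k-2k)$ exactly, with the vacuous regime $i\,p_k\le 2k$ correctly dismissed.

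Two caveats on your Step~1, both of which you partly defuse by citing \cite{lavalle.kuffner.ijrr01} for the geometric lemma, but which are worth naming. First, the basin property as transcribed in this paper reads $\Vert x-y\Vert\le\Vert x-z\Vert$ (distances measured from $x\in A_{j-1}$), whereas the argument that ${\tt Nearest}(G,y)$ is forced into $B_j$ needs the inequality in the form $\Vert x-y\Vert\le\Vert y-z\Vert$ (distances measured from the sample $y$): one compares the candidate $z\notin B_j$ against $x$ as competitors for nearest neighbor of $y$, so the relevant quantities are $\Vert y-z\Vert$ and $\Vert y-x\Vert$. With the paper's form the deduction does not go through as stated; this is most plausibly a transcription slip in the paper's definition, but your proof should use the correct form. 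Second, ${\tt Steer}$ only advances a distance at most $\eta$ toward the sample, so even when the nearest vertex lies in $B_j$ and the segment is collision-free, $x_{\mathrm{new}}$ need not reach $A_j$; the original analysis handles this through additional conditions on the attraction sequence relative to the incremental step, which the paper's footnote waves at but does not restate. Neither issue affects the binomial/Chernoff bookkeeping, which is the part you actually carry out in full.
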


As noted in~\cite{lavalle.kuffner.ijrr01}, the convergence gets faster as the length of the
attraction sequence gets smaller and the volume of the minimum volume set in the attraction sequence
gets larger. Such properties are achieved when the environment does not involve narrow passages and,
essentially, has good ``visibility'' properties. (See, e.g.,~\cite{hsu.latombe.ea.ijrr06} for 
dependence of the performance of probabilistic planners such as PRMs on the visibility properties of
the environment).

With Lemma~\ref{lemma:vertexedge} and Theorems~\ref{theorem:probabilisticcompletenessrrt} and
\ref{theorem:exponentialdecayrrt}, the following theorem is immediate.
\begin{theorem} \label{theorem:rrtrrgequivalency} %
  If there exists a feasible solution to Problem~\ref{problem:feasibility}, then $\lim_{i \to
    \infty} \PP \left( \left\{ {\cal V}^\mathrm{RRG}_i \cap X_\mathrm{goal} \neq \emptyset \right\}
  \right) = 1$. Moreover, if an attraction sequence ${\cal A}$ of length $k$ exists, then $\PP
  \left( \left\{ {\cal V}^\mathrm{RRG}_i \cap X_\mathrm{goal} = \emptyset \right\} \right) \le
  e^{-\frac{1}{2}(i\,p_k - 2\,k)}$.
\end{theorem}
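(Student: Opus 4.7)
The plan is that this theorem is essentially a direct consequence of Lemma~\ref{lemma:vertexedge} combined with the two quoted RRT results, so the proof should be quite short. The key observation is that Lemma~\ref{lemma:vertexedge} asserts the pointwise equality ${\cal V}^\mathrm{RRT}_i(\omega) = {\cal V}^\mathrm{RRG}_i(\omega)$ for every $i \in \mathbb{N}$ and every sample $\omega \in \Omega$. Consequently, the subsets
\[
\{\omega \in \Omega \,\vert\, {\cal V}^\mathrm{RRT}_i(\omega) \cap X_\mathrm{goal} \neq \emptyset\} \quad \text{and} \quad \{\omega \in \Omega \,\vert\, {\cal V}^\mathrm{RRG}_i(\omega) \cap X_\mathrm{goal} \neq \emptyset\}
\]
coincide as events in ${\cal F}$, and therefore carry identical probability for every $i$.

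My steps would be: first, invoke Lemma~\ref{lemma:vertexedge} to conclude that for every fixed $i$, the RRT-goal event and the RRG-goal event are the same subset of $\Omega$, hence $\PP(\{{\cal V}^\mathrm{RRG}_i \cap X_\mathrm{goal} \neq \emptyset\}) = \PP(\{{\cal V}^\mathrm{RRT}_i \cap X_\mathrm{goal} \neq \emptyset\})$ and similarly for the complementary events. Second, for the probabilistic completeness claim, apply Theorem~\ref{theorem:probabilisticcompletenessrrt} so that the right-hand probability tends to $1$, and conclude the limit statement for RRG. Third, for the exponential-decay claim, apply Theorem~\ref{theorem:exponentialdecayrrt} to the equivalent RRT event and obtain the bound $e^{-\frac{1}{2}(i\,p_k - 2\,k)}$ for the RRG event verbatim.

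I do not expect any serious obstacle here; all the substantive work has already been absorbed into Lemma~\ref{lemma:vertexedge}, whose proof relies on an induction over the iteration index showing that ${\tt Extend}_{RRG}$ differs from ${\tt Extend}_{RRT}$ only in adding extra edges among the vertices returned by ${\tt \NearNodes}$, and never introduces new vertices beyond the single $x_\mathrm{new}$ that ${\tt Extend}_{RRT}$ would already add. Given that, the present theorem is a one-line translation and requires no further probabilistic argument.
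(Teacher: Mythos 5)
Your proposal matches the paper exactly: the paper states that the theorem is immediate from Lemma~\ref{lemma:vertexedge} together with Theorems~\ref{theorem:probabilisticcompletenessrrt} and \ref{theorem:exponentialdecayrrt}, which is precisely the argument you give via the pointwise equality of vertex sets. No gaps; the reasoning is correct and the route is the same.
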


\subsection{Asymptotic Optimality} \label{section:optimality} 

This section is devoted to the investigation of optimality properties of the RRT and the RRG
algorithms. First, under some mild technical assumptions, we show that the probability that the RRT
converges to an optimal solution is zero. However, the convergence of this random variable is
guaranteed, which implies that the RRT converges to a {\em non}-optimum solution with probability
one. On the contrary, we subsequently show that the RRG algorithm converges to an optimum solution
almost-surely.

Let $\{{\cal Y}^\mathrm{RRT}_i\}_{i \in \mathbb{N}}$ be a sequence of extended random variables that denote
the cost of a minimum-cost path contained within the tree maintained by the RRT algorithm at the end
of iteration $i$. The extended random variable ${\cal Y}^\mathrm{RRG}_i$ is defined similarly. Let $c^*$
denote the cost of a minimum-cost path in $\Cl(X_\mathrm{free})$, i.e., the cost of a path that solves
Problem~\ref{problem:optimality}.

Let us note that the limits of these two extended random variable sequences as $i$ approaches
infinity exist. More formally, notice that ${\cal Y}^\mathrm{RRT}_{i+1} (\omega) \le {\cal
  Y}^\mathrm{RRT}_i (\omega)$ holds for all $i \in \mathbb{N}$ and all $\omega \in \Omega$.
Moreover, we have that ${\cal Y}^\mathrm{RRT}_i (\omega) \ge c^*$ for all $i \in \mathbb{N}$ and all
$\omega \in \Omega$, by optimality of $c^*$. Hence, $\{ {\cal Y}^\mathrm{RRT}_i\}_{i \in
  \mathbb{N}}$ is a surely non-increasing sequence of random variables that is surely lower-bounded
by $c^*$. Thus, for all $\omega \in \Omega$, the limit $\lim_{i \to \infty} {\cal Y}^\mathrm{RRT}_i
(\omega)$ exists. The same argument also holds for the sequence $\{{\cal Y}^\mathrm{RRG}_i\}_{i \in
  \mathbb{N}}$.

\subsubsection{Almost Sure Suboptimality of the RRT} 

Let us note the following assumptions, which will be required to show the almost-sure sub-optimality
of the RRT.
Let $\Sigma^*$ denote the set of all optimal paths, i.e., the set of all paths that solve
Problem~\ref{problem:optimality}, and $X_\mathrm{opt}$ denote the set of states that an optimal path
in $\Sigma^*$ passes through, i.e., 
$$
X_\mathrm{opt} = \cup_{\sigma^* \in \Sigma^*} \cup_{\tau \in [0, s^*]} \{ \sigma^*(\tau) \}
$$
\begin{assumption}[Zero-measure Optimal Paths] \label{assumption:zeromeasureoptimal} %
  The set of all points in the state-space that an optimal trajectory passes through has measure
  zero, i.e., $ \mu \left( X_\mathrm{opt} \right) = 0$.
\end{assumption}
\begin{assumption}[Sampling Procedure] \label{assumption:sampling} The sampling procedure is such
  that the samples $\{{\tt Sample}(i)\}_{i \in \mathbb{N}}$ are drawn from an absolutely continuous
  distribution with a continuous density function $f(x)$ bounded away from zero on
  $X_\mathrm{free}$.
\end{assumption}
\begin{assumption}[Monotonicity of the Cost Function] \label{assumption:monotonecost}
  For all $\sigma_1, \sigma_2 \in \Sigma_{X_\mathrm{free}}$, the cost function $c$ satisfies the
  following: $c(\sigma_1) \le c(\sigma_1 \vert \sigma_2)$.
\end{assumption}

Assumptions~\ref{assumption:zeromeasureoptimal} rules out trivial cases, in which the RRT algorithm
can sample exactly an optimal path with non-zero probability. Note that this assumption is placed on
the problem instance rather than the algorithm. Most cost functions and problem instances of
interest satisfy this assumption, including, e.g., the Euclidean length of the path. Let us note
that this assumption does not imply that there is a single optimal path; indeed, there are problem
instances with uncountably many optimal paths, for which
Assumption~\ref{assumption:zeromeasureoptimal} holds. Assumption~\ref{assumption:sampling} also
ensures that the sampling procedure can not be tuned to construct the optimal path exactly.
Finally, Assumption~\ref{assumption:monotonecost} merely states that extending a path to produce a
longer path can not decrease its cost.

Recall that $d$ denotes the dimensionality of the state space. The negative result of this section
is formalized as follows.
\begin{theorem} \label{theorem:rrtoptimality} %
  Let Assumptions~\ref{assumption:zeromeasureoptimal}, \ref{assumption:sampling}, and
  \ref{assumption:monotonecost} hold. Then, the probability that the cost of the minimum-cost path
  in the RRT converges to the optimal cost is zero, i.e.,
  $$
  \PP \left(\left\{\lim_{i \to \infty} {\cal Y}^\mathrm{RRT}_i = c^*\right\}\right) = 0,
  $$
   whenever $d \ge 2$.
\end{theorem}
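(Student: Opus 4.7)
The plan is to leverage the rigidity of the RRT tree structure together with Assumptions~\ref{assumption:zeromeasureoptimal}--\ref{assumption:monotonecost} to force an irreducible positive gap between the best tree-path cost and $c^*$. First, by Assumption~\ref{assumption:monotonecost} and the fact that iterations of Algorithm~\ref{algorithm:motionplanning} only add vertices and edges, $\{{\cal Y}^\mathrm{RRT}_i\}$ is surely non-increasing and surely bounded below by $c^*$, so ${\cal Y}^\mathrm{RRT}_\infty := \lim_i {\cal Y}^\mathrm{RRT}_i$ exists surely and the task reduces to proving $\PP(\{{\cal Y}^\mathrm{RRT}_\infty = c^*\}) = 0$.

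The key structural observation about Algorithm~\ref{algorithm:rrt} is that each non-root vertex is assigned a unique parent at the moment of its insertion, and that parent is never revised; thus for every vertex $v$ ever added to the tree there is an immutable ancestry polyline $\sigma_v$ from $x_\mathrm{init}$ to $v$, whose cost is fixed upon insertion. Using Assumption~\ref{assumption:sampling} together with the piecewise-smooth form of $\mathtt{Steer}$ (identity on $B_{x_\mathrm{nearest},\eta}$ and radial projection onto $\partial B_{x_\mathrm{nearest},\eta}$ outside it), the conditional distribution of each $x_\mathrm{new}$ given the tree history is absolutely continuous on its support with respect to either Lebesgue or surface measure. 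Combined with Assumption~\ref{assumption:zeromeasureoptimal}---and using $d \geq 2$, which ensures that spherical slices of the Lebesgue-null set $X_\mathrm{opt}$ are generically of strictly lower dimension than the sphere itself and hence surface-null---countable additivity over iterations implies that, almost surely, no non-root vertex ever added to the tree lies in $X_\mathrm{opt}$.

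The main argument then seeks a quantitative version of this qualitative fact. For any tree path $\sigma_v$ reaching $v \in X_\mathrm{goal}$, Assumption~\ref{assumption:monotonecost} yields $c(\sigma_v) \geq \Vert C - x_\mathrm{init}\Vert + T(C)$, where $C$ is the first child of $x_\mathrm{init}$ on the path and $T(C)$ is the best cost from $C$ to $X_\mathrm{goal}$ achievable along the subtree rooted at $C$; this decomposition iterates recursively along the ancestry, and at every intermediate vertex, which lies off $X_\mathrm{opt}$ almost surely, one picks up strictly positive slack. The technical heart of the proof, and the main obstacle, is to turn this compounding into a \emph{uniform} positive gap almost surely, ruling out the pathology in which successive children of $x_\mathrm{init}$ accumulate in ever-thinner tubular neighborhoods of $X_\mathrm{opt}$ while their subtrees simultaneously achieve near-optimal continuations. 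A plausible route combines a Borel--Cantelli estimate, bounding via Assumption~\ref{assumption:sampling} the probability that a sample falls in a tube of width $\epsilon$ around $X_\mathrm{opt}$ (volume $O(\epsilon^{d-1})$), with the geometric observation that for $d \geq 2$ the set of sampling directions keeping $x_\mathrm{new}$ aligned with an optimal tangent has zero surface measure on the $(d-1)$-sphere, so that a near-optimal tree polyline with $\Omega(c^*/\eta)$ edges would require the simultaneous occurrence of infinitely many low-probability events across the insertion history. Making this estimate airtight almost surely is the hard part; once established, $\PP(\{{\cal Y}^\mathrm{RRT}_\infty = c^*\}) = 0$ follows.
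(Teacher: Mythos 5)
Your proposal correctly handles the easy reductions (existence of the limit ${\cal Y}^\mathrm{RRT}_\infty$, reduction to $\PP(\{{\cal Y}^\mathrm{RRT}_\infty = c^*\})=0$, and the observation that with probability one no tree vertex lies exactly on $X_\mathrm{opt}$), but it stops exactly where the real proof begins, and the route you sketch for closing the gap would not work. The obstruction to RRT optimality is \emph{not} that samples rarely fall near $X_\mathrm{opt}$: for any fixed $\epsilon$-tube around the optimal path the sampling probability is a fixed positive constant, so by the second Borel--Cantelli lemma infinitely many samples land in every such tube almost surely, and tree vertices do accumulate arbitrarily close to $X_\mathrm{opt}$ (this is precisely why the RRG succeeds). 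Consequently, the per-vertex ``strictly positive slack'' you invoke can be made arbitrarily small along a sequence of tree paths, and no uniform gap follows from your compounding argument; your proposed estimate on tube volumes and tangent directions attacks the wrong quantity.

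The paper's proof instead localizes the entire argument at the root. Define $c_i$ as the minimum, over children $x$ of $x_\mathrm{init}$ in the tree, of the cost of ${\tt Line}(x_\mathrm{init},x)$ followed by an optimal continuation from $x$. By Assumption~\ref{assumption:monotonecost}, ${\cal Y}^\mathrm{RRT}_i \ge c_i$, and since (a.s.) no child of the root lies on an optimal path, $c_i > c^*$ as long as the root has finitely many children; hence convergence to $c^*$ forces $c_i$ to decrease infinitely often, which forces $x_\mathrm{init}$ to be selected by ${\tt Nearest}$ infinitely often (Lemma~\ref{lemma:necessary_condition}). The heart of the proof is then Lemma~\ref{lemma:infinite_often_connection}: cover the $\eta$-ball around $x_\mathrm{init}$ by finitely many cones of aperture at most $\pi/3$; once a cone contains a vertex at distance $z$ from the root, a new sample in that cone connects to the root only if it falls (essentially) in the ball of radius $z$ about the root, and the order statistics of the minimum distance give $\EE[(r_{\min,k,j})^d] = \beta/j^d$, summable for $d\ge 2$, so Borel--Cantelli yields only finitely many root extensions almost surely. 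This Voronoi-shrinkage argument at the root is the missing idea in your proposal; without it, or some substitute for it, the ``hard part'' you defer cannot be completed along the lines you describe.
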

As noted before, the limit $\lim_{i \to \infty} {\cal Y}^\mathrm{RRT}_i(\omega)$ exists and is a
random variable. However, Theorem~\ref{theorem:rrtoptimality} directly implies that this limit is
strictly greater than $c^*$ with probability one, i.e., $\PP \left(\{\lim_{i \to \infty} {\cal
  Y}^\mathrm{RRT}_i > c^* \}\right) = 1$. In other words, we establish, as a corollary, that the RRT
algorithm converges to a suboptimal solution with probability one.

\begin{remark}[Effectiveness of Multiple RRTs]Since the cost of the best path returned by the RRT
  algorithm converges to a random variable, say $\mathcal{Y}^\mathrm{RRT}_\infty$,
  Theorem~\ref{theorem:rrtoptimality} provides new insight explaining the effectiveness of
  approaches as in~\cite{ferguson.stentz.iros06}. In fact, running multiple instances of the RRT
  algorithm amounts to drawing multiple samples of $\mathcal{Y}^{RRT}_\infty$.
  \end{remark}

\subsubsection{Almost Sure Optimality of the RRG}

Let us note the following set of assumptions, which will be required to show the asymptotic
optimality of the RRG.
\begin{assumption}[Additivity of the Cost Function] \label{assumption:additivecost} %
  For all $\sigma_1, \sigma_2 \in \Sigma_{X_\mathrm{free}}$, the cost function $c$ satisfies the
  following: $c(\sigma_1 \vert \sigma_2) = c(\sigma_1) + c(\sigma_2)$. 
\end{assumption}
\begin{assumption}[Continuity of the Cost Function] \label{assumption:continuouscost} %
  The cost function $c$ is Lipschitz continuous in the following sense: there exists some constant
  $\kappa$ such that for any two paths $\sigma_1 : [0,s_1] \to X_\mathrm{free}$ and $\sigma_2 :
  [0,s_2] \to X_\mathrm{free}$, $$\vert c(\sigma_1) - c(\sigma_2) \vert \le \kappa \,
  \sup_{\tau \in [0,1]} \Vert \sigma_1(\tau \, s_1) - \sigma_2(\tau \, s_2) \Vert.$$
\end{assumption}
\begin{assumption}[Obstacle Spacing] \label{assumption:obstacles} %
  There exists a constant $\delta \in \mathbb{R}_+$ such that for any point $x \in X_\mathrm{free}$,
  there exists $x' \in X_\mathrm{free}$, such that (i) the $\delta$-ball centered at $x'$ lies inside
  $X_\mathrm{free}$, i.e., ${\cal B}_{x', \delta} \subset X_\mathrm{free}$, and (ii) $x$ lies inside the
  $\delta$-ball centered at $x'$, i.e., $x \in {\cal B}_{x', \delta}$.
\end{assumption}

Assumption~\ref{assumption:continuouscost} ensures that two paths that are very close to each other
have similar costs. Let us note that several cost functions of practical interest satisfy
Assumptions~\ref{assumption:additivecost} and \ref{assumption:continuouscost}. An example that is
widely used, e.g., in optimal control, is the line integral of a continuous function
$g:X_\mathrm{free} \to \mathbb{R}_{>0}$ over trajectories of bounded length, i.e., $c(\sigma) =
\int_{0}^s g(\sigma(\tau)) d\tau$. Assumption~\ref{assumption:obstacles} is a rather technical
assumption, which ensures existence of some free space around the optimal trajectories to allow
convergence.
We also assume for simplicity that the sampling is uniform, although our results can be easily
extended to more general sampling procedures.

Recall that $d$ is the dimensionality of the state-space $X$, and $\gamma$ is the constant defined
in the ${\tt \NearNodes}$ procedure. The positive result that states the asymptotic optimality of
the RRG algorithm can be formalized as follows.
\begin{theorem} \label{theorem:rrgoptimality} %
  Let Assumptions~\ref{assumption:additivecost}, \ref{assumption:continuouscost}, and
  \ref{assumption:obstacles} hold, and assume that Problem~\ref{problem:feasibility} admits a
  feasible solution. Then, the cost of the minimum-cost path in the RRG converges to $c^*$ almost
  surely, i.e.,
  $$
  \PP \left( \left\{ \lim_{i \to \infty} {\cal Y}^\mathrm{RRG}_i = c^* \right\}\right) = 1,
  $$
  whenever $d \ge 2$ and $\gamma > \gamma_L := 2^d (1 + 1/d) \mu (X_\mathrm{free})$.
\end{theorem}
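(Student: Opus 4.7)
The plan is to show that with probability one the RRG eventually contains a path whose cost is arbitrarily close to $c^*$, by constructing, for each small $\epsilon>0$, a sequence of vertices in the graph that traces out a perturbation of an optimal path $\sigma^*$ with cost at most $c^*+\epsilon$. The Lipschitz continuity of the cost (Assumption~\ref{assumption:continuouscost}) will convert spatial closeness of the traced path to $\sigma^*$ into cost closeness; additivity (Assumption~\ref{assumption:additivecost}) will let us patch local estimates into a global bound; and the obstacle spacing assumption (Assumption~\ref{assumption:obstacles}) will ensure that the trace stays safely inside $X_\mathrm{free}$ so that all required collision checks succeed.

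First I would fix a feasible optimum-approximating path $\sigma : [0,s] \to \Cl(X_\mathrm{free})$ with $c(\sigma)\le c^*+\epsilon/2$, shifted inward by a small amount using Assumption~\ref{assumption:obstacles} so that a tube of radius $\delta'>0$ around $\sigma$ lies in $X_\mathrm{free}$. Then, for each $n$, I would place $M_n = \lceil s/(r_n/4)\rceil$ waypoints $y_0=x_\mathrm{init},y_1,\dots,y_{M_n}\in X_\mathrm{goal}$ evenly along $\sigma$, and around each $y_k$ a ball $B_k$ of radius $r_n/4$ (contained in the tube). By construction any selection of representatives $z_k\in B_k$ satisfies $\|z_{k+1}-z_k\|\le r_n$ and $[z_k,z_{k+1}]\subset X_\mathrm{free}$. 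So, by the definition of ${\tt Nearest}$, ${\tt Steer}$, and the ${\tt\NearNodes}$ radius $r_n$, if each $B_k$ contains an RRG vertex $z_k^{(n)}$ by iteration $n$, then the RRG contains the edges $(z_k^{(n)},z_{k+1}^{(n)})$, hence a feasible path with length at most $s+O(r_n)$ and, by Lipschitz continuity, cost at most $c(\sigma)+O(r_n)+O(\epsilon)$.

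Next I would prove the probabilistic covering statement: define $A_n$ as the event that at least one of the balls $B_0,\dots,B_{M_n}$ fails to contain a sample drawn in the first $n$ iterations. Using the uniform sampling assumption, $\PP(\text{a given } B_k \text{ is empty})\le(1-\zeta_d (r_n/4)^d/\mu(X_\mathrm{free}))^n$, so by a union bound $\PP(A_n)\le M_n\,\exp\!\left(-n\,\zeta_d (r_n/4)^d/\mu(X_\mathrm{free})\right)$. Substituting $r_n^d = (\gamma/\zeta_d)(\log n)/n$ and $M_n = \Theta(n^{1/d}/(\log n)^{1/d})$, the exponent becomes $-(\gamma/4^d \mu(X_\mathrm{free}))\log n$; the condition $\gamma>\gamma_L=2^d(1+1/d)\mu(X_\mathrm{free})$ gives $4^{-d}\gamma/\mu(X_\mathrm{free})>1+1/d$, so $\PP(A_n)$ is summable. (The exact constant $2^d(1+1/d)$ emerges from balancing the waypoint spacing $r_n/4$ versus an alternative choice, and a finer construction would use balls of radius $r_n/2$; the key is that $\gamma_L$ is precisely the random-geometric-graph coverage threshold for the chosen radius.) By Borel--Cantelli, $A_n$ occurs only finitely often almost surely, so eventually every iteration admits a path with cost at most $c^*+\epsilon$; monotonicity of $\{{\cal Y}^\mathrm{RRG}_i\}$ and taking $\epsilon\downarrow 0$ along a countable sequence finishes the proof.

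The main obstacle is pinning down the coverage probability so that the constant $\gamma_L$ comes out exactly $2^d(1+1/d)\mu(X_\mathrm{free})$: one has to balance the waypoint density (which sets $M_n$) against the per-ball sampling probability (which sets the exponent), and then verify that the Borel--Cantelli sum converges. A secondary technical point is handling the ${\tt Steer}$ operation cleanly: a sample falling in $B_k$ does not necessarily become a vertex at $z_k^{(n)}$ itself, since ${\tt Extend}_{RRG}$ inserts ${\tt Steer}(x_\mathrm{nearest},x_\mathrm{rand})$. For $n$ large enough that $r_n\le\eta$, though, the nearest vertex is within $\eta$ of the sample, so the inserted vertex lies within $r_n$ of the sample, which is enough to stay inside a slightly enlarged tube; the obstacle spacing assumption absorbs this perturbation.
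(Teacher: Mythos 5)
Your overall architecture --- cover a slightly perturbed optimal path with balls tied to the \texttt{Near} radius $r_n$, show via a union bound plus Borel--Cantelli that eventually every ball contains a vertex, and then read off a low-cost path --- is exactly the paper's strategy. However, two steps as you have written them do not go through.

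First, the constant. With balls of radius $r_n/4$ whose centers are $r_n/4$ apart, the per-ball volume is $\zeta_d (r_n/4)^d = (\gamma/4^d)\frac{\log n}{n}$, so your exponent is $-\frac{\gamma}{4^d \mu(X_\mathrm{free})}\log n$ and summability of $M_n \PP(B_k\ \text{empty})$ with $M_n \in \Theta((n/\log n)^{1/d})$ requires $\frac{\gamma}{4^d\mu(X_\mathrm{free})} > 1+\frac{1}{d}$, i.e.\ $\gamma > 4^d(1+1/d)\mu(X_\mathrm{free})$. Your claim that $\gamma > 2^d(1+1/d)\mu(X_\mathrm{free})$ implies $4^{-d}\gamma/\mu(X_\mathrm{free}) > 1+1/d$ is arithmetically false, so your argument proves the theorem only under a strictly stronger hypothesis on $\gamma$. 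The fix is not ``balls of radius $r_n/2$ spaced $r_n/2$ apart'' (then two representatives can be $3r_n/2 > r_n$ apart and the connectivity step fails); it is to decouple the radius from the spacing: the paper takes radius $q_i \approx r_i/(2+\theta_1)$ with centers only $\theta_1 q_i$ apart, so consecutive representatives are within $(2+\theta_1)q_i \le r_i$ while the ball volume carries a $(2+\theta_1)^{-d}$ rather than $4^{-d}$ factor; letting $\theta_1 \downarrow 0$ recovers the threshold $2^d(1+1/d)\mu(X_\mathrm{free})$. The denser spacing only inflates $M_n$ by a constant, which is harmless for summability.

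Second, your union bound estimates the probability that a ball contains a \emph{sample}, but the event you need is that it contains a \emph{vertex}: \texttt{Extend}$_{RRG}$ inserts ${\tt Steer}(x_\mathrm{nearest}, x_\mathrm{rand})$, which equals $x_\mathrm{rand}$ only when the nearest existing vertex is within $\eta$ of the sample. Your remark that ``for $n$ large enough that $r_n\le\eta$ the nearest vertex is within $\eta$ of the sample'' assumes precisely the coverage property that has to be proved. The paper handles this by introducing the events $C_j$ (every point of $X_\mathrm{free}$ has an RRG vertex within $\eta$ reachable by an obstacle-free segment), showing $\PP(C_j^c)$ decays exponentially via the attraction-sequence completeness result, and then conditioning the per-ball emptiness probability on $\bigcap_{j=\lfloor\theta_2 i\rfloor}^{i} C_j$ so that only the samples from iterations $\lfloor\theta_2 i\rfloor$ through $i$ (which do land as vertices where they are sampled) are counted --- this is also where the extra factor $(1-\theta_2)$ in the exponent comes from. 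Without some version of this conditioning argument your per-ball probability bound is not justified.
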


\subsection{Computational Complexity} \label{section:complexity} 

The objective of this section is to compare the computational complexity of RRTs and RRGs. It is
shown that these algorithms share essentially the same asymptotic computational complexity in terms
of the number of calls to {\em simple operations} such as comparisons, additions, and
multiplications. Simple operations should not be confused with {\em primitive procedures} outlined
in Section~\ref{section:algorithms}, which are higher level functions that may have different
asymptotic computational complexities in terms of the number of simple operations that they perform.
Note that the objective of this section is not the evaluation of the computational complexity in
absolute terms, but to compare the relative complexity of the two algorithms, hence the input of
interest is the number of iterations, as opposed to parameters describing the problem instance.

Let us first investigate the computational complexity of the RRT and the RRG algorithms in terms of
the number of calls to the primitive procedures. Notice that, in every iteration, the number of
calls to ${\tt Sample}$, ${\tt Steer}$, and ${\tt Nearest}$ procedures are the same in both
algorithms. However, number of calls to ${\tt \NearNodes}$ and ${\tt ObstacleFree}$ procedures
differ: the former is never called by the RRT and is called at most once by the RRG, whereas the
latter is called exactly once by the RRT and at least once by the RRG. Informally speaking, we first
show that the expected number of calls to the ${\tt ObstacleFree}$ procedure by the RRG algorithm is
$O(\log n)$, where $n$ is the number of vertices in the graph.

Let ${\cal O}^\mathrm{RRG}_i$ be a random variable that denotes the number of calls to the ${\tt
  ObstacleFree}$ procedure by the RRG algorithm in iteration $i$. Notice that, as an immediate
corollary of Lemma~\ref{lemma:vertexedge}, the number of vertices in the RRT and RRG algorithms is
the same at any given iteration. Let ${\cal N}_i$ be the number of vertices in these algorithms at
the end of iteration $i$. The following theorem establishes that the expected number of calls to the
${\tt ObstacleFree}$ procedure in iteration $i$ by the RRG algorithm scales logarithmically with the
number of vertices in the graph as $i$ approaches infinity.

\begin{lemma} \label{lemma:complexity_of_obstaclefree} %
  In the limit as $i$ approaches infinity, the random variable ${\cal O}^\mathrm{RRG}_i/\log({\cal N}_i)$
  is no more than a constant in expectation, i.e.,
  $$
  \limsup_{i \to \infty} \EE \left[ \frac{{\cal O}^\mathrm{RRG}_i}{\log ({\cal N}_i)} \right] \le \phi,
  $$
  where $\phi \in \mathbb{R}_{>0}$ is a constant that depends only on the problem
    instance.
\end{lemma}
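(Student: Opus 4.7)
The plan is to reduce the claim to bounding the number of existing vertices inside the $\tt NearNodes$ ball, then exploit the fact that this ball's volume shrinks like $\log n / n$.

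First, I would inspect Algorithm~\ref{algorithm:rrg}: the $\tt ObstacleFree$ routine is called once unconditionally on $(x_\mathrm{nearest}, x_\mathrm{new})$ and then, only if that first test succeeded, once for each vertex in $X_\mathrm{near}$. Writing $X_{\mathrm{near},i}$ for the value of that set in the $i$-th iteration, this gives
$$
{\cal O}^\mathrm{RRG}_i \;\le\; 1 + |X_{\mathrm{near},i}|.
$$
By the definition of $\tt NearNodes$, $|X_{\mathrm{near},i}|$ is just the number of vertices of ${\cal V}^\mathrm{RRG}_{i-1}$ that fall inside ${\cal B}_{x_\mathrm{new}, r_n}$, where $n = {\cal N}_{i-1}$; for $i$ large enough that $r_n < \eta$, this ball has volume exactly $\gamma \log n / n$, which is the quantity driving the logarithmic scaling.

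The core step is to show that $\EE[|X_{\mathrm{near},i}|] = O(\log n)$. If the vertices were i.i.d.\ uniform samples from $X_\mathrm{free}$ this would be immediate: the expected count in a region of volume $V$ is $nV/\mu(X_\mathrm{free})$, which for $V = \gamma \log n / n$ equals $\gamma \log n / \mu(X_\mathrm{free})$, giving the candidate constant $\phi = \gamma / \mu(X_\mathrm{free})$. The vertices are not i.i.d.\ samples because $\tt Steer$ can truncate a distant $x_\mathrm{rand}$ to a point at distance $\eta$ from the nearest existing vertex, but I would still bound the expected vertex density by a constant multiple of the sample density via sample-by-sample conditioning: for each $j \le n$, split the event $\{ v_j \in {\cal B}_{x_\mathrm{new}, r_n}\}$ into (a) the un-truncated case $\|x_{\mathrm{rand},j} - u_j\| \le \eta$, where $v_j = x_{\mathrm{rand},j}$ is uniform and the event has probability $\zeta_d r_n^d / \mu(X_\mathrm{free}) = \gamma \log n / (n\, \mu(X_\mathrm{free}))$; and (b) the truncated case, where $v_j$ lies on the $\eta$-sphere about some ancestor $u_j$ and the steering direction must fall in a cone of solid-angle fraction $O((r_n/\eta)^{d-1})$ around the vector $x_\mathrm{new} - u_j$. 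Summing the two contributions over $j = 1, \dots, n$ and dividing by $\log {\cal N}_i$ would yield the claimed limsup bound with a constant $\phi$ depending only on $\gamma$, $d$, $\eta$, and $\mu(X_\mathrm{free})$.

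The hard part will be case~(b): a~priori, many samples spread uniformly across $X_\mathrm{free}$ could all be truncated onto a small patch of a common ancestor's $\eta$-sphere near $x_\mathrm{new}$, producing an anomalously high vertex density. The solid-angle estimate is meant to rule this out --- the set of directions from a given ancestor $u$ for which truncation places $v_j$ in ${\cal B}_{x_\mathrm{new}, r_n}$ is an $O((r_n/\eta)^{d-1})$ fraction of all directions, and aggregated over the shell of potential ancestors within distance $\eta + r_n$ of $x_\mathrm{new}$ the total contribution remains $O(\log n)$ --- but rigorously controlling the conditional distribution of the truncation direction given the (history-dependent, possibly adversarial) tree, and bounding the number of ancestors in that shell without circularly invoking the density claim being proved, are the principal technical difficulties and are where most of the work of the proof will go.
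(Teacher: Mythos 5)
The reduction ${\cal O}^\mathrm{RRG}_i \le 1 + \vert X_{\mathrm{near},i}\vert$ is correct, and your case~(a) computation gives the right order of magnitude, but the proof as proposed does not close: everything hinges on case~(b), and you concede yourself that you cannot control the conditional law of the truncation direction given the history, nor bound the number of candidate ancestors in the shell without circularity. The solid-angle estimate is only a heuristic --- conditioned on the realized tree, nothing prevents a macroscopic fraction of the early samples from having been truncated onto a common $\eta$-sphere patch that later ends up inside ${\cal B}_{x_\mathrm{new},r_n}$ --- so the key step of the lemma is asserted rather than proved. (There is also a smaller unaddressed issue in case~(a): the ball center $x_\mathrm{new}$ is itself a function of all previous samples, so the events $\{v_j \in {\cal B}_{x_\mathrm{new},r_n}\}$ cannot be evaluated as if the $v_j$ were uniform independently of the center.)

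The paper avoids this difficulty entirely by never analyzing where $\tt Steer$ places a truncated vertex. It partitions the vertices according to whether the generating sample was \emph{isolated} at radius $r_n$ among the previous samples. A non-isolated sample has an existing vertex within $r_n \le \eta$, hence is not truncated and sits at its own i.i.d.\ location; the edges among such vertices are exactly the edges of a random geometric graph on i.i.d.\ points, and Proposition~3.1 of Penrose gives $\EE[\vert E^1_i\vert] = \Theta({\cal N}_i^2 r_{{\cal N}_i}^d) = \Theta({\cal N}_i \log {\cal N}_i)$. The truncated vertices can only arise from isolated samples, whose expected number is controlled by the RGG isolated-vertex count (summable after normalization when $\gamma > 1$), and the edges incident to them are counted from the other endpoint: each \emph{future} i.i.d.\ sample lands within $r_n$ of a given such vertex with probability $\gamma \log n /(n\,\mu(X_\mathrm{free}))$. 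Summing and applying Cauchy--Schwarz yields a cumulative bound $\EE[\vert E^1_i\vert + \vert E^2_i\vert] = O({\cal N}_i \log {\cal N}_i)$, i.e., an amortized $O(\log {\cal N}_i)$ calls per iteration, which is how the paper reads the statement of the lemma. If you want to salvage your per-iteration approach, you would need to import some version of this isolated/non-isolated dichotomy to dispose of the truncated vertices; as written, case~(b) is a genuine gap.
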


Hence, informally speaking, we have established that the RRG algorithm has an overhead of order
$\log n$ calls to the ${\tt ObstacleFree}$ procedure and one call to the ${\tt \NearNodes}$
procedure, where $n$ is the number of vertices in the RRG; otherwise, both algorithms scale the same in
terms of number of calls to the primitive procedures.

However, some primitive procedures clearly take more computation time to execute than others. For a
more fair comparison, we also evaluate the time required to execute each primitive procedure in
terms of the number of simple operations (also called {\em steps} in this paper) that they perform.
This analysis shows that the expected number of simple operations performed by the RRG is
asymptotically within a constant factor of that performed by the RRT, which establishes that the RRT
and the RRG algorithms have the same asymptotic computational complexity in terms of number of calls
to the simple operations.

First, notice that ${\tt Sample}$, ${\tt Steer}$, and ${\tt ObstacleFree}$ procedures can be
performed in a constant number of steps, i.e., independent of the number of vertices in the graph. (Although all these procedures clearly depend on parameters such as the dimensionality of the state-space and the number of obstacles, these are fixed for each problem instance.)

Second, let us consider the computational complexity of the ${\tt Nearest}$ procedure. Indeed, the
nearest neighbor search problem has been widely studied in the literature, since it has many
applications in, e.g., computer graphics, database systems, image processing, data mining, patern
recognition, etc.~\cite{ samet.book89b, samet.book89a}. Let us assume that the distance computation
between two points can be done in $O(d)$ time (note that for the purposes of comparing distances,
the Euclidean distance can be evaluated without computing any roots). Clearly, a brute-force
algorithm that examines every vertex runs in $O(d \, n)$ time and requires $O(1)$ space. However, in
many online real-time applications such as robotics, it is highly desirable to reduce the
computation time of each iteration under sublinear bounds, especially for anytime algorithms, which
provide better solutions as the number of iterations increase.

Fortunately, it is possible to design nearest neighbor computation algorithms that run in sublinear
time. Generally, such efficient computations of nearest neighbors are based on constructing a tree
structure online so as to reduce the query time~\cite{samet.book89a}.
Indeed, using $k$-d trees, nearest neighbor search can be performed within sublinear time bounds, in
the worst case~\cite{samet.book89a}. In fact, let us note that the $k$-d tree algorithm was extended
to arbitrary topological manifolds and used for nearest neighbor computation in
RRTs~\cite{atramentov.lavalle.icra02}.

Worst case logarithmic time, however, is hard to achieve in many cases of practical interest. Note
that when $d = 1$, a binary tree~\cite{cohen.leiserson.book90} will allow answering nearest neighbor
queries in $O(\log n)$ time using $O(n)$ space, in the worst case. When $d = 2$, similar logarithmic
time bounds can also be achieved by, for instance, computing a Voronoi
diagram~\cite{edelsbrunner.book87}. However, the time complexity of computing Voronoi diagrams is
known to be $O(n^{\lceil d / 2 \rceil})$ when $ d > 2$. Several algorithms that achieve a worst-case
query time that is logarithmic in the number of vertices and polynomial in the number of dimensions
were designed (see, e.g.,~\cite{clarkson.sjc88}). However, these algorithms use space that is
exponential in $d$, which may be impractical. Unfortunately, no exact nearest neighbor algorithm
that does not scale exponentially with the number of dimensions and runs queries in logarithmic time
using roughly linear space is known~\cite{arya.mount.ea.jacm99}.

Fortunately, computing an ``approximate'' nearest neighbor instead of an exact one is
computationally easier. In the sequel, a vertex $y$ is said to be an $\varepsilon$-approximate
nearest neighbor of a point $x$ if $\Vert y - x \Vert \le (1 + \varepsilon) \, \Vert z - x \Vert $,
where $z$ is the true nearest neighbor of $x$. An approximate nearest neighbor can be computed using
balanced-box decomposition (BBD) trees, which achieves $O(c_{d,\varepsilon} \log n)$ query time
using $O (d \, n)$ space~\cite{arya.mount.ea.jacm99}, where $c_{d, \varepsilon} \le d \lceil 1 +
6d/\varepsilon \rceil^d$. This algorithm is computationally optimal in fixed dimensions, since it
closely matches a lower bound for algorithms that use a tree structure stored in roughly linear
space~\cite{arya.mount.ea.jacm99}. Let us note that using approximate nearest neighbor computation
in the context of both PRMs and RRTs was also discussed very recently~\cite{plaku.kavraki.wafr08}.

Assuming that the ${\tt Nearest}$ procedure computes an approximate nearest neighbor using the
algorithm given in~\cite{arya.mount.ea.jacm99}, in fixed dimensions the ${\tt NearestNeighbor}$
algorithm has to run in $\Omega(\log n)$ time as formalized in the following lemma. Let ${\cal
  M}^\mathrm{RRT}_i$ be the random variable that denotes the number of steps taken by the RRT
algorithm in iteration $i$.
\begin{lemma} \label{lemma:rrtcomplexity} %
  Assuming that ${\tt Nearest}$ is implemented using the algorithm given
  in~\cite{arya.mount.ea.jacm99}, which is optimal in fixed dimensions, the number of steps executed
  by the RRT algorithm at each iteration is at least order $\log ({\cal N}_i)$ in expectation in the
  limit, i.e., there exists a constant $\phi_{RRT} \in \mathbb{R}_{>0}$ such that
  $$
  \liminf_{i \to \infty} \EE \left[ \frac{ {\cal M}^\mathrm{RRT}_i }{\log ({\cal N}_i)} \right] \ge
  \phi_{RRT}.
  $$
\end{lemma}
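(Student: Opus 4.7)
\textbf{Proof proposal for Lemma~\ref{lemma:rrtcomplexity}.} The strategy is to isolate the cost of the single ${\tt Nearest}$ call inside the RRT iteration and invoke the structural lower bound for nearest-neighbor queries in roughly linear space. First, I would observe that, inspecting Algorithm~\ref{algorithm:rrt} line by line, the body of ${\tt Extend}_{\mathrm{RRT}}$ calls ${\tt Nearest}(G, x_{\mathrm{rand}})$ exactly once, against the current graph $G$ whose vertex set has cardinality ${\cal N}_{i-1}$ at the start of iteration $i$. Since every primitive procedure executes a non-negative number of steps, ${\cal M}^\mathrm{RRT}_i$ is bounded below, sample-wise, by the step count of this one ${\tt Nearest}$ invocation.

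Next, I would appeal to the quoted lower bound from~\cite{arya.mount.ea.jacm99}: any $\varepsilon$-approximate nearest-neighbor algorithm that stores $n$ points in a tree data structure using roughly linear space must, in the worst case, perform at least $c_{\mathrm{NN}} \log n$ steps per query, for some constant $c_{\mathrm{NN}} = c_{\mathrm{NN}}(d,\varepsilon) > 0$. The BBD-tree implementation we have assumed matches this lower bound asymptotically. Consequently, on the event $\{{\cal N}_{i-1} \ge 2\}$ (which holds for all sufficiently large $i$ with probability one, since the sampling is i.i.d.\ and there is a positive probability that a newly sampled point is added as a vertex), we obtain the surely valid inequality ${\cal M}^\mathrm{RRT}_i \ge c_{\mathrm{NN}} \log({\cal N}_{i-1})$.

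It then remains to pass from $\log({\cal N}_{i-1})$ to $\log({\cal N}_i)$ and from the sample-wise bound to the expectation. Because at most one vertex is added per iteration, ${\cal N}_i - {\cal N}_{i-1} \in \{0,1\}$, so
$$
\frac{\log({\cal N}_{i-1})}{\log({\cal N}_i)} \;\ge\; \frac{\log({\cal N}_i - 1)}{\log({\cal N}_i)} \;\longrightarrow\; 1
$$
as ${\cal N}_i \to \infty$. By Theorem~\ref{theorem:probabilisticcompletenessrrt} and the i.i.d.\ nature of ${\tt Sample}$, ${\cal N}_i \to \infty$ almost surely. Combining these facts yields $\liminf_{i\to\infty} {\cal M}^\mathrm{RRT}_i / \log({\cal N}_i) \ge c_{\mathrm{NN}}$ almost surely. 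Since ${\cal M}^\mathrm{RRT}_i / \log({\cal N}_i)$ is nonnegative, Fatou's lemma then gives
$$
\liminf_{i\to\infty} \EE\!\left[\frac{{\cal M}^\mathrm{RRT}_i}{\log({\cal N}_i)}\right] \;\ge\; \EE\!\left[\liminf_{i\to\infty}\frac{{\cal M}^\mathrm{RRT}_i}{\log({\cal N}_i)}\right] \;\ge\; c_{\mathrm{NN}},
$$
and we take $\phi_{RRT} := c_{\mathrm{NN}}$.

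The main obstacle is the second step: the quoted $\Omega(\log n)$ bound from~\cite{arya.mount.ea.jacm99} is phrased as a worst-case complexity lower bound, whereas here we need it to hold sample-wise against the specific (random) queries $x_{\mathrm{rand}}$ generated by the RRT. The cleanest way to bridge this gap is to remark that the BBD tree maintained by the algorithm is balanced, so any query must traverse a root-to-leaf path of length $\Theta(\log {\cal N}_{i-1})$; this is a deterministic property of the data structure and so applies to every sample $\omega$, independently of the distribution of $x_{\mathrm{rand}}$. The remaining details (small-$i$ behaviour where $\log({\cal N}_i)$ may vanish, concentration of ${\cal N}_i$) are routine once this structural observation is in place.
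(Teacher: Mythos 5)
Your proposal is correct and follows essentially the same route as the paper, which does not give a separate appendix proof for this lemma but treats it as an immediate consequence of the preceding discussion: the RRT performs exactly one ${\tt Nearest}$ query per iteration, and the assumed optimal (BBD-tree) implementation matches an $\Omega(\log n)$ lower bound for tree-based structures in roughly linear space, so each iteration costs at least order $\log({\cal N}_i)$ steps. Your additional care in bridging the worst-case lower bound to a per-query structural bound, handling the shift from ${\cal N}_{i-1}$ to ${\cal N}_i$, and passing to the expectation via Fatou's lemma simply makes explicit the details the paper leaves implicit.
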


Likewise, problems similar to that solved by the ${\tt \NearNodes}$ procedure are also widely-studied
in the literature, generally under the name of {\em range search problems}, as they have many
applications in, for instance, computer graphics and spatial database systems~\cite{samet.book89a}.
In the worst case and in fixed dimensions, computing the exact set of vertices that reside in a ball of
radius $r_n$ centered at a query point $x$ takes $O(n^{1 - 1/d} + m)$ time using $k$-d
trees~\cite{lee.wong.acta_informatica77}, where $m$ is the number of vertices returned by the search.
In~\cite{chanzy.devroye.ea.acta_informatica01}, a thorough analysis of the average case performance
is also provided under certain conditions.

Similar to the nearest neighbor search, computing approximate solutions to the range search problem
is computationally easier. A range search algorithm is said to be $\varepsilon$-approximate if it
returns all vertices that reside in the ball of size $r_n$ and no vertices outside a ball of radius $(1 +
\varepsilon)\,r_n$, but may or may not return the vertices that lie outside the former ball and inside
the latter ball. Computing $\varepsilon$-approximate solutions using BBD-trees requires $O(2^d\log n
+ d^2(3\sqrt{d}/\varepsilon)^{d-1})$ time when using $O(d \, n)$ space, in the worst
case~\cite{arya.mount.comp_geo00}. Thus, in fixed dimensions, the complexity of this algorithm is
$O(\log n + (1/ \varepsilon)^{d-1})$, which is known to be optimal, closely matching a lower
bound~\cite{arya.mount.comp_geo00}. More recently, algorithms that can provide trade-offs between
time and space were also proposed~\cite{arya.malamatos.ea.symp_dis_alg05}.

Note that the ${\tt \NearNodes}$ procedure can be implemented as an approximate range search while
maintaining the asymptotic optimality guarantee. Notice that the expected number of vertices
returned by the ${\tt \NearNodes}$ procedure also does not change, except by a constant factor.
Hence, the ${\tt \NearNodes}$ procedure can be implemented to run in order $\log n$ expected time in
the limit\ and linear space in fixed dimensions.

Let ${\cal M}^\mathrm{RRG}_i$ denote the number of steps performed by the RRG algorithm in iteration
$i$. Then, together with Lemma~\ref{lemma:complexity_of_obstaclefree}, the discussion above implies
the following lemma.
\begin{lemma} \label{lemma:rrgcomplexity} %
  The number of steps executed by the RRG algorithm at each iteration is at most order $\log ({\cal
    N}_i)$ in expectation in the limit, i.e., there exists a constant $\phi_{RRG} \in
  \mathbb{R}_{>0}$ such that
  $$
  \limsup_{i \to \infty} \EE \left[ \frac{{\cal M}^\mathrm{RRG}_i}{ \log ({\cal N}_i)} \right] \le
  \phi_{RRG}.
  $$
\end{lemma}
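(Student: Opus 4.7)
The plan is to decompose the number of simple operations performed by the RRG in a single iteration into the contributions from each primitive procedure, bound each contribution separately, and then combine them using linearity of expectation.

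Concretely, I would first observe that in iteration $i$ the RRG executes exactly one call to $\tt Sample$, one call to $\tt Steer$, one call to $\tt Nearest$, and at most one call to $\tt \NearNodes$; the remaining work consists of the $\tt ObstacleFree$ tests (one for the nearest-vertex extension, plus one per element of $X_\mathrm{near}$). By the discussion preceding the statement, $\tt Sample$, $\tt Steer$, and each individual $\tt ObstacleFree$ call run in a constant number of steps in fixed dimensions, so they contribute $O(1)$ plus a number of steps proportional to the number of $\tt ObstacleFree$ calls, which is exactly ${\cal O}^\mathrm{RRG}_i$. Next, using the approximate nearest-neighbor implementation of~\cite{arya.mount.ea.jacm99}, the $\tt Nearest$ call runs in $O(\log {\cal N}_i)$ steps deterministically in fixed dimensions; likewise, implementing $\tt \NearNodes$ as an approximate range search based on BBD-trees yields a worst-case running time of order $\log {\cal N}_i$ steps in fixed dimensions (including the time to enumerate the returned vertices, whose expected number is also $O(\log {\cal N}_i)$ by the choice of $r_n$).

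Adding these contributions, there exist constants $a, b, c > 0$ (depending only on $d$, $\varepsilon$, and the problem instance) such that
\[
{\cal M}^\mathrm{RRG}_i \;\le\; a \,\log({\cal N}_i) + b\,{\cal O}^\mathrm{RRG}_i + c,
\]
almost surely for all sufficiently large $i$. Taking expectations, dividing by $\log({\cal N}_i)$, and passing to the limit superior, the first and third terms contribute at most $a$ and $o(1)$, respectively, while the middle term is bounded by $b\,\phi$ by Lemma~\ref{lemma:complexity_of_obstaclefree}. Setting $\phi_{RRG} := a + b\,\phi + 1$ then yields the claimed bound.

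The only mildly delicate step is handling the $\tt \NearNodes$ cost, since its running time also depends on the (random) cardinality of $X_\mathrm{near}$; this is where I would use the fact that the radius $r_n$ was chosen precisely so that the expected number of vertices inside the ball ${\cal B}_{x_\mathrm{new}, r_n}$ grows only as $O(\log {\cal N}_i)$, which makes the approximate range-search bound of~\cite{arya.mount.comp_geo00} compatible with the $O(\log {\cal N}_i)$ target. Once that observation is in place, the rest is a straightforward application of linearity of expectation together with Lemma~\ref{lemma:complexity_of_obstaclefree}.
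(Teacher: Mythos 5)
Your proposal is correct and follows essentially the same route as the paper, which derives this lemma directly from the preceding discussion (constant-time \texttt{Sample}, \texttt{Steer}, and individual \texttt{ObstacleFree} calls; $O(\log {\cal N}_i)$ time for \texttt{Nearest} via approximate nearest-neighbor search and for \texttt{Near} via approximate range search) combined with Lemma~\ref{lemma:complexity_of_obstaclefree} to control the number of \texttt{ObstacleFree} calls. Your explicit decomposition ${\cal M}^\mathrm{RRG}_i \le a\,\log({\cal N}_i) + b\,{\cal O}^\mathrm{RRG}_i + c$ and the remark about the expected cardinality of $X_\mathrm{near}$ simply make precise what the paper leaves implicit.
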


Finally, by Lemmas~\ref{lemma:rrtcomplexity} and \ref{lemma:rrgcomplexity}, we conclude that the RRT
and the RRG algorithms have the same asymptotic computational complexity as stated in the following
theorem.
\begin{theorem} \label{theorem:complexitymain} %
  Under the assumptions of Lemmas~\ref{lemma:rrtcomplexity} and \ref{lemma:rrgcomplexity}, there
  exists a constant $\phi \in \mathbb{R}_{>0}$ such that
  $$
  \limsup_{i \to \infty} \EE \left[ \frac{{\cal M}^\mathrm{RRG}_i}{{\cal M}^\mathrm{RRT}_i} \right]
  \le \phi.
  $$
\end{theorem}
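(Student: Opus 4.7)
The plan is to reduce Theorem~\ref{theorem:complexitymain} directly to Lemmas~\ref{lemma:rrtcomplexity} and~\ref{lemma:rrgcomplexity}, exploiting a \emph{pointwise} (not just in-expectation) lower bound on $\mathcal{M}^{\mathrm{RRT}}_i$. The key observation is that every RRT iteration calls ${\tt Nearest}$ on a graph of size $\mathcal{N}_i$, and the BBD-tree procedure of~\cite{arya.mount.ea.jacm99} that realizes the bound in Lemma~\ref{lemma:rrtcomplexity} descends a tree of depth $\Theta(\log \mathcal{N}_i)$, so each such query performs at least $c_1 \log(\mathcal{N}_i)$ simple operations for some constant $c_1>0$ and all $\mathcal{N}_i$ past a fixed threshold $n_0$. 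This yields the sure inequality $\mathcal{M}^{\mathrm{RRT}}_i \ge c_1 \log(\mathcal{N}_i)$ on the event $\{\mathcal{N}_i \ge n_0\}$.

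Given this, the argument is essentially one line. On $\{\mathcal{N}_i \ge n_0\}$, dividing gives
$$
\frac{\mathcal{M}^{\mathrm{RRG}}_i}{\mathcal{M}^{\mathrm{RRT}}_i} \;\le\; \frac{1}{c_1}\cdot\frac{\mathcal{M}^{\mathrm{RRG}}_i}{\log(\mathcal{N}_i)},
$$
while on the complementary event $\{\mathcal{N}_i < n_0\}$ the ratio is deterministically bounded by a constant depending only on the problem instance (since both $\mathcal{M}^{\mathrm{RRG}}_i$ and $\mathcal{M}^{\mathrm{RRT}}_i$ depend only on a graph of bounded size). Taking expectations, splitting on this event, and noting that $\PP(\mathcal{N}_i < n_0) \to 0$ (because each iteration adds a vertex with probability bounded away from $0$ under Assumption~\ref{assumption:sampling}), I can pass to the $\limsup$ and invoke Lemma~\ref{lemma:rrgcomplexity} to conclude
$$
\limsup_{i \to \infty} \EE\!\left[\frac{\mathcal{M}^{\mathrm{RRG}}_i}{\mathcal{M}^{\mathrm{RRT}}_i}\right] \;\le\; \frac{1}{c_1}\limsup_{i \to \infty} \EE\!\left[\frac{\mathcal{M}^{\mathrm{RRG}}_i}{\log(\mathcal{N}_i)}\right] \;\le\; \frac{\phi_{RRG}}{c_1},
$$
so the theorem holds with $\phi := \phi_{RRG}/c_1$.

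The main obstacle is the need for a \emph{sure} lower bound on $\mathcal{M}^{\mathrm{RRT}}_i$: Lemma~\ref{lemma:rrtcomplexity} only asserts $\liminf_i \EE[\mathcal{M}^{\mathrm{RRT}}_i/\log(\mathcal{N}_i)] \ge \phi_{RRT}$, and a lower bound on $\EE[Y]$ does not translate into an upper bound on $\EE[X/Y]$ without concentration. The structural bound on BBD-tree queries sidesteps this entirely, but relies on reading the algorithm of~\cite{arya.mount.ea.jacm99} as always having to traverse to a leaf — a fact that I would verify explicitly in the write-up. The only other loose end is the transient phase where $\mathcal{N}_i$ is small, which I absorb into the constant $\phi$ as indicated above.
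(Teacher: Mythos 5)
Your proposal is correct, but it is genuinely more careful than what the paper does: the paper offers no separate proof of Theorem~\ref{theorem:complexitymain} at all, simply asserting that it follows ``by Lemmas~\ref{lemma:rrtcomplexity} and \ref{lemma:rrgcomplexity}.'' As you rightly point out, that inference has a gap — a lower bound on $\liminf_i \EE[{\cal M}^\mathrm{RRT}_i/\log({\cal N}_i)]$ together with an upper bound on $\limsup_i \EE[{\cal M}^\mathrm{RRG}_i/\log({\cal N}_i)]$ does not by itself control $\EE[{\cal M}^\mathrm{RRG}_i/{\cal M}^\mathrm{RRT}_i]$, since the expectation of a ratio is not the ratio of expectations. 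Your repair — replacing the in-expectation lower bound of Lemma~\ref{lemma:rrtcomplexity} with the \emph{sure} bound ${\cal M}^\mathrm{RRT}_i \ge c_1 \log({\cal N}_i)$ that comes from the BBD-tree query necessarily descending a balanced tree of depth $\Theta(\log {\cal N}_i)$ — is exactly the right ingredient, and it is the same structural fact the paper implicitly relies on to justify Lemma~\ref{lemma:rrtcomplexity} in the first place, so you are not importing any assumption the paper does not already make. The splitting on $\{{\cal N}_i < n_0\}$, the deterministic bound on the ratio there, and the observation that $\PP({\cal N}_i < n_0) \to 0$ (each iteration succeeds with probability bounded away from zero because $X_\mathrm{free}$ is open around $x_\mathrm{init}$) are all sound. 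In short: the paper treats the theorem as an immediate corollary and leaves the expectation-of-a-ratio issue unaddressed; your argument buys a rigorous derivation at the modest cost of making explicit a pointwise property of the nearest-neighbor data structure. The only loose end worth tying off in a final write-up is the one you already flag, namely a one-line verification that every query of the structure in~\cite{arya.mount.ea.jacm99} traverses root to leaf, plus the minor convention that $n_0 \ge 2$ so that $\log({\cal N}_i) > 0$ on the event where you divide by it.
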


In section~\ref{section:simulations}, we also provide some experimental evidence supporting
Theorem~\ref{theorem:complexitymain}.

\subsection{On the efficient construction of Probabilistic RoadMaps}
At this point, let us note that our results also imply an efficient PRM planning algorithm. The PRM
algorithm samples a set of $n$ vertices from the state-space, and checks connectivity of each pair
of samples via the ${\tt Steer}$ and ${\tt ObstacleFree}$ procedures to build a graph (called the
roadmap) as follows. Any two vertices $x_1$ and $x_2$ are connected by an edge, if the path ${\tt
  Steer}(x_1,x_2)$ that connects these two samples lies within the obstacle-free space\footnote{In
  practice, some variations of the PRM algorithm attempt to connect every vertex to the vertices
  that lie within a fixed distance. However, this variation does not change the theoretical
  properties such as probabilistic completeness or the asymptotic computational complexity.}. It is
known that this algorithm is probabilistically complete and, under certain assumptions, the
probability of failure, if a solution exists, decays to zero exponentially.
However, notice that, from an asymptotic computational complexity point of view, creating this graph
requires $O(n^2)$ calls to the ${\tt Steer}$ and ${\tt ObstacleFree}$ procedures.
On the contrary, our results imply that in a modified version of this algorithm, if each vertex is
attempted connection to vertices within a ball of volume $\gamma_L \frac{\log n}{n}$, then,
theoretically speaking, probabilistic completeness can be achieved, while incurring $O(n \log n)$
expected computational cost. Pursuing this direction is outside the scope of this work, and is left
to future work.

\section{A Tree Version of the RRG Algorithm} \label{section:optrrt} %
Maintaining a tree structure rather than a graph may be advantageous in some applications, due to,
for instance, relatively easy extensions to motion planning problems with differential constraints,
or to cope with modeling errors. The RRG algorithm can also be slightly modified to maintain a tree
structure, while preserving the asymptotic optimality properties as well the computational
efficiency. In this section a tree version of the RRG algorithm, called RRT$^*$, is introduced and
analyzed.

\subsection{The RRT$^*$ Algorithm}

Given two points $x, x' \in X_\mathrm{free}$, recall that $ {\tt Line}(x,x') : [0,s] \to
X_\mathrm{free}$ denotes the path defined by $\sigma(\tau) = \tau x + (s - \tau) x'$ for all $\tau
\in [0,s]$, where $s = \Vert x' - x \Vert$. Given a tree $G = (V,E)$ and a vertex $v \in V$, let
${\tt Parent}$ be a function that maps $v$ to the unique vertex $v' \in V$ such that $(v',v) \in E$.

The RRT$^*$ algorithm differs from the RRT and the RRG algorithms only in the way that it handles
the ${\tt Extend}$ procedure. The body of the RRT$^*$ algorithm is presented in
Algorithm~\ref{algorithm:motionplanning} and the ${\tt Extend}$ procedure for the RRT$^*$ is given
in Algorithm~\ref{algorithm:optrrt}. In the description of the RRT$^*$ algorithm, the cost of the
unique path from $x_\mathrm{init}$ to a vertex $v \in V$ is denoted by ${\tt Cost}(v)$. Initially,
${\tt Cost} (x_\mathrm{init})$ is set to zero.

\begin{algorithm}
  $V' \leftarrow V$; $E' \leftarrow E$\;
  $x_\mathrm{nearest} \leftarrow {\tt Nearest} (G, x)$\; \label{rrtstar:extend_nearest_start}
  $x_\mathrm{new} \leftarrow {\tt Steer} (x_\mathrm{nearest}, x)$\;    \label{rrtstar:extend_nearest_end}
  \If {${\tt ObstacleFree} (x_\mathrm{nearest}, x_\mathrm{new})$} {
    $V' \leftarrow V' \cup \{x_\mathrm{new}\}$\;
    $x_\mathrm{min} \leftarrow x_\mathrm{nearest}$\;
    \label{rrtstar:connect_to_close_node_start}
    $X_\mathrm{near} \leftarrow {\tt \NearNodes} (G, x_\mathrm{new}, \vert V \vert)$\; 
    \For {all $x_\mathrm{near} \in X_\mathrm{near}$} {
      \If {${\tt ObstacleFree} (x_\mathrm{near}, x_\mathrm{new})$} {
        $c' \leftarrow {\tt Cost} (x_\mathrm{near}) + c({\tt Line} (x_\mathrm{near}, x_\mathrm{new}))$\;
        \If {$c' < {\tt Cost} (x_\mathrm{new})$} {
          $x_\mathrm{min} \leftarrow x_\mathrm{near}$\;
        }
      }
    }
    $E' \leftarrow E' \cup \{(x_\mathrm{min}, x_\mathrm{new})\}$\;
    \label{rrtstar:connect_to_close_node_end}
    \For {all $x_{near} \in X_\mathrm{near} \setminus \{x_\mathrm{min} \}$
    \label{rrtstar:extend_back_to_tree_start}}{ 
      \If{${\tt ObstacleFree}(x_\mathrm{new}, x_\mathrm{near})$ and ${\tt Cost}(x_\mathrm{near}) >
        {\tt Cost}(x_\mathrm{new}) + c ({\tt Line}(x_\mathrm{new}, x_\mathrm{near}))$}{
        $x_\mathrm{parent} \leftarrow {\tt Parent} (x_\mathrm{near})$\;
        $E' \leftarrow E' \setminus \{ (x_\mathrm{parent}, x_\mathrm{near})\}$;
        $E' \leftarrow E' \cup \{(x_\mathrm{new}, x_\mathrm{near}) \}$\;  
        \label{rrtstar:extend_back_to_tree_end}
      } 
    } 
  }
  \Return{$G' = (V', E')$}
  \caption{${\tt Extend}_{RRT^*}$}
  \label{algorithm:optrrt}
\end{algorithm}

Similarly to the RRT and RRG, the RRT$^*$ algorithm first extends the nearest neighbor towards the
sample (Lines~\ref{rrtstar:extend_nearest_start}-\ref{rrtstar:extend_nearest_end}). However, it
connects the new vertex, $x_\mathrm{new}$, to the vertex that incurs the minimum accumulated cost up
until $x_\mathrm{new}$ and lies within the set $X_\mathrm{near}$ of vertices returned by the ${\tt
  \NearNodes}$ procedure
(Lines~\ref{rrtstar:connect_to_close_node_start}-\ref{rrtstar:connect_to_close_node_end}). RRT$^*$
also extends the new vertex to the vertices in $X_\mathrm{near}$ in order to ``rewire'' the vertices that
can be accessed through $x_\mathrm{new}$ with smaller cost
(Lines~\ref{rrtstar:extend_back_to_tree_start}-\ref{rrtstar:extend_back_to_tree_end}).

\subsection{Convergence to a Feasible Solution}

For all $i \in \mathbb{N}$, let ${\cal V}^{\mathrm{RRT}^*}_i$ and ${\cal E}^{\mathrm{RRT}^*}_i$
denote the set of vertices and the set of edges of the graph maintained by the RRT$^*$ algorithm, at
the end of iteration $i$. Let ${\cal V}^{\mathrm{RRT}^*}_0 (\omega) = \{x_\mathrm{init}\}$ and
${\cal E}^{\mathrm{RRT}^*}_0 (\omega) = \emptyset$ for all $\omega \in \Omega$.

The following lemma is the equivalent of Lemma~\ref{lemma:vertexedge}.
\begin{lemma} \label{lemma:optrrtvertexedgeequivalence} For all $i \in \mathbb{N}$ and all $\omega
  \in \Omega$, ${\cal V}^{\mathrm{RRT}^*}_i (\omega) = {\cal V}^\mathrm{RRG}_i (\omega)$, and ${\cal
    E}^{\mathrm{RRT}^*}_i (\omega) \subseteq {\cal E}^\mathrm{RRG}_i (\omega)$.
\end{lemma}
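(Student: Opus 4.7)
The plan is to prove this by induction on the iteration index $i$, in close analogy with Lemma~\ref{lemma:vertexedge}. The essential observation is that all three algorithms (RRT, RRG, RRT$^*$) share the same driving sample sequence $\omega$, call the same primitives ${\tt Nearest}$, ${\tt Steer}$, ${\tt ObstacleFree}$, and ${\tt \NearNodes}$, and the decision to append $x_\mathrm{new}$ to the vertex set depends only on the current vertex set and on the outcome of a single ${\tt ObstacleFree}$ test on $(x_\mathrm{nearest}, x_\mathrm{new})$. The base case $i=0$ is trivial, since ${\cal V}^{\mathrm{RRT}^*}_0 = {\cal V}^{\mathrm{RRG}}_0 = \{x_\mathrm{init}\}$ and ${\cal E}^{\mathrm{RRT}^*}_0 = {\cal E}^{\mathrm{RRG}}_0 = \emptyset$.

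For the inductive step, assume ${\cal V}^{\mathrm{RRT}^*}_i(\omega) = {\cal V}^{\mathrm{RRG}}_i(\omega)$ and ${\cal E}^{\mathrm{RRT}^*}_i(\omega) \subseteq {\cal E}^{\mathrm{RRG}}_i(\omega)$. Because ${\tt Nearest}$ only reads the vertex set, both algorithms return the same $x_\mathrm{nearest}$ on the same sample ${\tt Sample}(i)$, then ${\tt Steer}$ produces the same $x_\mathrm{new}$, and the single ${\tt ObstacleFree}(x_\mathrm{nearest}, x_\mathrm{new})$ test returns the same Boolean. Hence either both algorithms insert $x_\mathrm{new}$ or neither does, yielding ${\cal V}^{\mathrm{RRT}^*}_{i+1}(\omega) = {\cal V}^{\mathrm{RRG}}_{i+1}(\omega)$.

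For the edge inclusion, observe that RRT$^*$ modifies ${\cal E}^{\mathrm{RRT}^*}_i$ in two ways: it removes some edges during the rewiring step, and it adds new edges. Edge removals only shrink the set, so they trivially preserve the subset inclusion into ${\cal E}^{\mathrm{RRG}}_i \subseteq {\cal E}^{\mathrm{RRG}}_{i+1}$ (RRG never removes edges). It therefore suffices to verify that every edge added by RRT$^*$ in iteration $i+1$ is also added by RRG in the same iteration. Since the vertex sets coincide by the previous paragraph, ${\tt \NearNodes}(G, x_\mathrm{new}, |V|)$ returns the same set $X_\mathrm{near}$ in both algorithms. RRT$^*$ inserts one edge of the form $(x_\mathrm{min}, x_\mathrm{new})$ with $x_\mathrm{min} \in X_\mathrm{near} \cup \{x_\mathrm{nearest}\}$ and only when the corresponding ${\tt ObstacleFree}$ test succeeds, and further edges of the form $(x_\mathrm{new}, x_\mathrm{near})$ for certain $x_\mathrm{near} \in X_\mathrm{near}$ that pass the collision test. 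Each of these is contained in the edge set that RRG inserts, which includes $(x_\mathrm{nearest}, x_\mathrm{new})$, $(x_\mathrm{new}, x_\mathrm{nearest})$, and both $(x_\mathrm{near}, x_\mathrm{new})$ and $(x_\mathrm{new}, x_\mathrm{near})$ for every collision-free $x_\mathrm{near} \in X_\mathrm{near}$.

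The only conceptual subtlety is the rewiring step, which at first sight might seem to threaten the inductive argument because RRT$^*$ actually deletes edges whereas RRG is monotone. However, because the claim is a subset (and not an equality) on the edge sets, and because the deletions happen only on the RRT$^*$ side while RRG only grows, the deletions can at worst strengthen the inclusion; they cannot falsify it. This is the main step requiring care, but once framed correctly it does not pose a genuine obstacle.
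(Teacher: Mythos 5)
Your induction is correct and is exactly the argument the paper implicitly relies on: the paper states Lemma~\ref{lemma:optrrtvertexedgeequivalence} without proof as ``the equivalent of Lemma~\ref{lemma:vertexedge},'' and your step-by-step comparison of the ${\tt Extend}$ procedures (identical vertex insertion logic, every edge added by RRT$^*$ being among those added by RRG after noting the symmetry of ${\tt ObstacleFree}$, and deletions only helping a subset claim) is the standard way to fill that in. No gaps.
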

From Lemma~\ref{lemma:optrrtvertexedgeequivalence} and Theorem~\ref{theorem:rrtrrgequivalency}, the
following theorem, which asserts the probabilistic completeness and the exponential decay of failure
probability of the RRT$^*$ algorithm, is immediate.
\begin{theorem}
  If there exists a feasible solution to Problem~\ref{problem:feasibility}, then $\lim_{i \to
    \infty} \PP \left( \left\{ {\cal V}^{\mathrm{RRT}^*}_i \cap X_\mathrm{goal} \neq \emptyset
    \right\} \right) = 1$. Moreover, if an attraction sequence ${\cal A}$ of length $k$ exists, then
  $\PP \left( \left\{ {\cal V}^{\mathrm{RRT}^*}_i \cap X_\mathrm{goal} = \emptyset \right\} \right)
  \le e^{-\frac{1}{2}(i\,p_k - 2\,k)}$.
\end{theorem}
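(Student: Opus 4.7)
The plan is to observe that this theorem is an essentially immediate consequence of the two results explicitly cited just before its statement, namely Lemma~\ref{lemma:optrrtvertexedgeequivalence} and Theorem~\ref{theorem:rrtrrgequivalency}. Both claims in the theorem are statements about the vertex set $\mathcal{V}^{\mathrm{RRT}^*}_i$ and its intersection with $X_{\mathrm{goal}}$, so edge-set differences between RRT$^*$ and RRG are irrelevant; only vertex-set equality matters.

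Concretely, I would first invoke Lemma~\ref{lemma:optrrtvertexedgeequivalence} to write $\mathcal{V}^{\mathrm{RRT}^*}_i(\omega) = \mathcal{V}^{\mathrm{RRG}}_i(\omega)$ for every iteration $i \in \mathbb{N}$ and every sample point $\omega \in \Omega$. This pointwise equality of random vertex sets lifts to equality of the induced events, giving
\[
\bigl\{\mathcal{V}^{\mathrm{RRT}^*}_i \cap X_{\mathrm{goal}} \neq \emptyset\bigr\} \;=\; \bigl\{\mathcal{V}^{\mathrm{RRG}}_i \cap X_{\mathrm{goal}} \neq \emptyset\bigr\},
\]
and similarly for the complementary events $\{\mathcal{V}^{\mathrm{RRT}^*}_i \cap X_{\mathrm{goal}} = \emptyset\}$. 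Hence the probabilities of the RRT$^*$ events are identically equal to the probabilities of the corresponding RRG events, not merely bounded by them.

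I would then close the proof by applying Theorem~\ref{theorem:rrtrrgequivalency} directly. Its first conclusion, $\lim_{i\to\infty}\PP(\{\mathcal{V}^{\mathrm{RRG}}_i \cap X_{\mathrm{goal}} \neq \emptyset\}) = 1$ under the existence of a feasible solution to Problem~\ref{problem:feasibility}, transfers verbatim to $\mathrm{RRT}^*$, establishing probabilistic completeness. Its second conclusion, the bound $\PP(\{\mathcal{V}^{\mathrm{RRG}}_i \cap X_{\mathrm{goal}} = \emptyset\}) \le e^{-\frac{1}{2}(i\,p_k - 2k)}$ under the existence of an attraction sequence of length $k$, likewise transfers to give the stated exponential decay for $\mathrm{RRT}^*$.

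There is no real obstacle to overcome here; the substantive content has already been absorbed into Lemma~\ref{lemma:optrrtvertexedgeequivalence}. If anything, the only thing worth double-checking is that the Lemma indeed asserts the pointwise (sure) equality of vertex sets rather than mere equality in distribution, since it is this stronger statement that lets us equate the events themselves rather than only their probabilities. Given that the Lemma is stated with the explicit quantifier ``for all $\omega \in \Omega$,'' this verification is immediate.
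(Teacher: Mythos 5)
Your proposal is correct and matches the paper's own reasoning exactly: the paper declares this theorem ``immediate'' from Lemma~\ref{lemma:optrrtvertexedgeequivalence} and Theorem~\ref{theorem:rrtrrgequivalency}, which is precisely the two-step argument you give. Your added remark that the surely-pointwise equality of vertex sets (for all $\omega \in \Omega$) is what licenses equating the events, not just their probabilities, is the right thing to check and is correctly verified.
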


\subsection{Asymptotic Optimality}

Let ${\cal Y}^{\mathrm{RRT}^*}_i$ be a random variable that denotes the cost of a minimum cost path in the
tree maintained by the RRT$^*$ algorithm, at the end of iteration $i$. The following theorem ensures
the asymptotic optimality of the RRT$^*$ algorithm.
\begin{theorem}
  Let Assumptions~\ref{assumption:additivecost}, \ref{assumption:continuouscost}, and
  \ref{assumption:obstacles} hold. Then, the cost of the minimum cost path in the RRT$^*$ converges
  to $c^*$ almost surely, i.e., $\PP \left(\{\lim_{i \to \infty} {\cal Y}^{\mathrm{RRT}^*}_i =
    c^*\}\right) = 1$.
\end{theorem}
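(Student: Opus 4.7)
The plan is to reduce the RRT$^*$ asymptotic optimality claim to the corresponding RRG statement (Theorem~\ref{theorem:rrgoptimality}), exploiting Lemma~\ref{lemma:optrrtvertexedgeequivalence}, which tells us that RRT$^*$ and RRG share the same vertex set at every iteration. The tension to resolve is that RRT$^*$ maintains only a tree, so the unique tree-path cost to a vertex can be worse than the shortest RRG-graph-path cost to the same vertex; the parent-selection and rewiring passes in Algorithm~\ref{algorithm:optrrt} must be shown to close this gap in the limit.

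First I would record a monotonicity property: for every vertex $v$, the value ${\tt Cost}(v)$ is non-increasing in the iteration count, since a parent is replaced only when doing so strictly decreases the cumulative cost, and any such improvement propagates implicitly to the vertex's tree descendants. Next I would establish a local bound: if $v$ is inserted at iteration $j$ and a vertex $u$ lies in the $X_\mathrm{near}$ ball computed at that iteration, then lines 7--14 of Algorithm~\ref{algorithm:optrrt} enforce ${\tt Cost}(v) \le {\tt Cost}(u) + c({\tt Line}(u,v))$ at time $j$, and by the monotonicity this inequality remains ``available'' at all later iterations. A symmetric statement coming from the rewiring pass (lines 15--19) covers the case in which $u$ is inserted after $v$.

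Then I would import the chain construction already used inside the proof of Theorem~\ref{theorem:rrgoptimality}: for every $\varepsilon > 0$ one obtains a (random) sequence of chains $x_\mathrm{init} = v_0, v_1, \ldots, v_{k_n}$ of RRG vertices with $v_{k_n} \in X_\mathrm{goal}$, where consecutive vertices lie in each other's $X_\mathrm{near}$ balls and $\sum_{j=1}^{k_n} c({\tt Line}(v_{j-1}, v_j))$ converges almost surely to $c^*$. Induction on $j$ using the local bound would then give ${\tt Cost}(v_{k_n}) \le \sum_{j=1}^{k_n} c({\tt Line}(v_{j-1}, v_j))$ after sufficiently many iterations. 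Together with the surely non-increasing, lower-bounded (by $c^*$) nature of $\{{\cal Y}^{\mathrm{RRT}^*}_i\}_{i \in \mathbb{N}}$, this delivers $\PP(\{\lim_{i \to \infty} {\cal Y}^{\mathrm{RRT}^*}_i = c^*\}) = 1$.

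The hard part is making the induction step of the chain argument rigorous. The parent-selection bound only gives ${\tt Cost}(v_j) \le {\tt Cost}(v_{j-1}) + c({\tt Line}(v_{j-1}, v_j))$ at the single iteration at which $v_j$ is inserted, whereas ${\tt Cost}(v_{j-1})$ at that moment may still be strictly larger than its eventual value. The fix will be to exploit the rewiring pass: every later sample whose $X_\mathrm{near}$ ball contains $v_j$ triggers a re-evaluation of $v_j$'s parent against the current and by then smaller ${\tt Cost}(v_{j-1})$. A Borel--Cantelli argument analogous to the one used inside the proof of Theorem~\ref{theorem:rrgoptimality}, and resting on Assumption~\ref{assumption:obstacles} together with uniform sampling, will show that almost surely infinitely many such triggering samples do appear, so that the cost bounds cascade all the way down the chain and match the RRG bound in the limit.
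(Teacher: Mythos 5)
The paper states this theorem without proof (the Appendix covers only Theorems~\ref{theorem:rrtoptimality} and \ref{theorem:rrgoptimality} and Lemma~\ref{lemma:complexity_of_obstaclefree}), so there is no official argument to compare against; your proposal must stand on its own. Its overall strategy --- reduce to the RRG setting via Lemma~\ref{lemma:optrrtvertexedgeequivalence}, reuse the ball-covering chain from the proof of Theorem~\ref{theorem:rrgoptimality}, and propagate a local cost inequality along the chain --- is the right one, and you correctly isolate the central obstruction: the chain vertices $v_1,\dots,v_{k_n}$ are inserted in an arbitrary order, so the inequality ${\tt Cost}(v_j)\le {\tt Cost}(v_{j-1})+c({\tt Line}(v_{j-1},v_j))$, established at the iteration when the later of the two is inserted, involves the value of ${\tt Cost}(v_{j-1})$ \emph{at that time}; that value may subsequently decrease without ${\tt Cost}(v_j)$ following it unless $v_j$ happens to be a tree descendant of $v_{j-1}$.

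The proposed fix does not close this gap. First, it misreads the algorithm: when a later sample $x_{\mathrm{new}}$ arrives with $v_j\in X_{\mathrm{near}}$, the rewiring pass (Lines~\ref{rrtstar:extend_back_to_tree_start}--\ref{rrtstar:extend_back_to_tree_end} of Algorithm~\ref{algorithm:optrrt}) compares ${\tt Cost}(v_j)$ only against ${\tt Cost}(x_{\mathrm{new}})+c({\tt Line}(x_{\mathrm{new}},v_j))$; it never re-evaluates $v_j$'s parent ``against the current and by then smaller ${\tt Cost}(v_{j-1})$.'' For such a trigger to help, the fresh sample must itself already carry a near-optimal cost-to-come, which is what you are trying to prove, so the argument as stated is circular. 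Second, ``almost surely infinitely many triggering samples appear'' is not the event you need: the improvements must propagate down the chain \emph{in order} (link $1$ must settle before a fresh sample can transmit the improvement across link $2$, and so on), the number of links $K_i$ grows like $(i/\log i)^{1/d}$, and each relay through a fresh sample adds a detour whose accumulated cost must be controlled uniformly along the chain. The standard repair is to build the ordering directly into the probabilistic estimate: partition the iterations into rounds, require that ball $B_{i,k}$ receive a sample during round $k$ so that the chain is constructed root-outward and each insertion-time inequality already refers to a settled parent cost, and show that the probability that this ordered covering fails is summable. That is a genuinely different, and more delicate, counting argument than the unordered covering bound of Lemma~\ref{lemma:finite_A_given_C}, and your sketch does not yet contain it.
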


\subsection{Computational Complexity}

Let ${\cal M}^{\mathrm{RRT}^*}_i$ be the number of steps performed by the RRT$^*$ algorithm in iteration
$i$. The following theorem follows from Lemma~\ref{lemma:optrrtvertexedgeequivalence} and
Theorem~\ref{theorem:complexitymain}.
\begin{theorem}  %
  Under the assumptions of Theorem~\ref{theorem:complexitymain}, there exists a constant $\phi \in
  \mathbb{R}_{>0}$ such that
  $$
  \limsup_{i \to \infty} \EE \left[ \frac{{\cal M}^{\mathrm{RRT}^*}_i}{{\cal M}^\mathrm{RRT}_i}
  \right] \le \phi.
  $$
\end{theorem}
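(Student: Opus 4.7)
The plan is to reduce the claim to Theorem~\ref{theorem:complexitymain} by showing that each iteration of RRT$^*$ costs at most a constant factor more than the corresponding iteration of RRG. The starting point is Lemma~\ref{lemma:optrrtvertexedgeequivalence}, which guarantees that the vertex sets maintained by RRT$^*$ and RRG coincide sample-by-sample, so in particular ${\cal N}_i$ is the same random variable for both algorithms, and the calls to ${\tt Nearest}$ and ${\tt \NearNodes}$ return sets of identical size.

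First, I would compare the two ${\tt Extend}$ routines line by line. Both invoke ${\tt Sample}$, ${\tt Steer}$, and ${\tt Nearest}$ exactly once per iteration, and both invoke ${\tt \NearNodes}$ at most once with the same radius $r_n$; hence the contributions of these primitives to ${\cal M}^{\mathrm{RRT}^*}_i$ and ${\cal M}^\mathrm{RRG}_i$ differ by at most a constant factor. The ${\tt ObstacleFree}$ calls are slightly more numerous in RRT$^*$, since the rewiring loop can re-test connections that the choose-parent loop already examined, but the total number of ${\tt ObstacleFree}$ calls per iteration is still at most $2|X_\mathrm{near}|+1$ versus $|X_\mathrm{near}|+1$ for RRG --- again a factor of two.

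Next, I would argue that the genuinely new work inside RRT$^*$ --- the cost comparisons, the ${\tt Parent}$ lookup, and the edge insertions/deletions that accomplish rewiring --- adds only $O(1)$ simple operations per element of $X_\mathrm{near}$. For this to be legitimate, ${\tt Cost}(v)$ must be readable in constant time; this is the standard implementation in which each node stores its accumulated cost and the field is updated incrementally whenever its parent edge changes, so the constant-time assumption is consistent with the simple-operations model used throughout Section~\ref{section:complexity}. The line cost $c({\tt Line}(\cdot,\cdot))$ is likewise $O(1)$ in fixed dimension. Summing, there exists a deterministic constant $c_1$ such that ${\cal M}^{\mathrm{RRT}^*}_i(\omega) \le c_1\,{\cal M}^\mathrm{RRG}_i(\omega)$ pointwise in $\omega$.

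With this pointwise bound, the conclusion follows by monotonicity of expectation:
$$\EE\!\left[\frac{{\cal M}^{\mathrm{RRT}^*}_i}{{\cal M}^\mathrm{RRT}_i}\right] \le c_1 \, \EE\!\left[\frac{{\cal M}^\mathrm{RRG}_i}{{\cal M}^\mathrm{RRT}_i}\right],$$
and taking $\limsup_{i \to \infty}$ and invoking Theorem~\ref{theorem:complexitymain} yields the bound with $\phi := c_1 \phi_{\mathrm{Thm}~\ref{theorem:complexitymain}}$. The only real subtlety --- and hence the main obstacle --- is justifying that the bookkeeping required to support constant-time ${\tt Cost}$ queries and $O(1)$ rewiring does not itself blow up when propagated down a subtree; this is avoided here because the algorithm as written updates only the single parent edge of each $x_\mathrm{near}$, not the stored costs of its descendants, so every per-near-vertex action is genuinely $O(1)$ and the pointwise comparison to RRG goes through.
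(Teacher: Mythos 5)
Your proposal is correct and follows essentially the same route as the paper, which simply asserts that the theorem follows from Lemma~\ref{lemma:optrrtvertexedgeequivalence} together with Theorem~\ref{theorem:complexitymain}; you have filled in the implicit per-iteration accounting (identical vertex sets, at most a doubling of ${\tt ObstacleFree}$ calls, and $O(1)$ bookkeeping per near vertex under constant-time ${\tt Cost}$ and ${\tt Parent}$ access) that justifies the pointwise bound ${\cal M}^{\mathrm{RRT}^*}_i \le c_1\,{\cal M}^{\mathrm{RRG}}_i$ before invoking Theorem~\ref{theorem:complexitymain}.
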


\section{Simulations} \label{section:simulations}

This section is devoted to an experimental study of the algorithms. Three different problem
instances are considered and the RRT and RRT$^*$ algorithms are compared with respect to their running
time and cost of the solution achieved.
Both algorithms were implemented in C and run on a computer with 2.66 GHz processor and 4GB RAM
running the Linux operating system.

We consider three problem instances. In the first two, the cost function is the Euclidean path
length. The first scenario includes no obstacles. Both algorithms are run in a square environment.
The trees maintained by the algorithms are shown in Figure~\ref{figure:sim1} at several stages. The
figure illustrates that, in this case, the RRT algorithm does not improve the feasible solution to
converge to an optimum solution. On the other hand, running the RRT$^*$ algorithm further improves
the paths in the tree to lower cost ones. The convergence properties of the two algorithms are also
investigated in Monte-Carlo runs. Both algorithms were run for 20,000 iterations 500 times and the
cost of the best path in the trees were averaged for each iteration. The results are shown in
Figure~\ref{figure:sim1cost}, which shows that in the limit the RRT algorithm has cost very close to
a $\sqrt{2}$ factor the optimal solution (see~\cite{lavalle.kuffner.tech_rep09} for a similar result
in a deterministic setting), whereas the RRT$^*$ converges to the optimal solution. Moreover, the
variance over different RRT runs approaches 2.5, while that of the RRT$^*$ approaches zero. Hence,
almost all RRT$^*$ runs have the property of convergence to an optimal solution, as expected.

In the second scenario, both algorithms are run in an environment in presence of obstacles. In
Figure~\ref{figure:sim2}, the trees maintained by the algorithms are shown after 20,000 iterations.
The tree maintained by the RRT$^*$ algorithm is also shown in Figure~\ref{figure:sim2optrrt} in
different stages. It can be observed that the RRT$^*$ first rapidly explores the state space just
like the RRT. Moreover, as the number of samples increase, the RRT$^*$ improves its tree to include
paths with smaller cost and eventually discovers a path in a different homotopy class, which reduces
the cost of reaching the target considerably. Results of a Monte-Carlo study for this scenario is
presented in Figure~\ref{figure:sim2cost}. Both algorithms were run alongside up until 20,000
iterations 500 times and cost of the best path in the trees were averaged for each iteration. The
figures illustrate that all runs of the RRT$^*$ algorithm converges to the optimum, whereas the RRT
algorithm is about 1.5 of the optimal solution on average. The high variance in solutions returned
by the RRT algorithm stems from the fact that there are two different homotopy classes of paths that
reach the goal. If the RRT luckily converges to a path of the homotopy class that contains an
optimum solution, then the resulting path is relatively closer to the optimum than it is on average.
If, on the other hand, the RRT first explores a path of the second homotopy class, which is often
the case for this particular scenario, then the solution that RRT converges is generally around
twice the optimum.

Finally, in the third scenario, where no obstacles are present, the cost function is selected to be
the line integral of a function, which evaluates to 2 in the high cost region, 1/2 in the low cost
region, and 1 everywhere else. The tree maintained by the RRT$^*$ algorithm is shown after 20,000
iterations in Figure~\ref{figure:sim3}. Notice that the tree either avoids the high cost region or
crosses it quickly, and vice-versa for the low-cost region. (Incidentally, this behavior corresponds
to the well known Snell-Descartes law for refraction of light, see~\cite{Rowe.Alexander.00} for a
path-planning application.)

To compare the running time, both algorithms were run alongside in an environment with no obstacles
up until one million iterations. Figure~\ref{figure:sim0time}, shows the ratio of the running time
of RRT$^*$ and that of RRT versus the number of iterations averaged over 50 runs. As expected from
the complexity analysis of Section~\ref{section:complexity}, this ratio converges to a constant
value. The same figure is produced for the second scenario and provided in
Figure~\ref{figure:sim1time}.

\begin{figure*}[ht]
  \begin{center}
    \mbox{ \subfigure[]{\scalebox{0.25}{\includegraphics[trim = 3.1cm 1.8cm 2.5cm 1.4cm, clip =
          true]{./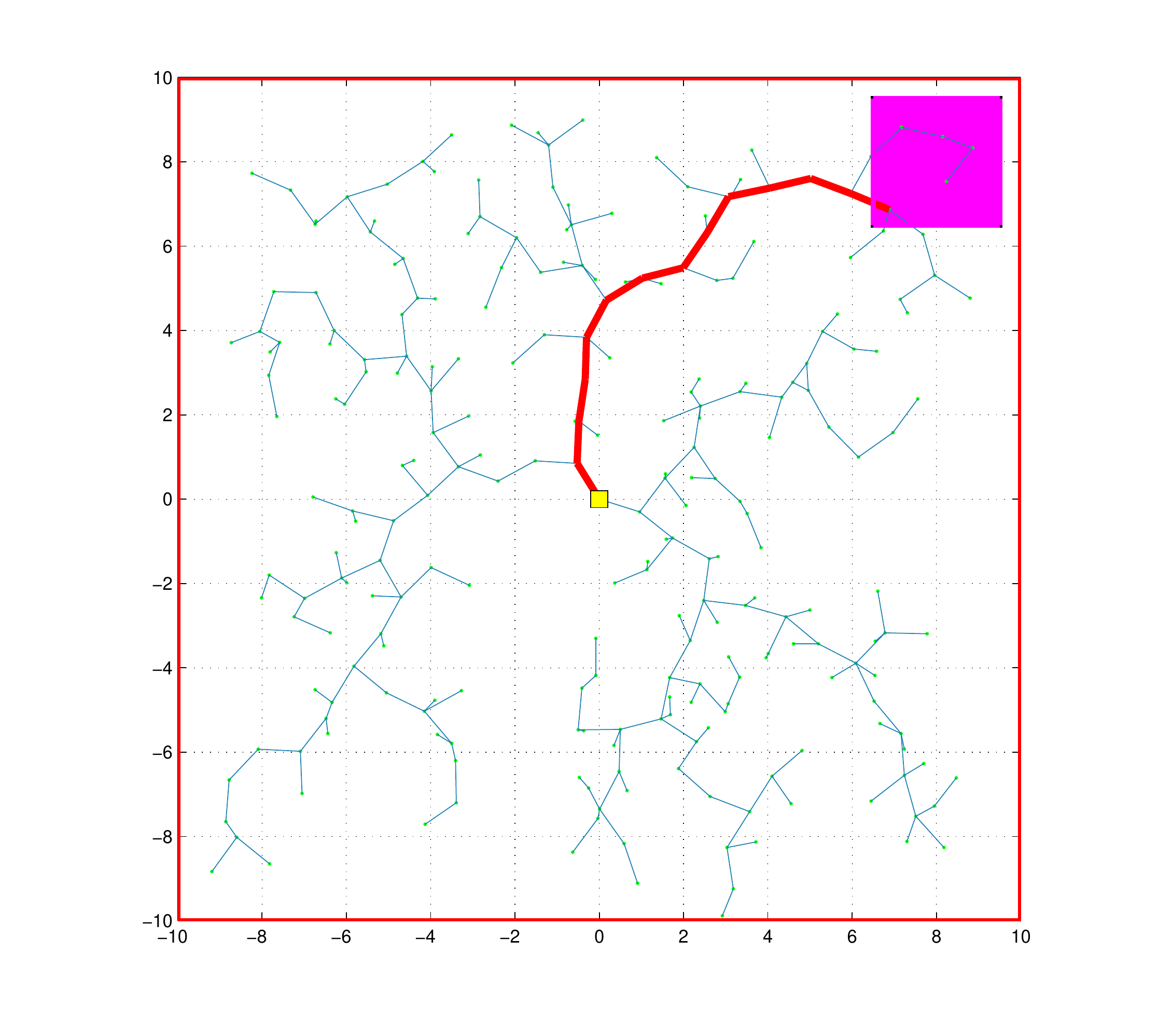}} 
        \label{sim1_rrt_10}}
      \subfigure[ ]{\scalebox{0.25}{\includegraphics[trim = 3.1cm 1.8cm 2.5cm 1.4cm, clip =
          true]{./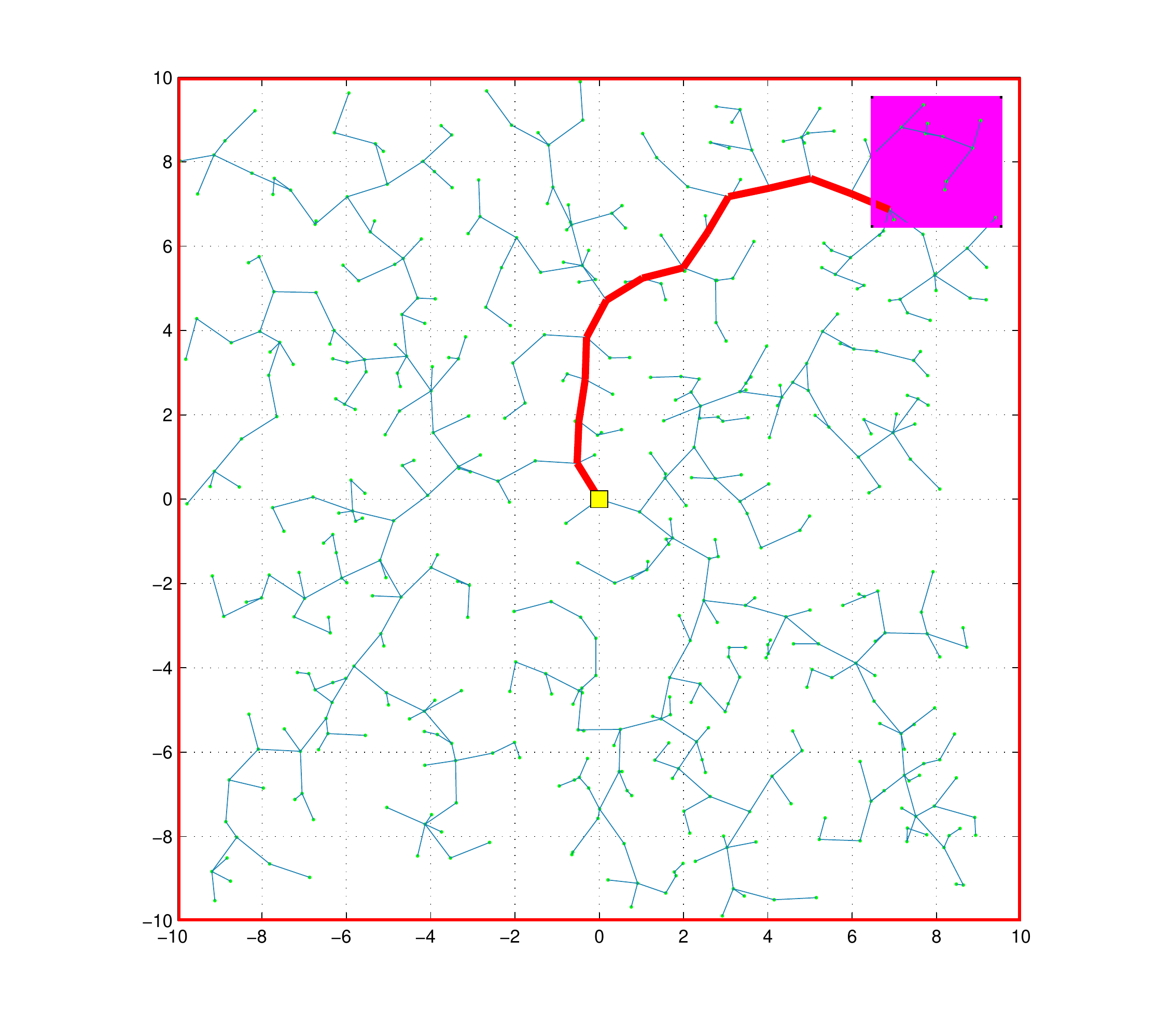}} 
        \label{sim1_optrrt_10}}
      \subfigure[ ]{\scalebox{0.25}{\includegraphics[trim = 3.1cm 1.8cm 2.5cm 1.4cm, clip =
          true]{./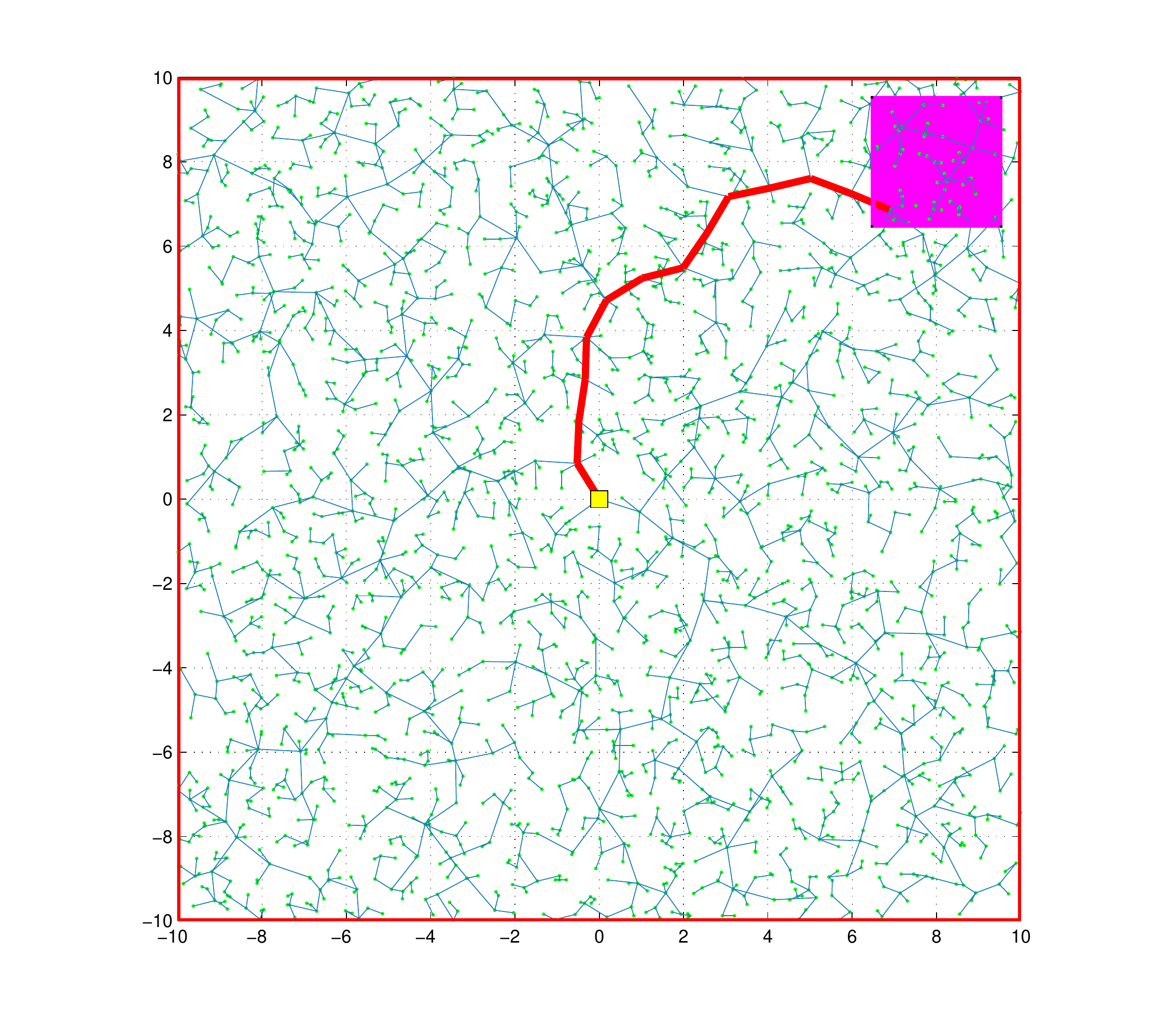}} 
        \label{sim1_optrrt_10}}
      \subfigure[ ]{\scalebox{0.25}{\includegraphics[trim = 3.1cm 1.8cm 2.5cm 1.4cm, clip =
          true]{./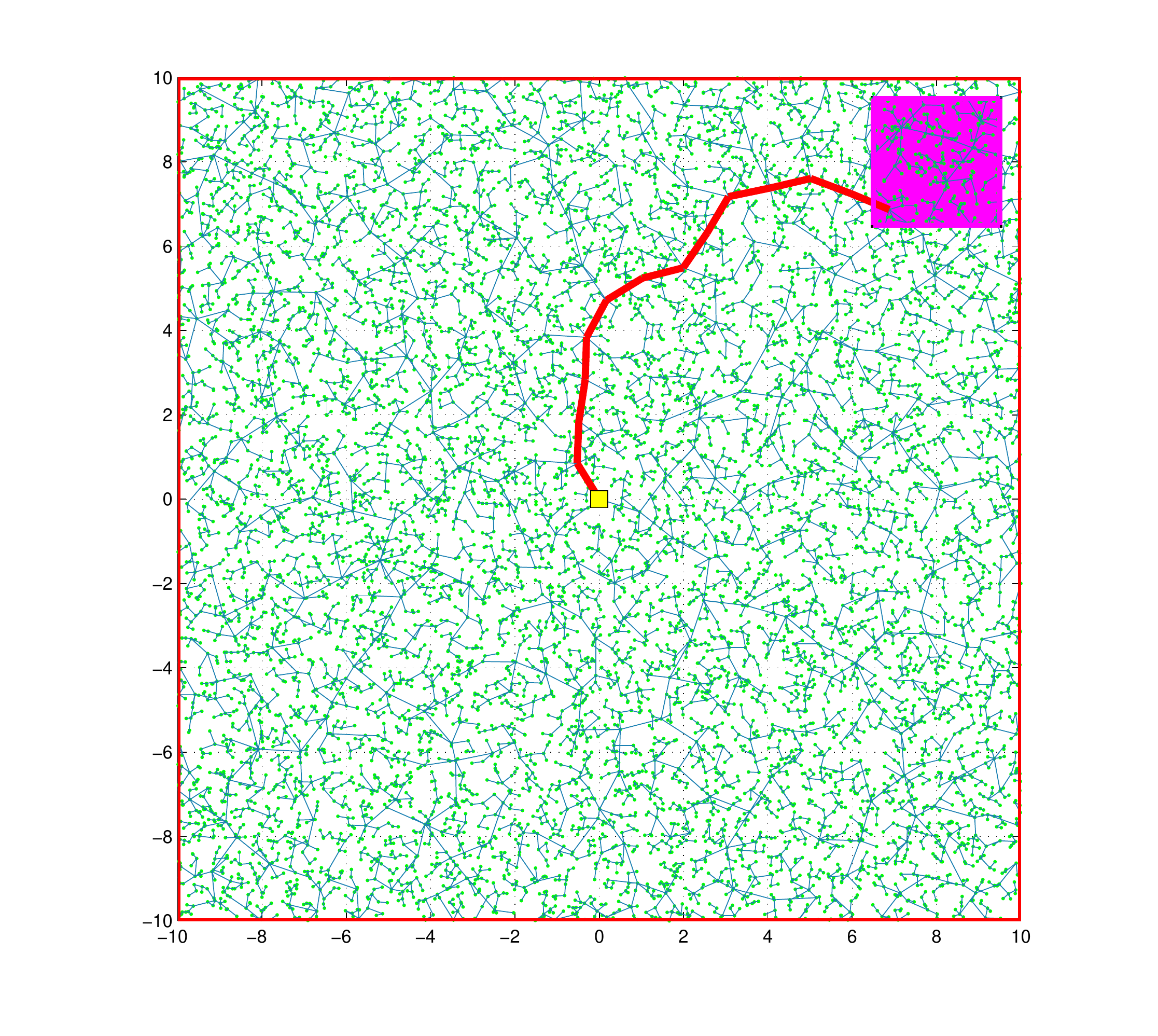}} 
        \label{sim1_optrrt_10}}}
    \mbox{ \subfigure[]{\scalebox{0.25}{\includegraphics[trim = 3.1cm 1.8cm 2.5cm 1.4cm, clip =
          true]{./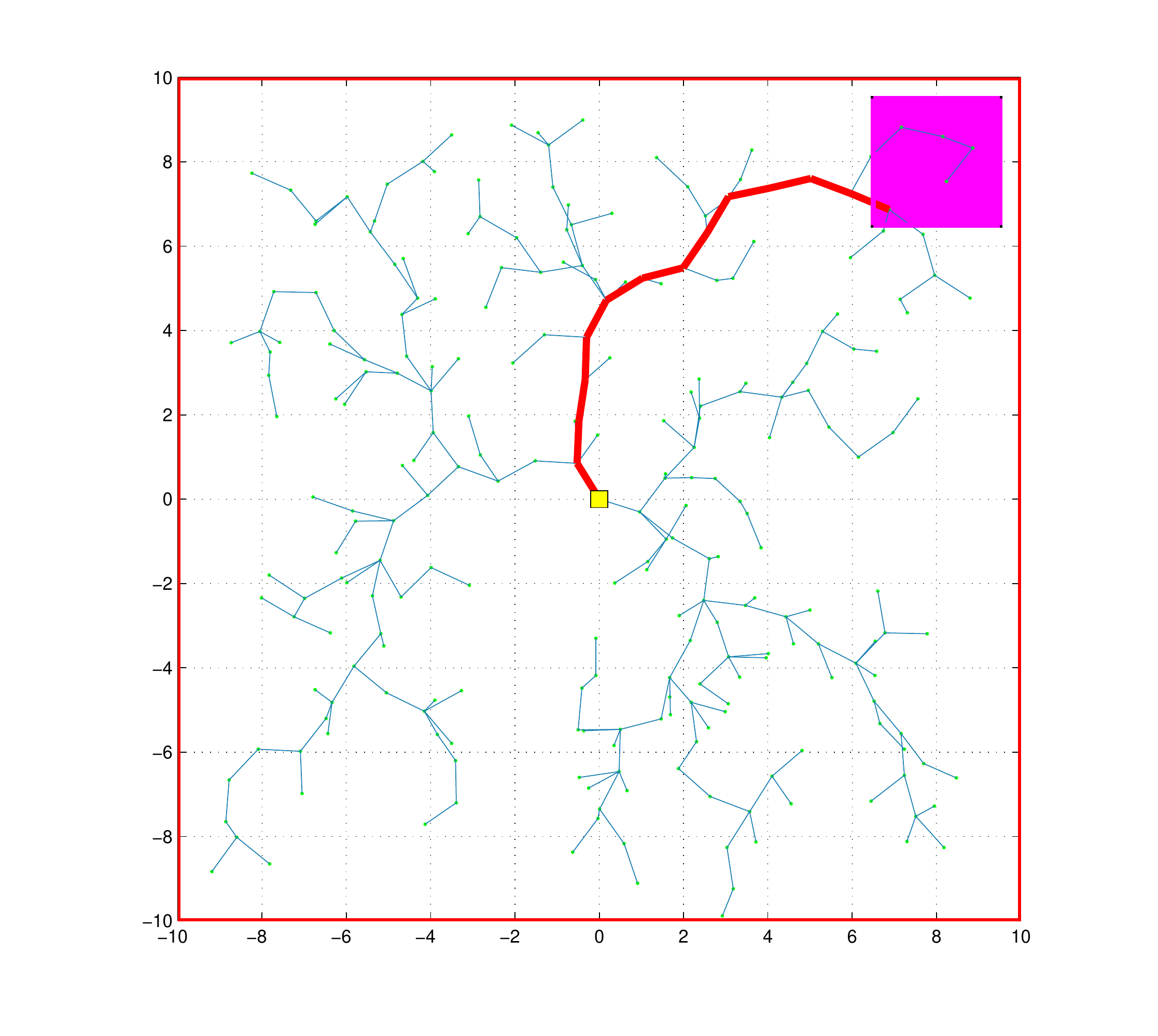}}
        \label{sim1_rrt_10}} \subfigure[ ]{\scalebox{0.25}{\includegraphics[trim = 3.1cm 1.8cm
          2.5cm 1.4cm, clip =
          true]{./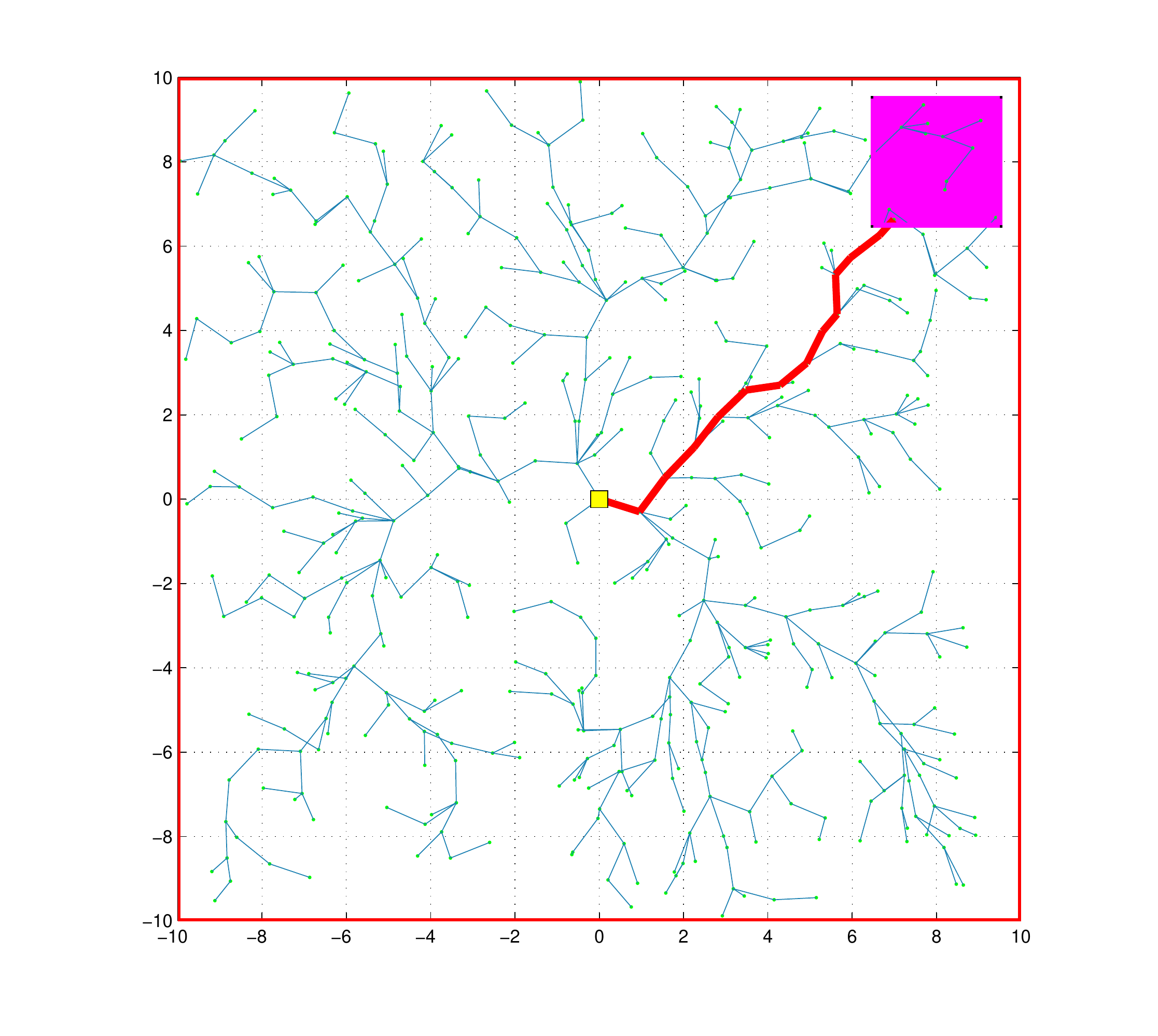}}
        \label{sim1_optrrt_10}} \subfigure[ ]{\scalebox{0.25}{\includegraphics[trim = 3.1cm 1.8cm
          2.5cm 1.4cm, clip =
          true]{./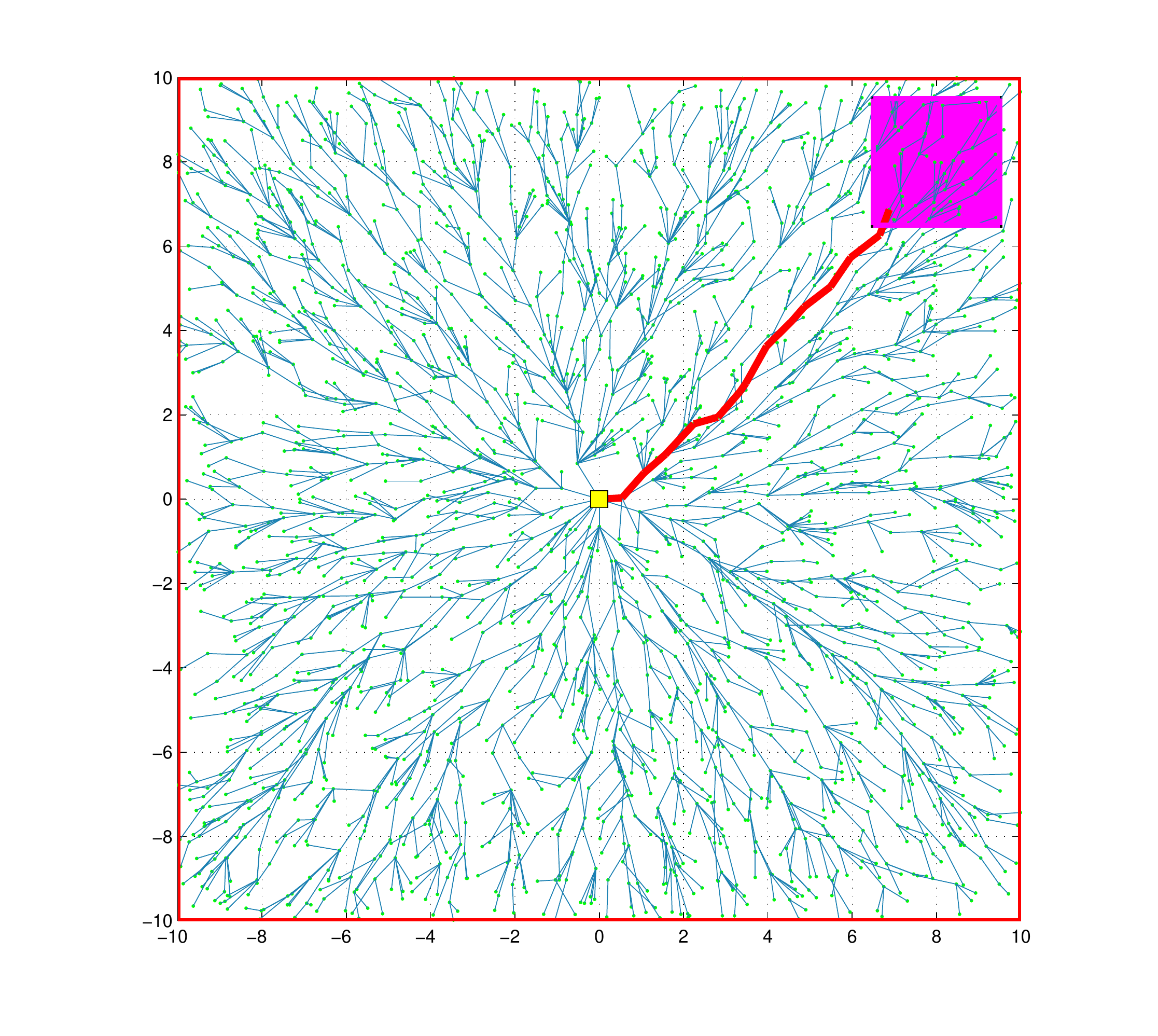}}
        \label{sim1_optrrt_10}} \subfigure[ ]{\scalebox{0.25}{\includegraphics[trim = 3.1cm 1.8cm
          2.5cm 1.4cm, clip =
          true]{./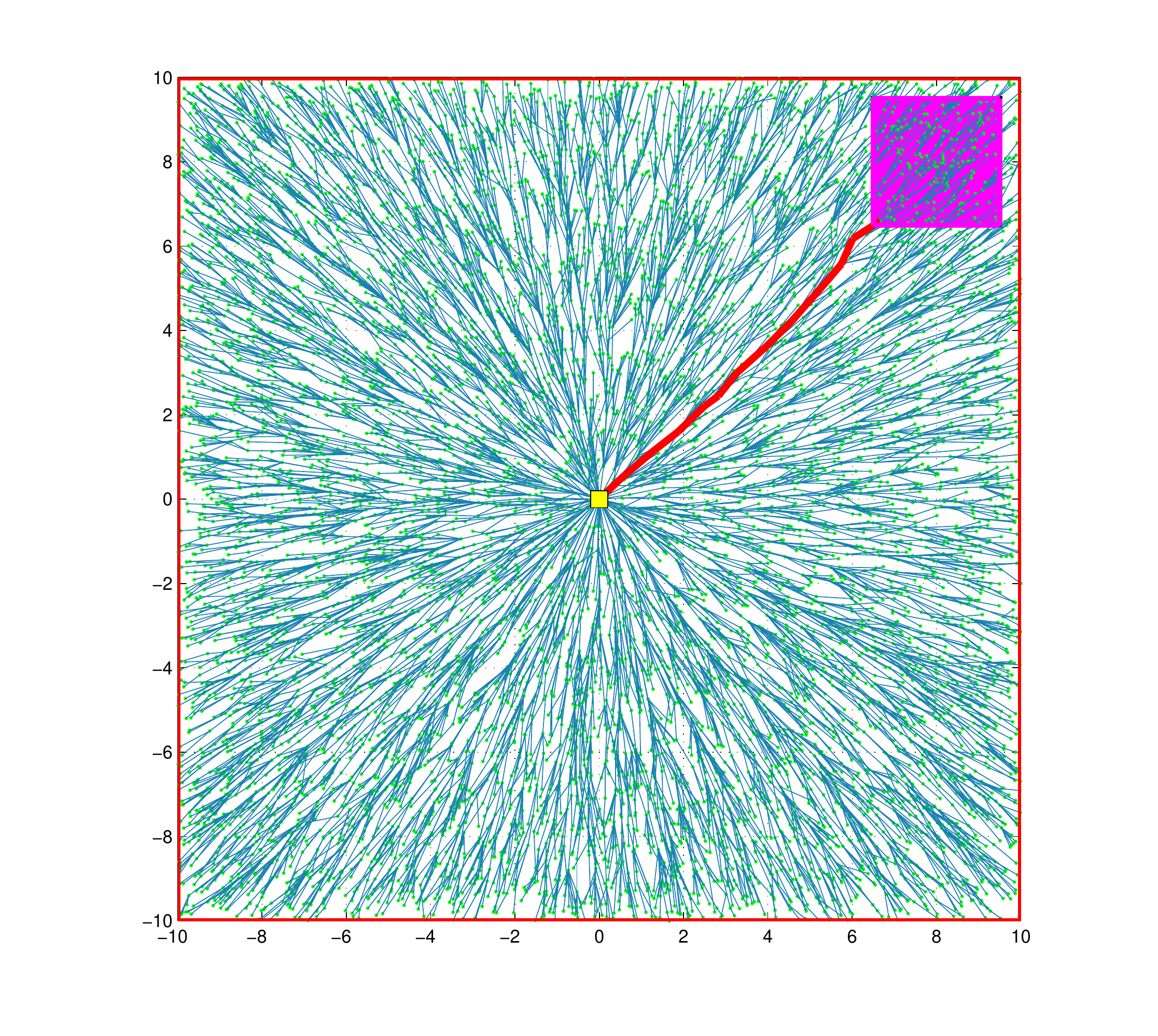}}
        \label{sim1_optrrt_10}}} \mbox{ \subfigure[]{\scalebox{0.51}{\includegraphics[trim = 3.1cm
          1.8cm 2.5cm 1.4cm, clip =
          true]{./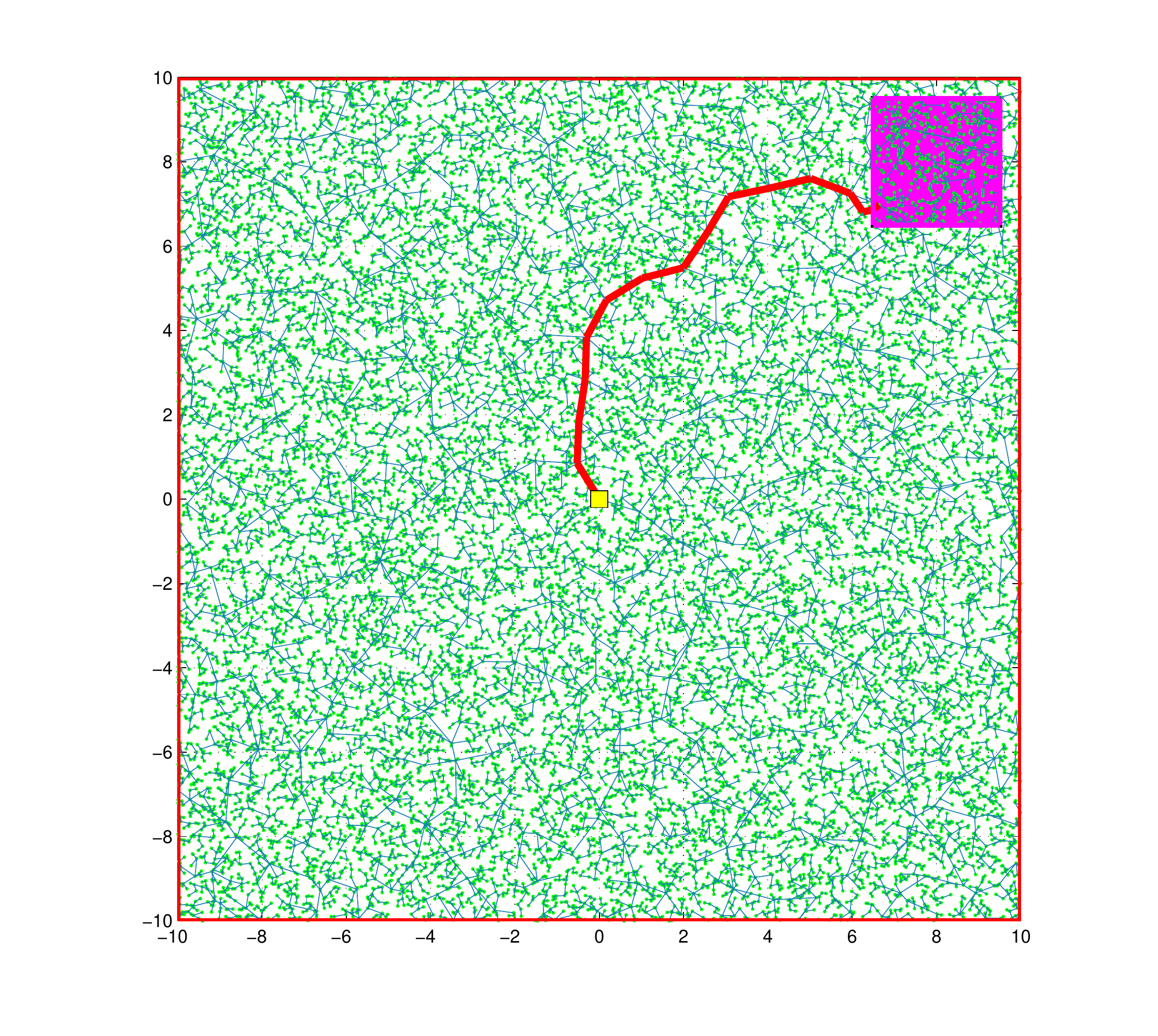}} \label{sim1_rrt_20}}
      \subfigure[ ]{\scalebox{0.51}{\includegraphics[trim = 3.1cm 1.8cm 2.5cm 1.4cm, clip =
          true]{./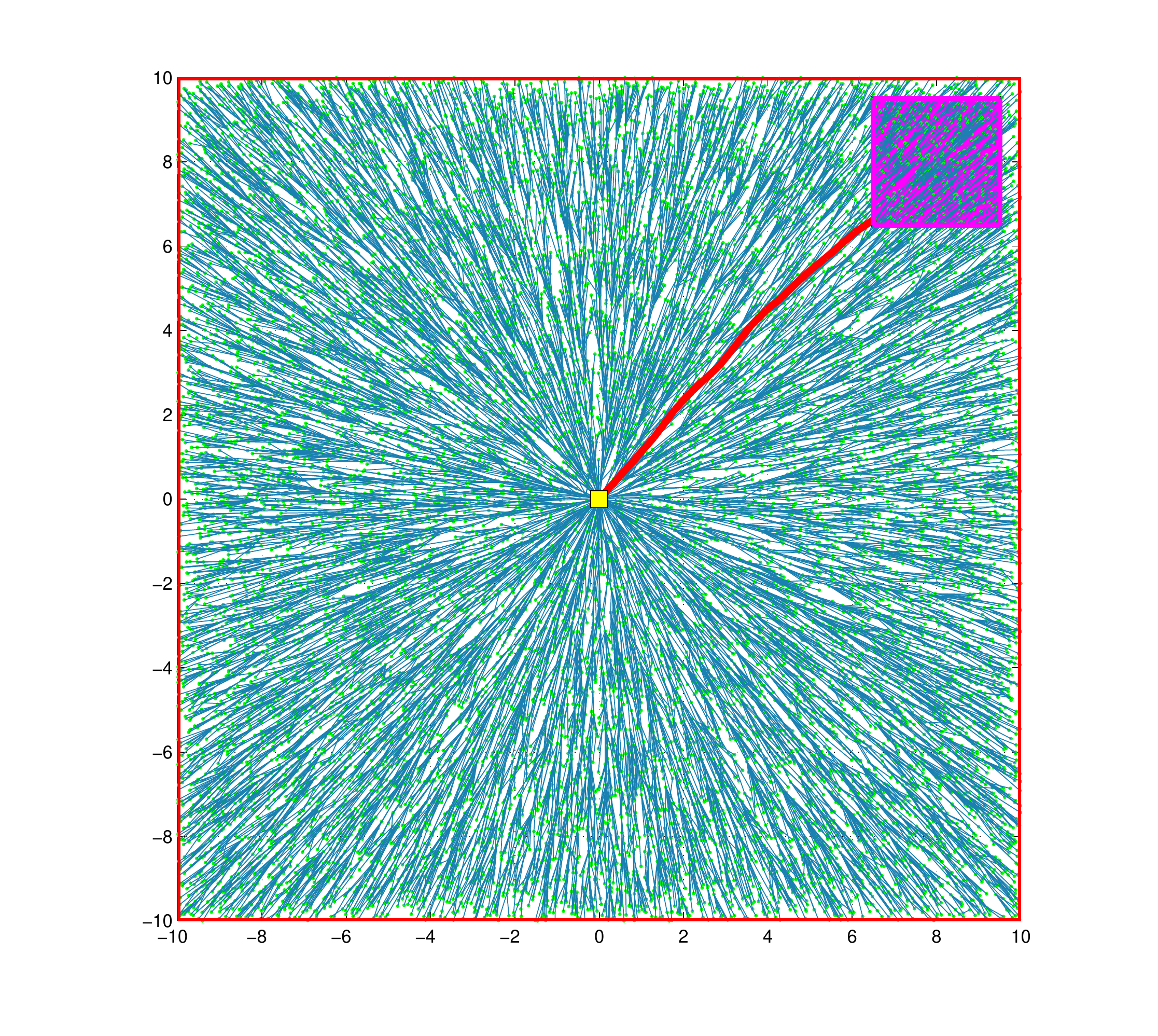}} 
        \label{sim1_optrrt_20}}
    }
    \caption{A Comparison of the RRT$^*$ and RRT algorithms on a simulation example with no
      obstacles. Both algorithms were run with the same sample sequence. Consequently, in this case,
      the vertices of the trees at a given iteration number are the same for both of the algorithms;
      only the edges differ. The edges formed by the RRT$^*$ algorithm are shown in (a)-(d) and (j),
      whereas those formed by the RRT algorithm are shown in (e)-(h) and (i). The tree snapshots
      (a), (e) contain 250 vertices, (b), (f) 500 vertices, (c), (g) 2500 vertices, (d), (h) 10,000 vertices and
      (i), (j) 20,000 vertices. The goal regions are shown in magenta (in upper right). The best paths
      that reach the target in all the trees are highlighted with red.}
    \label{figure:sim1}
  \end{center}
\end{figure*}

\begin{figure}[ht]
  \begin{center}
    \mbox{ \subfigure[]{\scalebox{0.4}{\includegraphics[trim = 2.25cm 0.75cm 1.55cm 0.4cm, clip =
          true]{./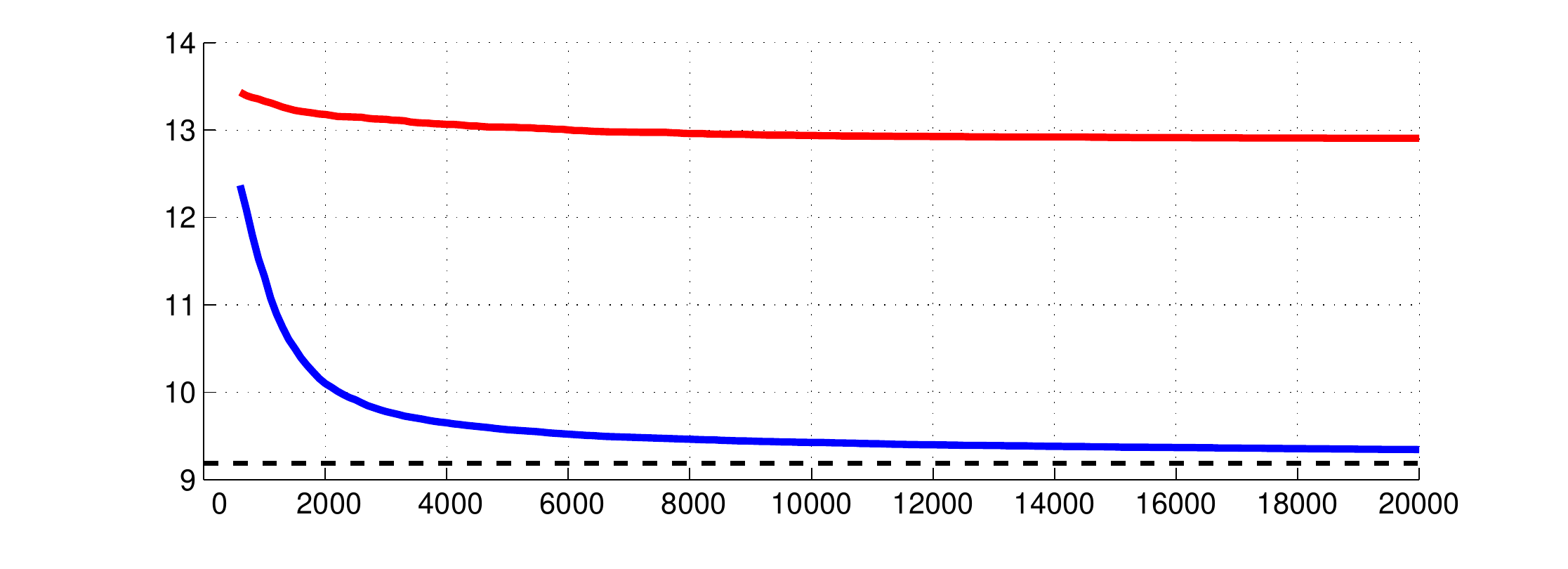}} \label{sim0_mc_cost}}}
    \mbox{ \subfigure[ ]{\scalebox{0.4}{\includegraphics[trim = 2.25cm 0.75cm 1.55cm 0.4cm, clip
          = true
          ]{./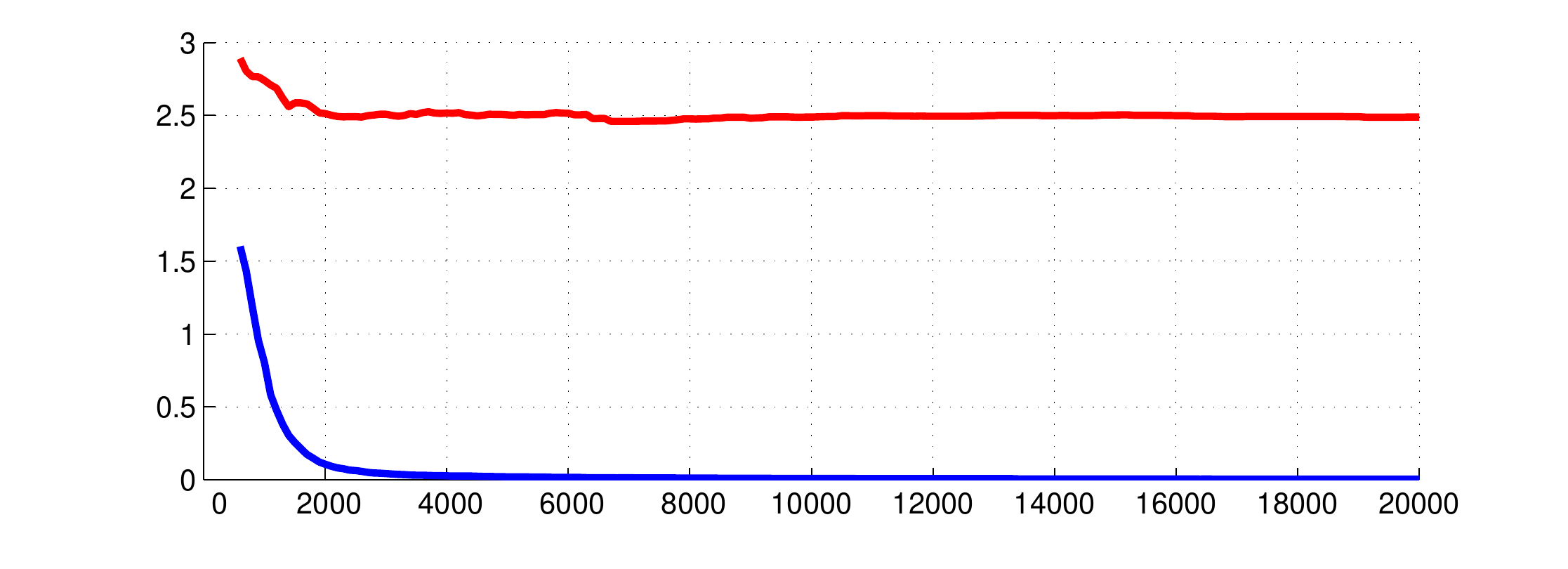}} \label{sim0_mc_cost_var}}}
    \caption{The cost of the best paths in the RRT (shown in red) and the RRT$^*$ (shown in blue)
      plotted against iterations averaged over 500 trials in (a). The optimal cost is shown in
      black. The variance of the trials is shown in (b).}
    \label{figure:sim1cost}
  \end{center}
\end{figure}

\begin{figure*}[ht]
  \begin{center}
    \mbox{ \subfigure[]{\scalebox{0.65}{\includegraphics[trim = 4.55cm 7.6cm 4.05cm 7.35cm, clip =
          true]{./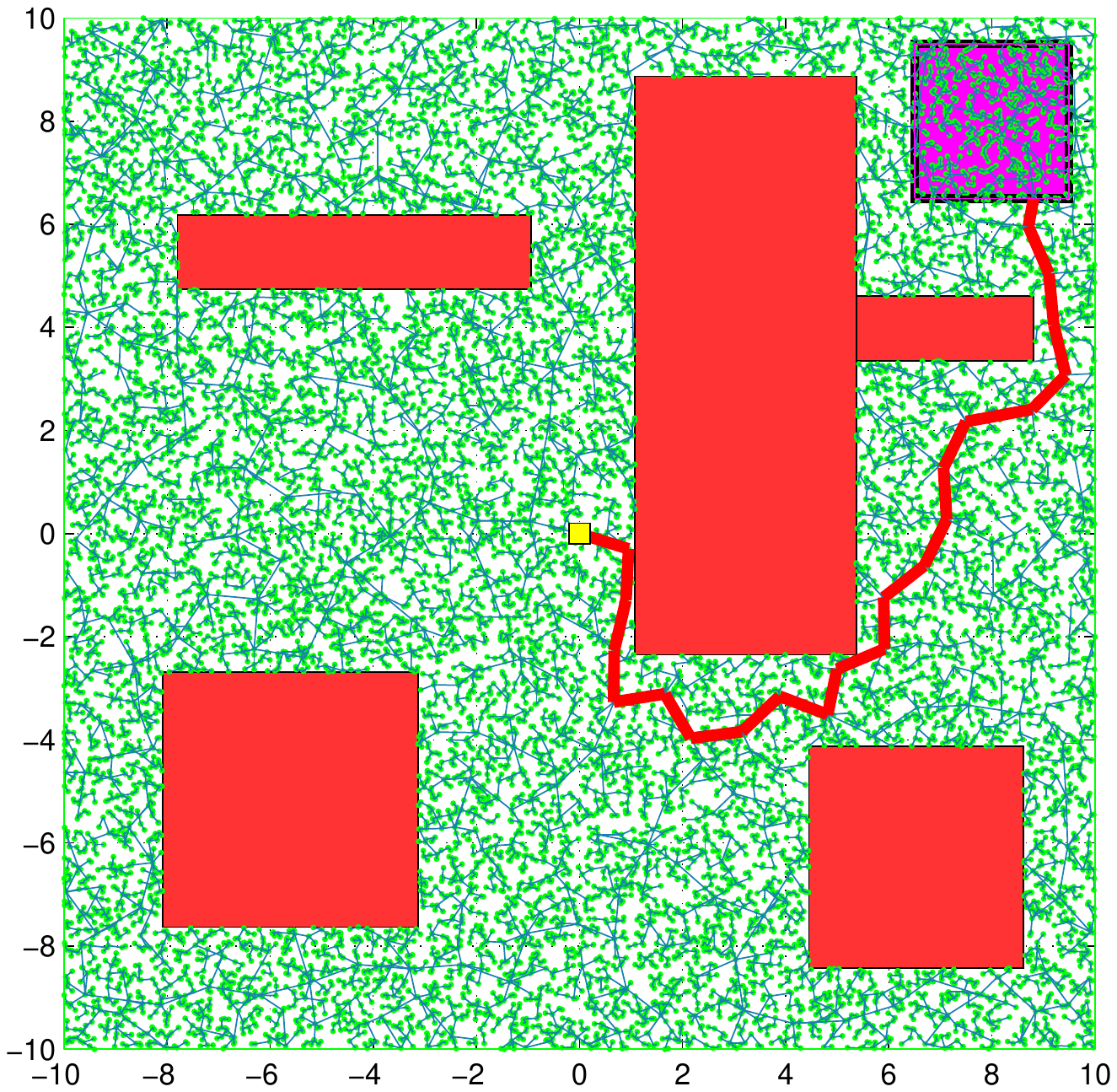}} \label{sim2_rrt_10}} \subfigure[
      ]{\scalebox{0.65}{\includegraphics[trim = 4.55cm 7.6cm 4.05cm 7.35cm, clip =
          true]{./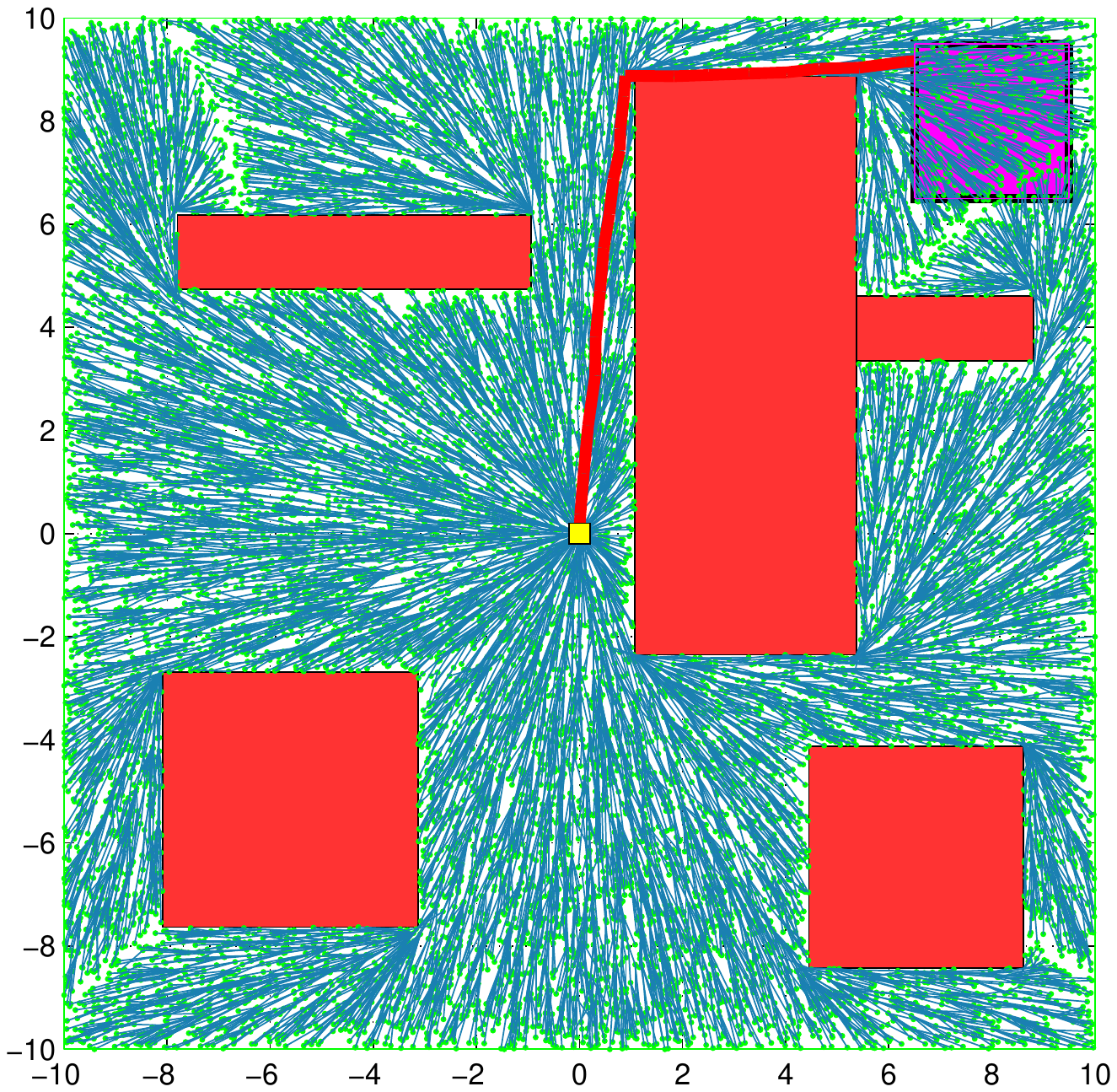}} \label{sim2_optrrt_10}} }
    \caption{A Comparison of the RRT (shown in (a)) and RRT$^*$ (shown in (b)) algorithms on a
      simulation example with obstacles. Both algorithms were run with the same sample sequence
      for 20,000 samples. The cost of best path in the RRT and the RRG were 21.02 and 14.51,
      respectively.}
    \label{figure:sim2}
  \end{center}
\end{figure*}

\begin{figure*}[ht]
  \begin{center}
    \mbox{ \subfigure[]{\scalebox{0.45}{\includegraphics[trim = 4.55cm 7.6cm 4.05cm 7.35cm, clip =
          true]{./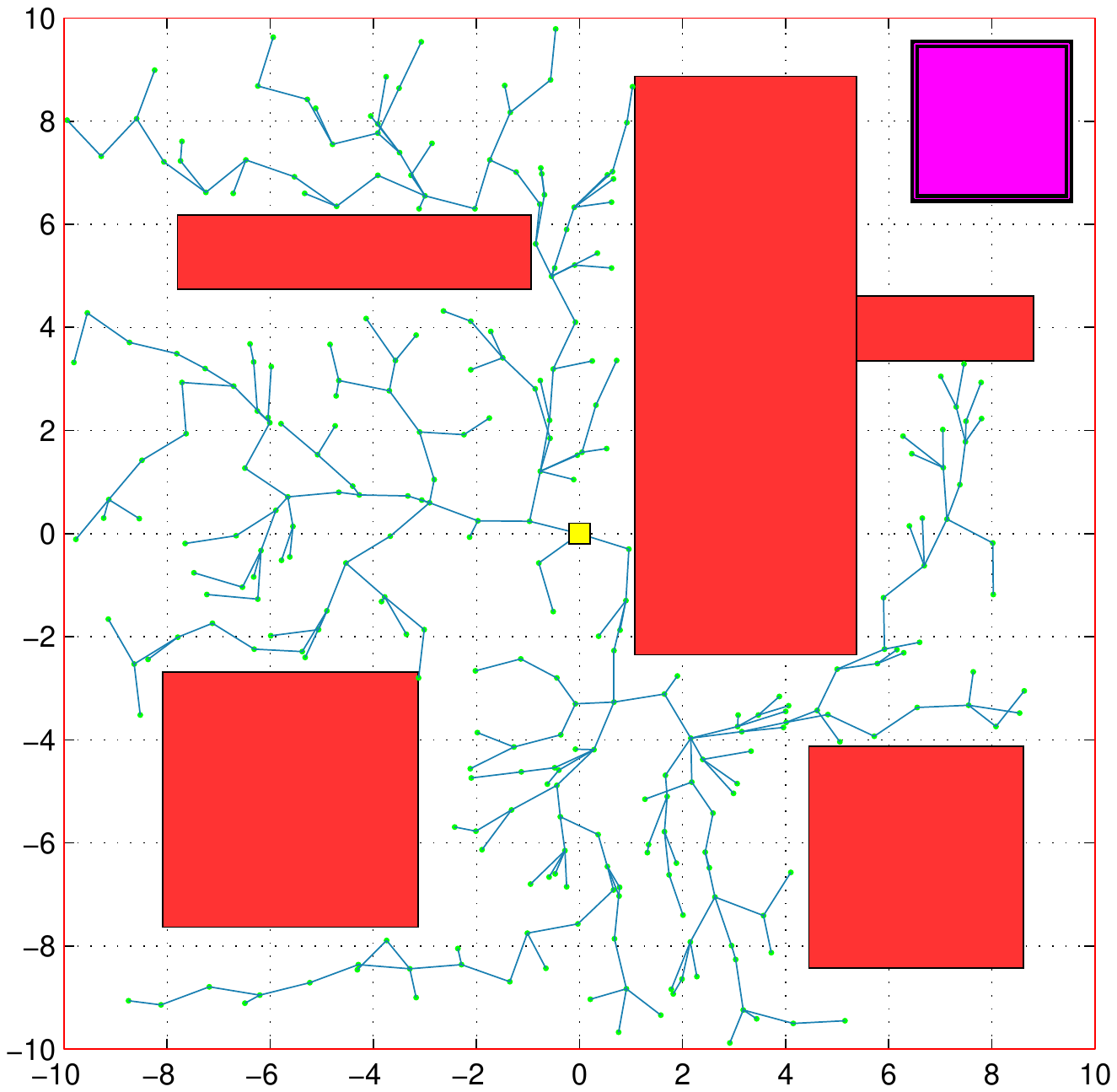}} \label{sim1_rrt_10}} \subfigure[
      ]{\scalebox{0.45}{\includegraphics[trim = 4.55cm 7.6cm 4.05cm 7.35cm, clip = true
          ]{./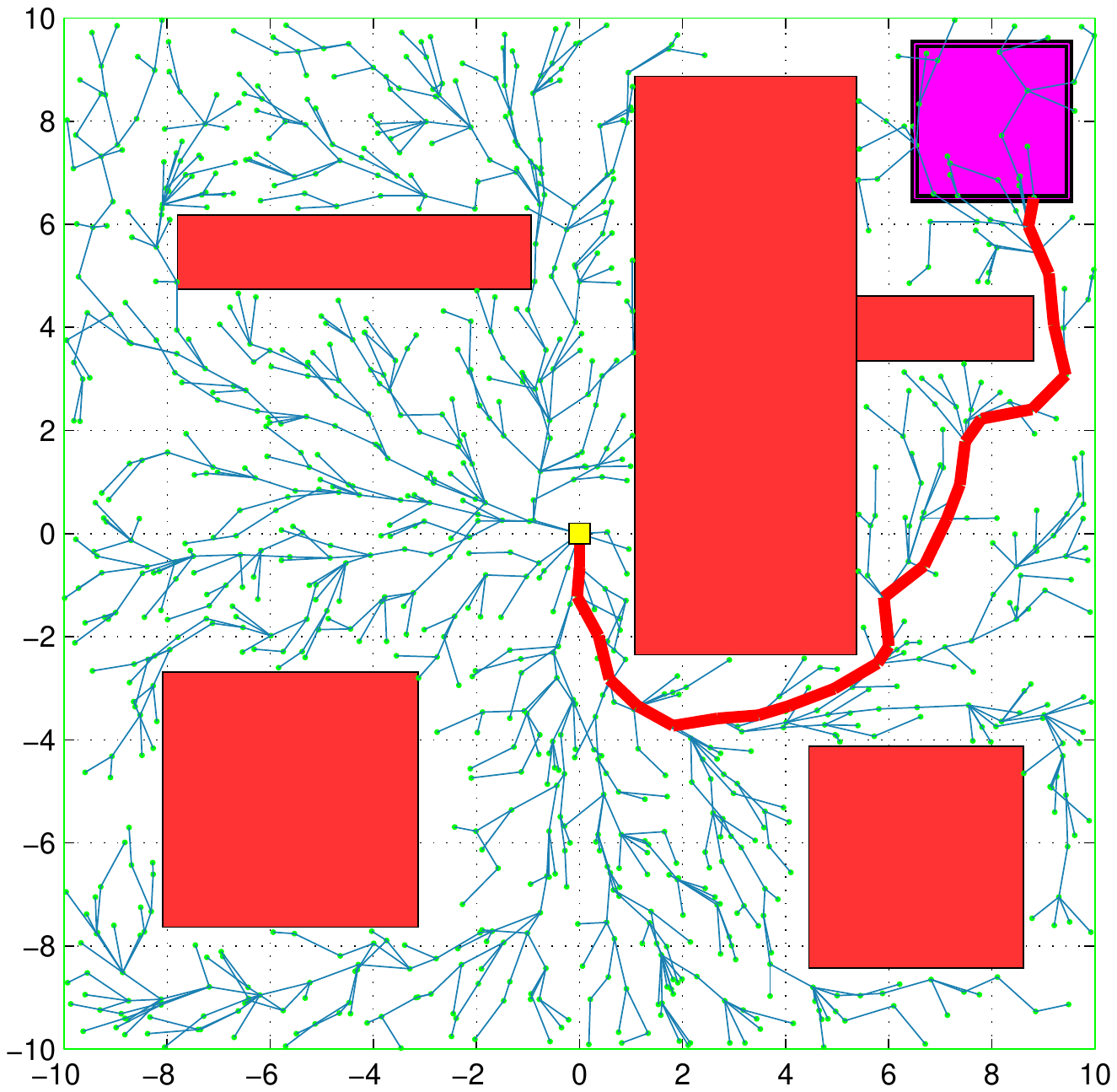}} \label{sim1_optrrt_10}}
      \subfigure[]{\scalebox{0.45}{\includegraphics[trim = 4.55cm 7.6cm 4.05cm 7.35cm, clip = true
          ]{./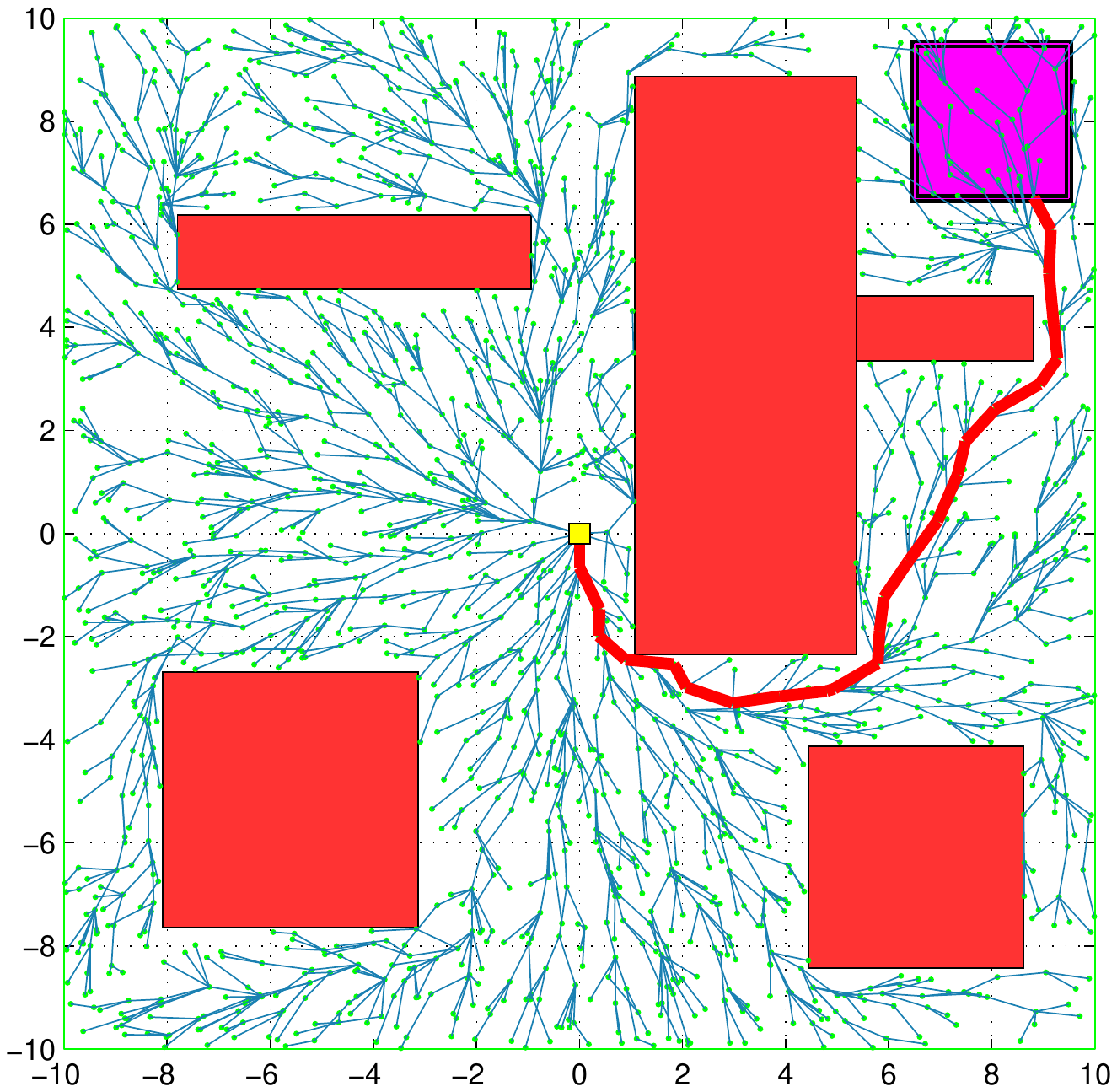}} \label{sim1_rrt_20}} } \mbox{ \subfigure[
      ]{\scalebox{0.45}{\includegraphics[trim = 4.55cm 7.6cm 4.05cm 7.35cm, clip = true
          ]{./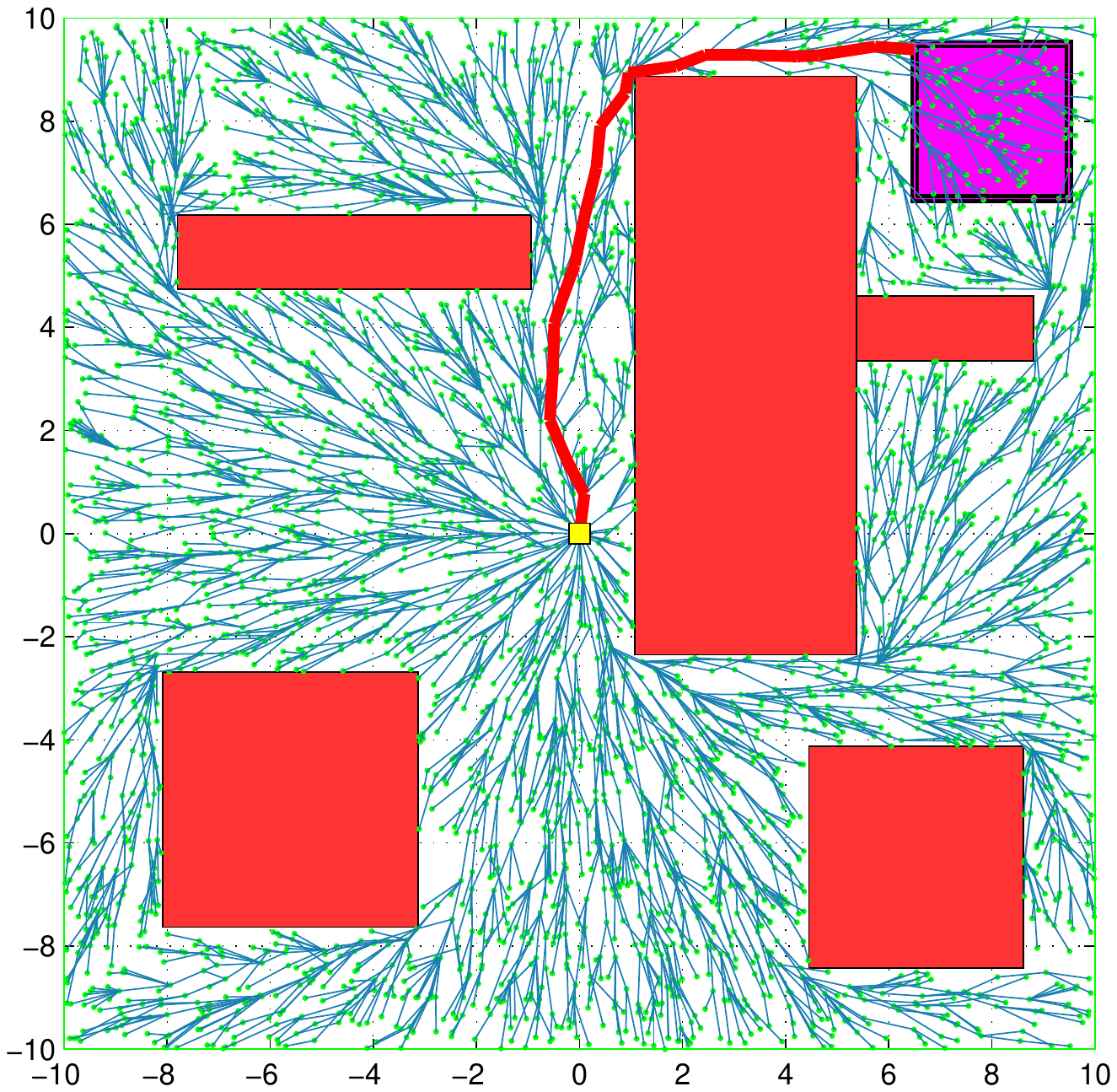}} \label{sim1_optrrt_20}}
      \subfigure[]{\scalebox{0.45}{\includegraphics[trim = 4.55cm 7.6cm 4.05cm 7.35cm, clip = true
          ]{./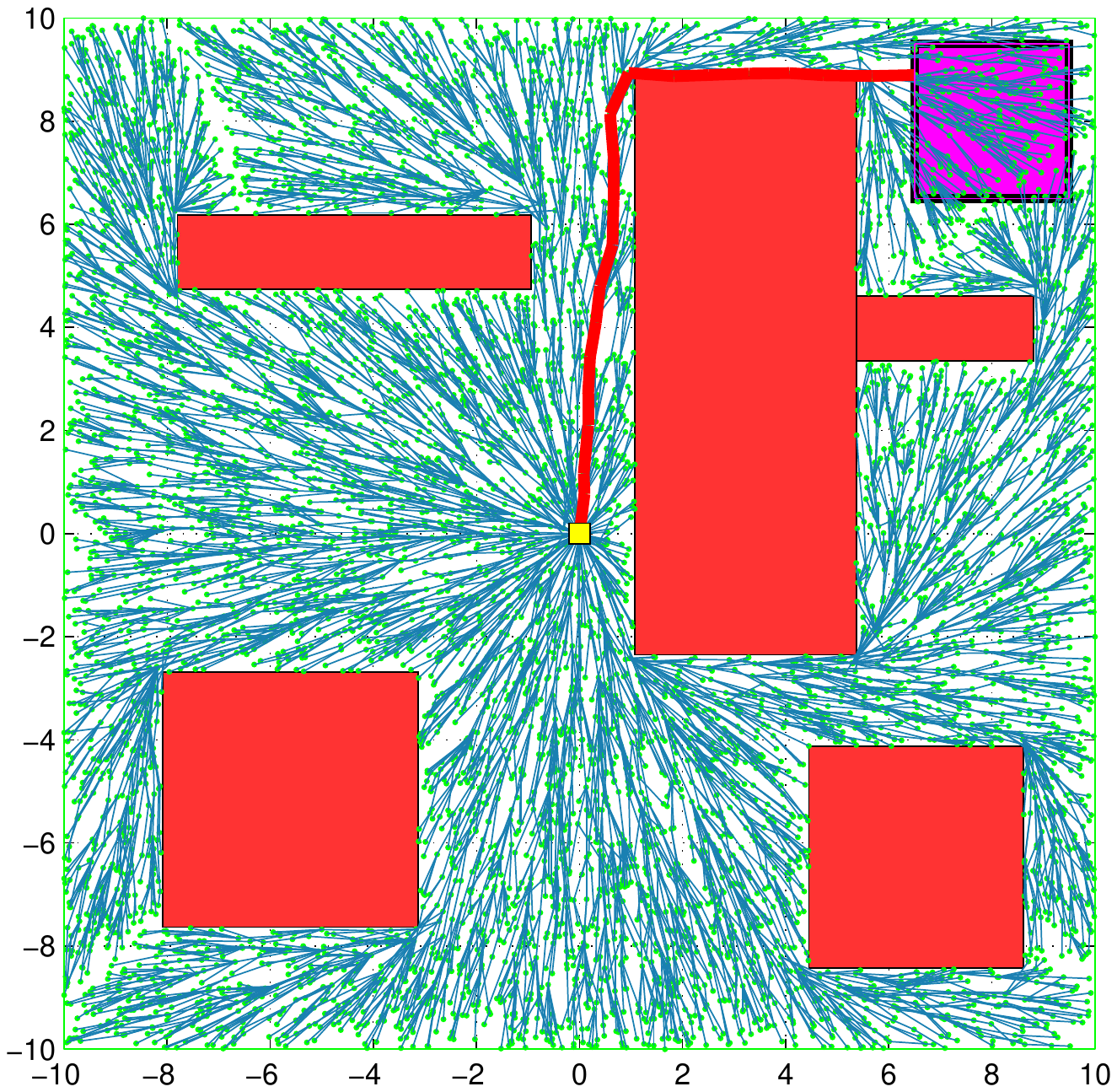}} \label{sim1_rrt_20}} \subfigure[
      ]{\scalebox{0.45}{\includegraphics[trim = 4.55cm 7.6cm 4.05cm 7.35cm, clip = true
          ]{./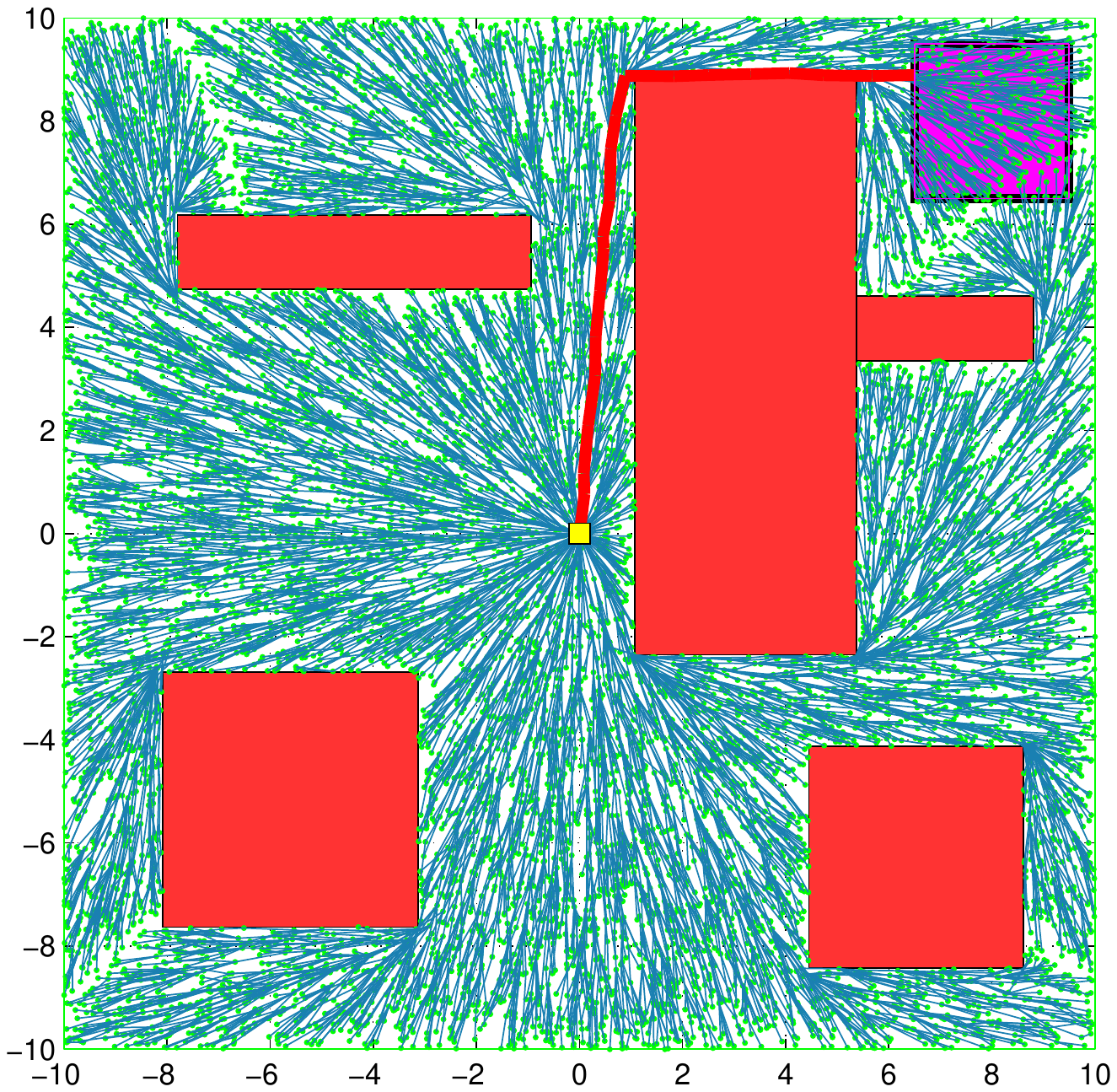}} \label{sim1_optrrt_20}} }
    \caption{RRT$^*$ algorithm shown after 500 (a), 1,500 (b), 2,500 (c), 5,000 (d), 10,000 (e), 15,000
      (f) iterations.}
    \label{figure:sim2optrrt}
  \end{center}
\end{figure*}

\begin{figure}[ht]
  \begin{center}
    \mbox{
      \subfigure[]{\scalebox{0.35}{\includegraphics{./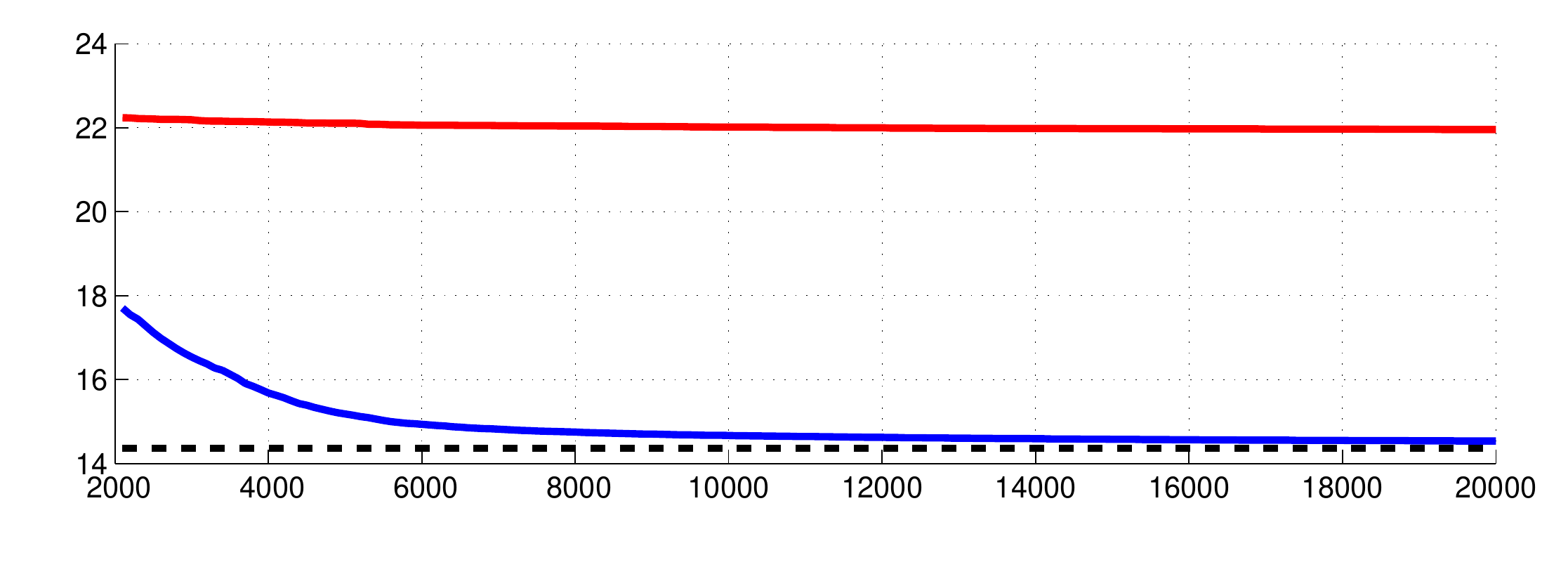}} 
        \label{sim0_mc_cost}}}
    \mbox{ 
      \subfigure[]{\scalebox{0.35}{\includegraphics{./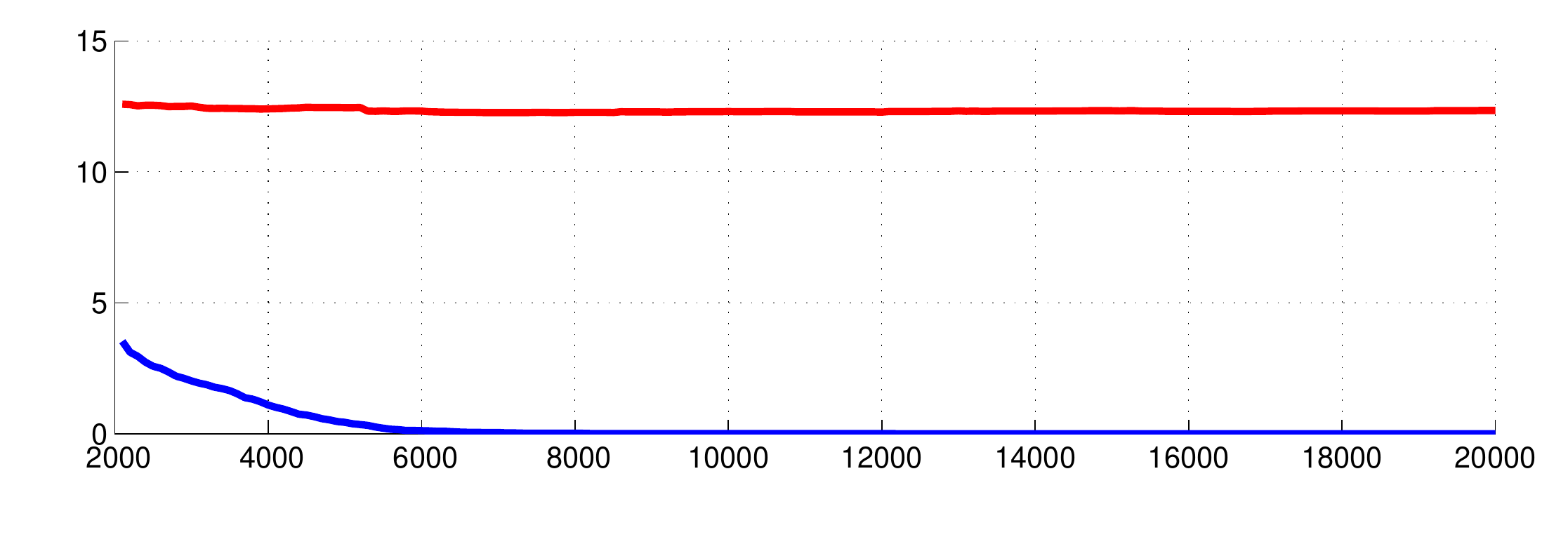}} 
        \label{sim0_mc_cost_var}}}
    \caption{An environment cluttered with obstacles is considered. The cost of the best paths in
      the RRT (shown in red) and the RRT$^*$ (shown in blue) plotted against iterations averaged
      over 500 trials in (a). The optimal cost is shown in black. The variance of the trials is
      shown in (b).}
    \label{figure:sim2cost}
  \end{center}
\end{figure}

\begin{figure}[ht]
  \begin{center}
    \includegraphics[trim = 2.9cm 1.8cm 2cm 1.4cm, clip = true, height =
    8.8cm]{./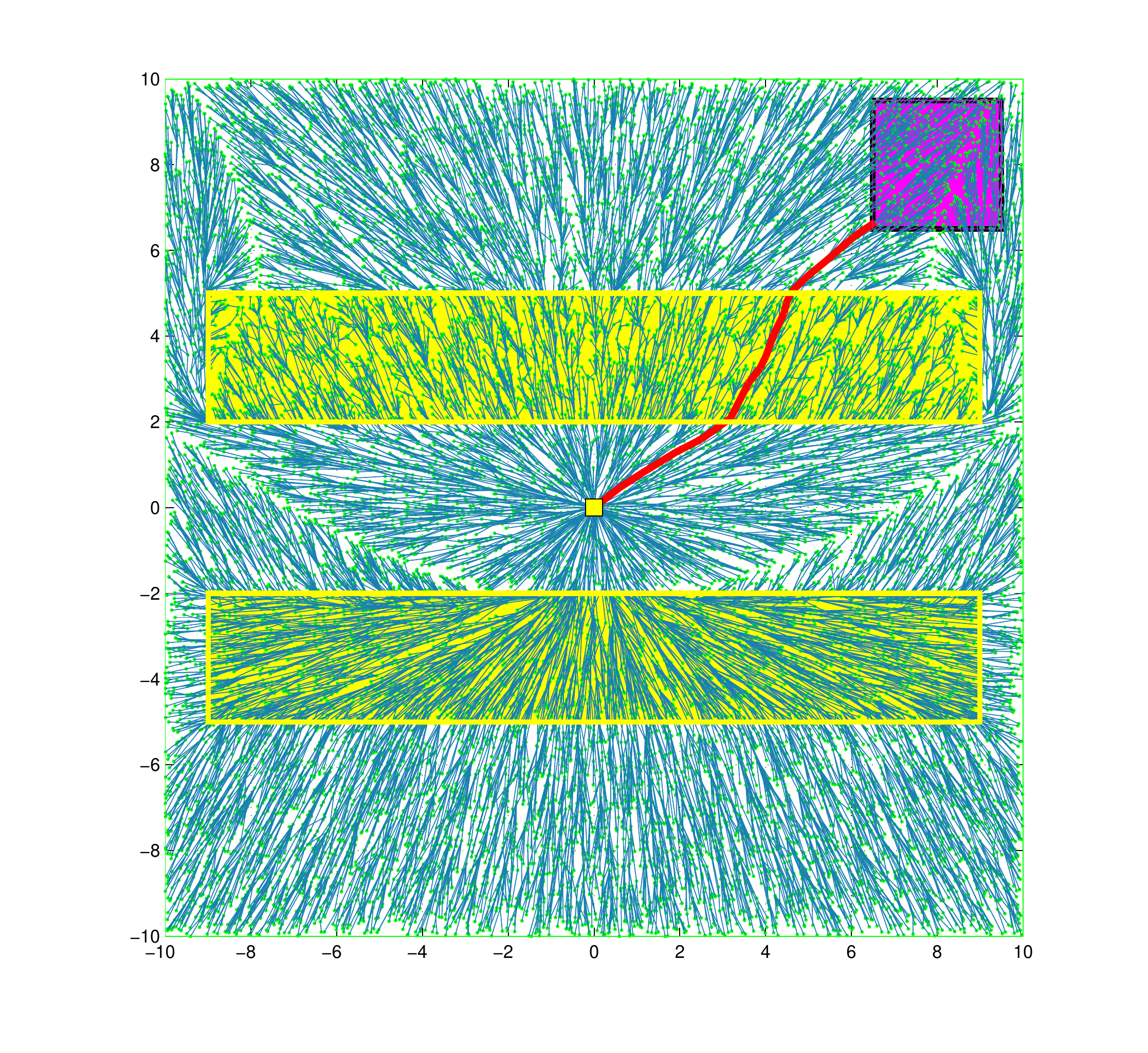}
    \caption{RRT$^*$ algorithm at the end of iteration 20,000 in an environment with no obstacles.
      The upper yellow region is the high-cost region, whereas the lower yellow region is low-cost.}
    \label{figure:sim3}
  \end{center}
\end{figure}

\begin{figure}[ht]
  \begin{center}
    \includegraphics[height = 3.8cm]{./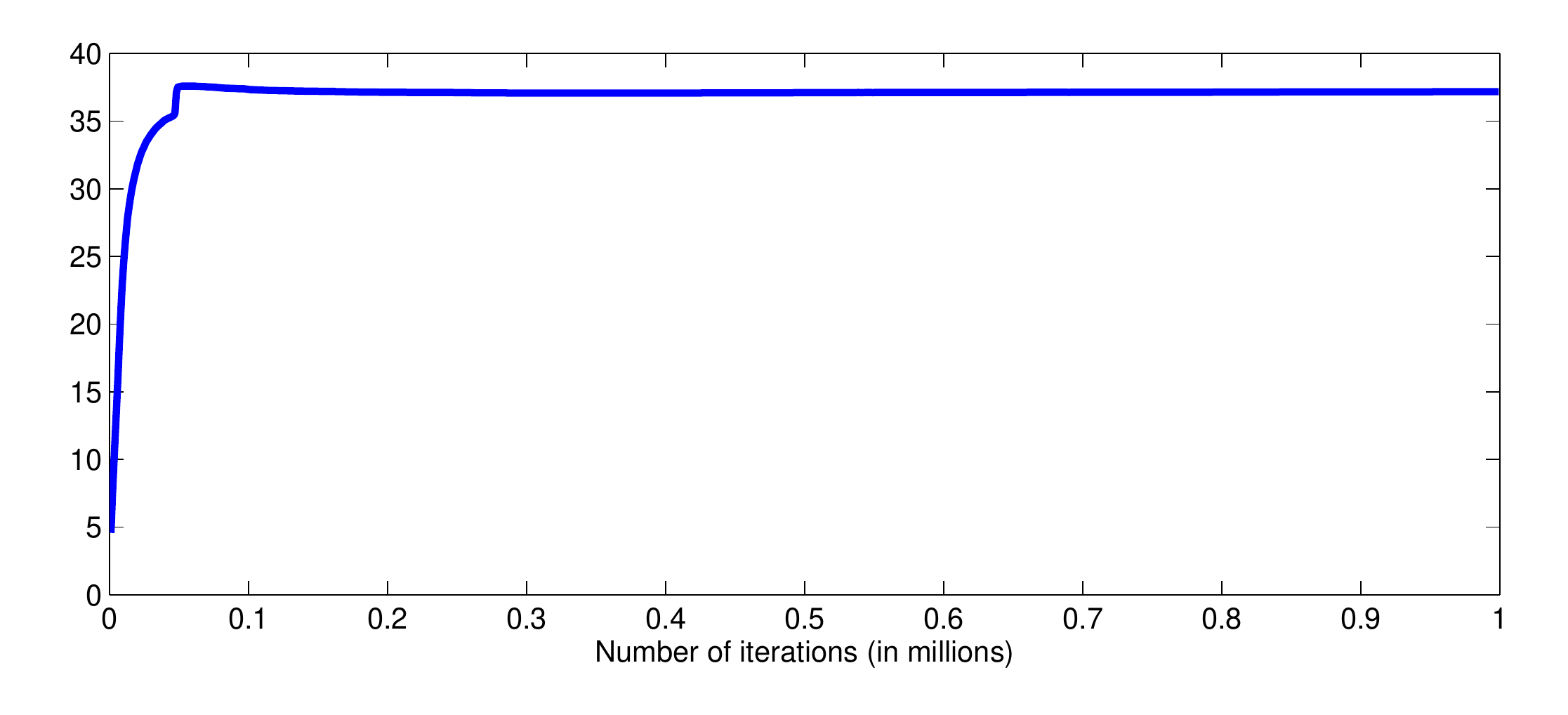}
    \caption{A comparison of the running time of the RRT$^*$ and the RRT algorithms. The ratio of
      the running time of the RRT$^*$ over that of the RRT up until each iteration is plotted versus
      the number of iterations.}
    \label{figure:sim0time}
  \end{center}
\end{figure}

\begin{figure}[ht]
  \begin{center}
    \includegraphics[height = 3cm]{./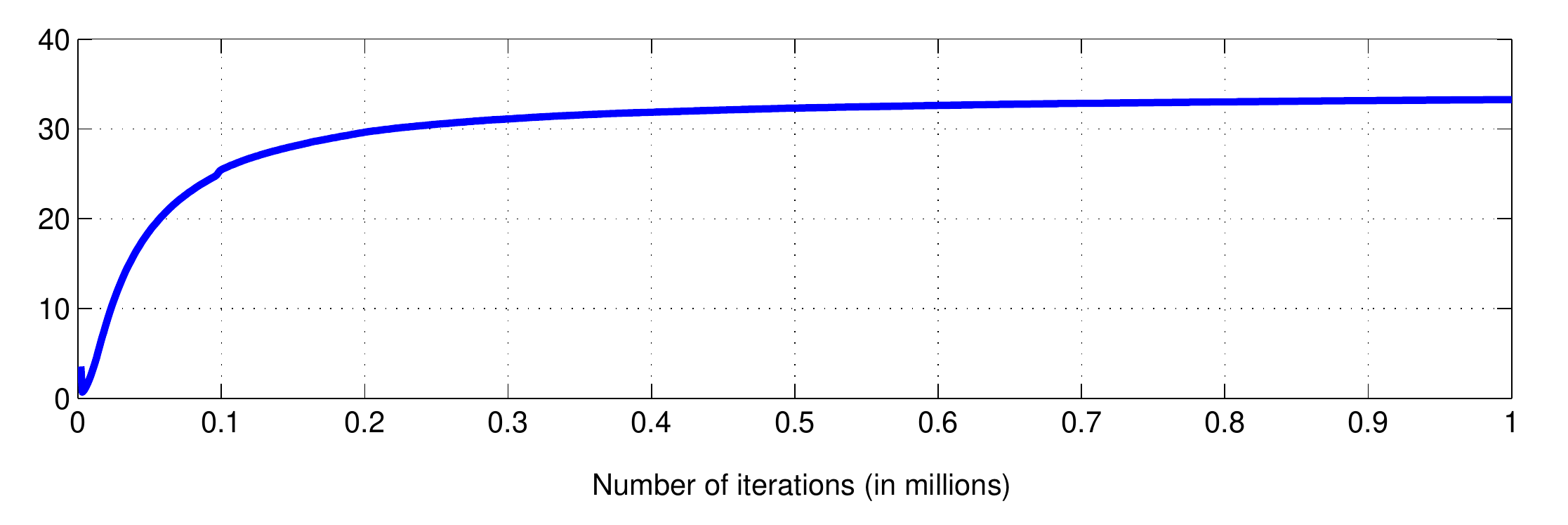}
    \caption{A comparison of the running time of the RRT$^*$ and the RRT algorithms in an
      environment with obstacles. The ratio of the running time of the RRT$^*$ over that of the RRT
      up until each iteration is plotted versus the number of iterations.}
    \label{figure:sim1time}
  \end{center}
\end{figure}

\section{Conclusion} \label{section:conclusion}

This paper presented the results of a thorough analysis of the RRT and RRG algorithms for optimal
motion planning. It is shown that, as the number of samples increases, the RRT algorithm converges
to a sub-optimal solution almost surely. On the other hand, it is proven that the RRG algorithm has
the asymptotic optimality property, i.e., almost sure convergence to an optimum solution, which the
RRT algorithm lacked.
The paper also proposed a novel algorithm called the RRT$^*$, which inherits the asymptotic
optimality property of the RRG, while maintaining a tree structure rather than a graph.
The RRG and the RRT$^*$ were shown to have no significant overhead when compared to the RRT
algorithm in terms of asymptotic computational complexity.
Experimental evidence that demonstrate the effectiveness of the algorithms proposed and support the
theoretical claims were also provided.

The results reported in this paper can be extended in a number of directions, and applied to other
sampling-based algorithms other than RRT. First of all, the proposed approach, building on the
theory of random graphs to adjust the length of new connections can enhance the computational
efficiency of PRM-based algorithms. Second, the algorithms and the analysis should be modified to
address motion planning problems in the presence of differential constraints, also known as
kino-dynamic planning problems. A third direction is the optimal planning problem in the presence of
temporal/logic constraints on the trajectories, e.g., expressed using formal specification languages
such as Linear Temporal Logic, or the $\mu$-calculus. Such constraints correspond to, e.g., rules of
the road constraints for autonomous ground vehicles, mission specifications for autonomous robots,
and rules of engagement in military applications. Ultimately, incremental sampling-based algorithms
with asymptotic optimality properties may provide the basic elements for the on-line solution of
differential games, as those arising when planning in the presence of dynamic obstacles.

Finally, it is noted that the proposed algorithms may have applications outside of the robotic
motion planning domain. In fact, the class of incremental sampling algorithm described in this paper
can be readily extended to deal with problems described by partial differential equations, such as
the eikonal equation and the Hamilton-Jacobi-Bellman equation.

\section*{Acknowledgments}
The authors are grateful to Professors M.S. Branicky and G.J. Gordon for their insightful comments
on a draft version of this paper. This research was supported in part by the Michigan/AFRL
Collaborative Center on Control Sciences, AFOSR grant no. FA 8650-07-2-3744. Any opinions, findings,
and conclusions or recommendations expressed in this publication are those of the authors and do not
necessarily reflect the views of the supporting organizations.

\bibliography{arxiv} \bibliographystyle{unsrt}

\section*{Appendix}
  
\subsection*{Proof of Theorem~\ref{theorem:rrtoptimality}}
Since the feasibility problem is assumed to admit a solution, the cost of the optimal path, $c^*$,
is a finite real number. 
First, consider the following technical lemma.
\begin{lemma} \label{lemma:nonequal_optimal} %
  The probability that the RRT constructs an optimal path at a finite iteration $i \in \mathbb{N}$
  is zero, i.e.,
  $$
  \PP \left( \cup_{i \in \mathbb{N}} \{ {\cal Y}^\mathrm{RRT}_i = c^*\} \right) = 0.
  $$
\end{lemma}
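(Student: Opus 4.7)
The plan is to fix $i \in \mathbb{N}$ and establish $\PP(\{{\cal Y}^\mathrm{RRT}_i = c^*\}) = 0$; the lemma then follows by countable subadditivity over $i$. If ${\cal Y}^\mathrm{RRT}_i = c^*$, the RRT at iteration $i$ contains a polygonal path from $x_\mathrm{init}$ to a vertex in $X_\mathrm{goal}$ of cost $c^*$. Being feasible and of minimum cost, such a path lies in $\Sigma^*$, hence in $X_\mathrm{opt}$, so every one of its edges---each inserted into the tree at a distinct iteration $j \in \{1,\dots,i\}$---lies in $X_\mathrm{opt}$. Defining $B_j := \{\text{an edge added at iteration } j \text{ is contained in } X_\mathrm{opt}\}$, we obtain $\{{\cal Y}^\mathrm{RRT}_i = c^*\} \subseteq \bigcup_{j=1}^{i} B_j$, and it suffices to show $\PP(B_j) = 0$ for each $j$.

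For any $u \in X_\mathrm{free}$ and any unit vector $\hat\theta$, let $\tau_u(\hat\theta) := \sup\{t \geq 0 \mid [u, u + t\hat\theta] \subseteq X_\mathrm{opt}\}$, and let $F_u := \{u + t\hat\theta : 0 \leq t \leq \tau_u(\hat\theta),\ \|\hat\theta\|=1\}$ be the maximal ``star'' around $u$ inside $X_\mathrm{opt}$. Spherical integration around $u$ gives
\begin{equation*}
  0 \;=\; \mu(X_\mathrm{opt}) \;\geq\; \mu(F_u) \;=\; \frac{1}{d}\int_{\|\hat\theta\|=1} \tau_u(\hat\theta)^d\, d\hat\theta,
\end{equation*}
where $d\hat\theta$ is surface measure on the unit sphere. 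Thus $\tau_u(\hat\theta) = 0$ for almost every direction, and in particular the set $\Theta_u := \{\hat\theta : \tau_u(\hat\theta) \geq \eta\}$ has zero $(d{-}1)$-dimensional surface measure.

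Next, define $E_u := \{x \in X_\mathrm{free} \mid [u, {\tt Steer}(u,x)] \subseteq X_\mathrm{opt}\}$. Split according to whether $\|x - u\| \leq \eta$: the ``near'' part lies in $F_u \cap {\cal B}_{u,\eta}$ and hence has Lebesgue measure zero, while the ``far'' part lies in the cone $\{u + r\hat\theta : r > \eta,\ \hat\theta \in \Theta_u\}$, whose Lebesgue measure is bounded by the (finite) radial integral times the surface measure of $\Theta_u$ and is therefore also zero. Hence $\mu(E_u) = 0$ for every $u$. Conditioning on $x_1,\dots,x_{j-1}$, the vertex set ${\cal V}^\mathrm{RRT}_{j-1}$ is determined and $B_j$ occurs only if $x_j \in \bigcup_{u \in {\cal V}^\mathrm{RRT}_{j-1}} E_u$, a finite union of null sets. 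By Assumption~\ref{assumption:sampling} the distribution of $x_j$ is absolutely continuous, so $\PP(B_j \mid x_1,\dots,x_{j-1}) = 0$; unconditioning, then union-bounding over $j \leq i$ and over $i \in \mathbb{N}$, completes the proof.

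The main obstacle is the measure-zero claim for $E_u$. The map ${\tt Steer}(u,\cdot)$ is piecewise---identity on ${\cal B}_{u,\eta}$ and radial projection onto $\partial {\cal B}_{u,\eta}$ outside---so it collapses whole rays from $u$ onto single sphere points, and the preimage under ${\tt Steer}$ of a Lebesgue-null set can a priori have positive measure (for example, a pathological $X_\mathrm{opt}$ concentrating on $\partial{\cal B}_{u,\eta}$ would make the far cone carry positive volume). The swept-cone identity above is precisely what rules this out, converting the global hypothesis $\mu(X_\mathrm{opt}) = 0$ into the directional statement that $\Theta_u$ has zero surface measure; the hypothesis $d \geq 2$ is needed here so that the unit sphere carries nontrivial directional structure.
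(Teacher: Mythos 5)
Your proof is correct, and its skeleton is the same as the paper's: reduce $\PP\left(\cup_i\{\mathcal{Y}^\mathrm{RRT}_i=c^*\}\right)=0$ to showing each $\PP\left(\{\mathcal{Y}^\mathrm{RRT}_i=c^*\}\right)=0$ (the paper uses monotonicity of the events $B_i\subseteq B_{i+1}$ where you use countable subadditivity --- both are fine), and then argue that the latter event forces the randomness to land in the null set $X_\mathrm{opt}$. Where you genuinely diverge is in how that last step is justified. The paper disposes of it in one sentence: ``the probability that the set $\bigcup_{j=1}^i\{{\tt Sample}(j)\}$ contains a point from a zero-measure set is zero.'' That is exactly the step you identify as insufficient: a tree vertex is ${\tt Steer}(x_\mathrm{nearest},x_\mathrm{rand})$, which coincides with the sample only when $\Vert x_\mathrm{rand}-x_\mathrm{nearest}\Vert\le\eta$; outside that ball the map collapses entire rays onto single points of the sphere $\partial\mathcal{B}_{x_\mathrm{nearest},\eta}$, so the preimage of a Lebesgue-null set can have positive measure and ``samples avoid $X_\mathrm{opt}$'' does not immediately yield ``edges avoid $X_\mathrm{opt}$.'' Your polar-coordinate computation $\mu(F_u)=\frac{1}{d}\int\tau_u(\hat\theta)^d\,d\hat\theta=0$, forcing $\Theta_u$ to be null for surface measure and hence both the near part and the far cone of $E_u$ to be Lebesgue-null, is precisely the lemma the paper's argument leaves implicit; combined with conditioning on $x_1,\dots,x_{j-1}$ and independence of $x_j$, it closes the gap cleanly. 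Two minor remarks: the hypothesis $d\ge 2$ is not actually what rescues this step (in $d=1$ a measure-zero $X_\mathrm{opt}$ contains no segment at all, so $\Theta_u=\emptyset$); it is needed elsewhere, in the Borel--Cantelli estimate of Lemma~\ref{lemma:infinite_often_connection}. And one should quietly discard the measure-zero set consisting of $u$ itself and the ray endpoints $u+\tau_u(\hat\theta)\hat\theta$ when asserting $F_u\subseteq X_\mathrm{opt}$, but this costs nothing.
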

\begin{proof}
  Let $B_i$ denote the event the RRT has a path that has cost exactly equal to $c^*$ at the end of
  iteration $i$, i.e., $B_i = \{{\cal Y}^\mathrm{RRT}_i = c^*\}$. Let $B$ denote the event that the
  RRT finds a path that costs exactly $c^*$ at some finite iteration $i$. Then, $B$ can be written
  as $B = \cup_{i \in \mathbb{N}} B_i$. Notice that $B_i \subseteq B_{i+1}$; thus, we have that
  $\lim_{i \to \infty} \PP (B_i) = \PP(B)$, by monotonocity of measures. Notice also that by
  Assumptions~\ref{assumption:zeromeasureoptimal} and \ref{assumption:sampling},  $\PP
  (B_i) = 0$ for all $i \in \mathbb{N}$, since the probability that the set $\bigcup_{j = 1}^i \{
  {\tt Sample}(j)\}$ of points contains a point from a zero-measure set is zero. Hence,  $\PP (B) = 0$.
\end{proof}

Let $C_i$ denote the event that $x_\mathrm{init}$ is chosen to be extended by the ${\tt Nearest}$
procedure at iteration $i$. The following lemma direcly implies a necessary condition for asymptotic
optimality: with probability one, the initial state, $x_\mathrm{init}$, must be chosen for extension
infinitely often.
\begin{lemma} \label{lemma:necessary_condition}
  The following inequality holds:
  $$
  \PP \left(\left\{ \lim_{i \to \infty} {\cal Y}^\mathrm{RRT}_i = c^* \right\} \right) \le 
  \PP \left(\limsup_{i \to \infty} C_i\right).
  $$
\end{lemma}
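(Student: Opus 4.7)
I would prove the contrapositive: on the event that $x_\mathrm{init}$ is chosen by the $\tt Nearest$ procedure at only finitely many iterations, $\mathcal{Y}^\mathrm{RRT}_i$ fails to converge to $c^*$, almost surely. Writing $(\limsup_i C_i)^c = \bigcup_{K \in \mathbb{N}} D_K$ with $D_K := \bigcap_{i > K} C_i^c$, countable subadditivity reduces the task to verifying $\PP(\{\lim_i \mathcal{Y}^\mathrm{RRT}_i = c^*\} \cap D_K) = 0$ for each fixed $K$.

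Fix $K$ and work on $D_K$. After iteration $K$, no new edge is ever added out of $x_\mathrm{init}$, so the set of outgoing neighbors of $x_\mathrm{init}$ in the RRT is frozen at a finite family $\{y_1, \ldots, y_k\}$, each lying in ${\cal B}_{x_\mathrm{init}, \eta}$; both $k$ and the $y_j$'s are measurable functions of the first $K$ samples. Every collision-free path in the tree from $x_\mathrm{init}$ into $X_\mathrm{goal}$ is then of the form ${\tt Line}(x_\mathrm{init}, y_j) \,|\, \sigma'$ for some $j$ and some tree subpath $\sigma'$ rooted at $y_j$. For each $j$ set
$$ c^*_j \,:=\, \inf\bigl\{\,c(\sigma)\,:\,\sigma \in \Sigma_{X_\mathrm{free}},\ \sigma \text{ starts with } {\tt Line}(x_\mathrm{init}, y_j),\ \sigma \text{ ends in } X_\mathrm{goal}\bigr\}. $$
By construction, $\mathcal{Y}^\mathrm{RRT}_i \ge \min_{j \le k} c^*_j$ for every $i \ge K$ on $D_K$, and trivially $c^*_j \ge c^*$ for each $j$. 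Hence it suffices to show that, conditionally on $D_K$, $\PP(c^*_j = c^*) = 0$ for each fixed $j$, whence a union bound finishes the reduction.

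To that end, define the set of ``optimal first destinations''
$$ A \,:=\, \bigl\{y \in {\cal B}_{x_\mathrm{init}, \eta} \cap X_\mathrm{free}\,:\, \inf\{c(\sigma)\,:\,\sigma \text{ starts with } {\tt Line}(x_\mathrm{init}, y),\ \sigma \text{ ends in } X_\mathrm{goal}\} = c^*\bigr\}. $$
The plan is to argue, via a limiting extraction from a cost-minimizing sequence together with the monotonicity of $c$ (Assumption~\ref{assumption:monotonecost}), that every $y \in A$ lies on some optimal path; hence $A \subseteq X_\mathrm{opt}$, and therefore $\mu(A) = 0$ by Assumption~\ref{assumption:zeromeasureoptimal}.

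The main technical obstacle is the concluding measure-theoretic step: establishing $\PP(y_j \in A) = 0$ despite the fact that $y_j = {\tt Steer}(x_\mathrm{init}, x_{\mathrm{rand},j})$ is not itself absolutely continuous on $\mathbb{R}^d$, since $\tt Steer$ concentrates positive probability mass onto the sphere $\partial {\cal B}_{x_\mathrm{init}, \eta}$ by radial projection. I would split the analysis according to whether the responsible sample lies inside ${\cal B}_{x_\mathrm{init}, \eta}$---where $y_j = x_{\mathrm{rand},j}$ and the bound is immediate from Assumption~\ref{assumption:sampling} together with $\mu(A) = 0$---or outside, in which case $y_j$ lies on the sphere and the argument reduces to showing that the trace $A \cap \partial {\cal B}_{x_\mathrm{init}, \eta}$ has zero $(d-1)$-dimensional Hausdorff measure on that sphere. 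This is the crux of the proof; it is here that the hypothesis $d \ge 2$ interacts essentially with Assumption~\ref{assumption:zeromeasureoptimal} to yield the pushforward estimate.
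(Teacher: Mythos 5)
Your route is genuinely different from the paper's. The paper proves the inequality directly: it introduces the auxiliary quantity $c_i$ (the least cost achievable by following an edge out of $x_\mathrm{init}$ to one of its current children and then an optimal continuation), discards the probability-zero event that the tree ever attains cost exactly $c^*$ (Lemma~\ref{lemma:nonequal_optimal}), and observes that convergence then forces $c_i$ to strictly decrease infinitely often, which by Assumption~\ref{assumption:monotonecost} can happen only at iterations where $C_i$ occurs. Your contrapositive reduction --- decomposing $(\limsup_i C_i)^c$ into the events $D_K$, freezing the root's children on $D_K$, and bounding ${\cal Y}^\mathrm{RRT}_i \ge \min_j c^*_j$ --- is sound and arguably more transparent. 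The problem is that you stop at exactly the step that carries all the weight, and the way you frame that step suggests it would not be closed correctly.

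Concretely, $\mu(A)=0$ does \emph{not} imply that $A\cap\partial{\cal B}_{x_\mathrm{init},\eta}$ is null for $(d-1)$-dimensional surface measure: a Lebesgue-null subset of $\mathbb{R}^d$ can contain a positive-measure patch of a sphere (the sphere itself is Lebesgue-null). So the estimate for the singular part of the ${\tt Steer}$ pushforward does not follow from $A\subseteq X_\mathrm{opt}$ and Assumption~\ref{assumption:zeromeasureoptimal} alone. What rescues the argument is the radial structure of $A$ that you never invoke: if $y\in A\cap\partial{\cal B}_{x_\mathrm{init},\eta}$ then (granting your extraction step) the entire segment $[x_\mathrm{init},y]$ is a prefix of an optimal path and hence lies in $X_\mathrm{opt}$, so if the set of such directions had positive surface measure, integrating in polar coordinates would give $\mu(X_\mathrm{opt})\ge \frac{\eta^d}{d}\,{\cal H}^{d-1}\bigl(A\cap\partial{\cal B}_{x_\mathrm{init},\eta}\bigr)>0$, contradicting Assumption~\ref{assumption:zeromeasureoptimal}. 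Note that this uses nothing about $d\ge 2$; your suggestion that the dimension hypothesis ``interacts essentially'' here is a red herring --- in the paper $d\ge 2$ enters only in Lemma~\ref{lemma:infinite_often_connection}, through the Borel--Cantelli sum $\sum_j j^{-d}$, and the present lemma needs no dimension restriction. A secondary gap: the claim that every $y\in A$ lies on an optimal path ``via a limiting extraction'' is not supported by the assumptions in force for Theorem~\ref{theorem:rrtoptimality} (only monotonicity of $c$ is assumed, with no continuity or compactness of the path class), so the infimum defining membership in $A$ need not be attained; the paper is comparably informal on the analogous point, but since your entire proof funnels through $A\subseteq X_\mathrm{opt}$, this step needs either an argument or an explicit additional hypothesis.
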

\begin{proof}
Denote by $\psi(x)$ a minimum-cost path starting from a given state $x \in X_\mathrm{free}$
and reaching one of the states in the goal region\footnote{If there is no such path, then let
  $\Lambda$ be a symbol for ``unfeasible path,'' and set $\psi(x) = \Lambda$, with ${\tt
    Cost}(\Lambda)= +\infty$.},
and define a sequence $\{c_i\}_{i \in \mathbb{N}}$ of random variables as follows:
\begin{eqnarray*}
 & & \hspace{-0.5in} c_i = \min \big\{ {\tt Cost} ({\tt Line}(x_\mathrm{init}, x) \,\vert\, \psi(x))
 \,\big\vert\,\\ 
& & \hspace{0.5in}  x \in {\cal V}^\mathrm{RRT}_i, (x_\mathrm{init},x) \in {\cal E}^\mathrm{RRT}_i \big\}.
\end{eqnarray*}
Essentially, $c_i$ denotes the minimum cost that is incurred by following the line segment that
connects $x_\mathrm{init}$ to one of its children and following an optimal path afterwards. Notice
that $c_i$ is strictly greater than the optimal cost $c^*$ for all $i \in \mathbb{N}$, unless the
root node, $x_\mathrm{init}$, has a child node that is a part of an optimal path.

Recall, from the proof of Lemma~\ref{lemma:nonequal_optimal}, that $B$ denotes the event $\cup_{i \in
  \mathbb{N}}\{ {\cal Y}^\mathrm{RRT}_i = c^* \}$. Let $A_i$ denote the event that $c_i$ decreases
at iteration $i$, i.e., $A_i = \{ c_i < c_{i-1} \}$. First, notice that, conditioning on the event
$B^c$, we have that the convergence of the cost of the best path in the RRT to the optimum cost
(i.e., the event $\{\lim_{i \to \infty} {\cal Y}^\mathrm{RRT}_i = c^*\}$) implies that $c_i$ must
decrease infinitely often, i.e., $A_i$ must hold infinitely often. More precisely, we have that $\{
\lim_{i \to \infty}{\cal Y}^\mathrm{RRT}_i = c^* \} \cap B^c \subseteq \limsup_{i \to \infty} A_i$.

Noting that $\PP (B^c) = 1$ (by Lemma~\ref{lemma:nonequal_optimal}), it follows that
\begin{eqnarray*}
\PP\left(\left\{ \lim_{i \to \infty} {\cal Y}^\mathrm{RRT}_i = c^*\right\}\right) 
& = &\PP \left(\left\{ \lim_{i \to \infty} {\cal Y}^\mathrm{RRT}_i = c^*\right\} \cap B^c\right) \\
& \le & \PP \left(\limsup_{i \to \infty} A_i\right).
\end{eqnarray*}

Notice that, by Assumption~\ref{assumption:monotonecost}, for $A_i$ to occur infinitely often, $C_i$
must also occur infinitely often, i.e., $\limsup_{i \to \infty} A_i \subseteq \limsup_{i \to \infty}
C_i$. Thus,  $\PP (\limsup_{i \to \infty}) A_i \le \PP (\limsup_{i \to \infty} C_i)$,
which implies the lemma.
\end{proof}

To prove the theorem, it is shown that RRT fails to satisfy this necessary condition:
\begin{lemma} \label{lemma:infinite_often_connection}
  The probability that the RRT algorithm extends its root node infinitely many times is zero, i.e.,  $$\PP \left(\limsup_{i \to \infty} C_i\right) = 0.$$
\end{lemma}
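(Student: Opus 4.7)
The plan is to apply the second Borel--Cantelli lemma (in its conditional, L\'evy form) to the events $\{C_i\}_{i \in \mathbb{N}}$. Let $V_i \subseteq X_\mathrm{free}$ denote the Voronoi region of $x_\mathrm{init}$ with respect to ${\cal V}^\mathrm{RRT}_{i-1}$, i.e., the set of points in $X_\mathrm{free}$ for which $x_\mathrm{init}$ is the nearest vertex of the current tree. Then $C_i$ coincides with $\{{\tt Sample}(i) \in V_i\}$ and, by Assumption~\ref{assumption:sampling}, the sampling density is continuous and bounded away from zero on the bounded set $X_\mathrm{free}$, so $\PP(C_i \given \mathcal{F}_{i-1})$ is comparable to $\mu(V_i)/\mu(X_\mathrm{free})$. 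By L\'evy's extension, to show $\PP(\limsup_i C_i) = 0$ it suffices to prove that $\sum_i \mu(V_i) < \infty$ almost surely.

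I would then analyze the geometry of $V_i$. Each occurrence of $C_i$ places, via ${\tt Steer}$, a new child $v$ of $x_\mathrm{init}$ at distance at most $\eta$ in the direction of the sample; this adds a bisector halfspace constraint that cuts $V_{i+1}$ in the direction of $v$ down to distance $\Vert v - x_\mathrm{init}\Vert/2$ from $x_\mathrm{init}$. Other tree vertices contribute additional halfspace constraints. Combining the absolute continuity of the sampling distribution with a local variant of the probabilistic-completeness argument of Theorem~\ref{theorem:probabilisticcompletenessrrt}, one shows that almost surely, after finitely many iterations the tree has vertices in every direction around $x_\mathrm{init}$, so that $V_i \subset {\cal B}_{x_\mathrm{init}, \eta}$, and thereafter $V_i$ contracts toward $\{x_\mathrm{init}\}$.

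The crucial quantitative step is to establish the summability $\sum_i \mu(V_i) < \infty$. Once $V_i$ lies inside ${\cal B}_{x_\mathrm{init}, \eta}$, the covering radius of the tree in this neighborhood controls $\mu(V_i)$. The hypothesis $d \ge 2$ enters essentially here: it provides enough freedom on the directional sphere $S^{d-1}$ so that the random directions of both $x_\mathrm{init}$'s children and of surrounding tree vertices combine to shrink the diameter of $V_i$ strictly faster than $i^{-1/d}$, giving $\mu(V_i) = o(1/i)$ almost surely, at a rate strong enough for summability. The lemma then follows from the conditional Borel--Cantelli bound.

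The main obstacle is this last step: a naive estimate based only on $\EE[\mu(V_i)]$, assuming uniform-like placement of tree vertices, yields $\mu(V_i) \sim 1/i$, which is borderline non-summable. The refinement must exploit RRT-specific feedback---namely, that an extension of $x_\mathrm{init}$ is conditional on $V_i$ having positive measure and necessarily produces a new halfspace cut in a direction drawn from an absolutely continuous law---combined with the steady accumulation of non-child tree vertices near $x_\mathrm{init}$ at a rate linear in $i$, and with the $d$-dimensional geometry of overlapping halfspace intersections. It is precisely this compound shrinkage in $d \ge 2$ dimensions that produces the strict $o(1/i)$ decay needed to invoke the conditional Borel--Cantelli lemma and conclude $\PP(\limsup C_i) = 0$.
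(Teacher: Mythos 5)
Your proposal correctly identifies the right general strategy---a Borel--Cantelli argument driven by the shrinkage of the region from which a sample causes the root to be selected by ${\tt Nearest}$---but it stops exactly where the proof has to begin. The decisive step, showing that $\sum_i \mu(V_i) < \infty$ almost surely, is asserted rather than proven: you write that the $d$-dimensional geometry ``produces the strict $o(1/i)$ decay needed,'' but $\mu(V_i) = o(1/i)$ is not sufficient for summability (consider $1/(i\log i)$), so even the target of your missing argument is too weak as stated. The auxiliary claims you lean on are also unestablished and nontrivial: that the tree almost surely acquires vertices ``in every direction'' around $x_\mathrm{init}$ after finitely many iterations, and that non-child vertices accumulate near $x_\mathrm{init}$ at a rate linear in $i$. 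As written, the proposal is a plausible research plan, not a proof.

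The paper closes this gap by changing the index of summation. Instead of estimating the Voronoi volume $\mu(V_i)$ at every iteration $i$, it covers the $\eta$-ball around $x_\mathrm{init}$ by finitely many cones ${\cal C}_1,\dots,{\cal C}_K$ of aperture at most $\pi/3$ and lets $D_{k,j}$ be the event that the $j$-th connection to the root is made from within cone ${\cal C}_k$; then $\limsup_i C_i \subseteq \cup_k \limsup_j D_{k,j}$. The aperture bound is the key geometric device: if the nearest existing cone-vertex is at distance $z$ from $x_\mathrm{init}$, the portion of ${\cal C}_k$ still closer to $x_\mathrm{init}$ than to that vertex is contained in ${\cal B}_{x_\mathrm{init},z}$, so $\PP(D_{k,j}) \le \alpha\,\zeta_d\,\EE[(r_{\min,k,j})^d]$, and order statistics of the minimum distance give $\EE[(r_{\min,k,j})^d] = \beta/j^d$. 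This is summable in $j$ precisely when $d \ge 2$, and the ordinary (first) Borel--Cantelli lemma finishes the proof---no conditional version, no almost-sure control of $\mu(V_i)$ per iteration, and no claim about coverage of all directions is needed. This reindexing by the number of successful connections, together with the $\pi/3$ cone trick, is the idea your proposal is missing.
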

\begin{proof}
Let $\{{\cal C}_1, {\cal C}_2, \dots, {\cal C}_K\}$ be a finite set of cones, such that (i) each cone is
centered at $x_\mathrm{init}$ and has aperture at most $\pi / 3$, and (ii) collectively the cones
cover the ball of radius $\eta$ around $x_\mathrm{init}$. Defined for all $k \in \{1,2, \dots, K\}$
and all $j \in \mathbb{N}$, let $D_{k,j}$ be the event that a node from inside cone ${\cal C}_k$ is
connected to $x_\mathrm{init}$ for the $j$th time. Clearly, for $C_i$ to occur infinitely often,
$D_{k,j}$ must occur for infinitely many $j$, for at least one $k$. More formally, 
$\limsup_{i \to \infty} C_i \subseteq \cup_{k = 1}^K \limsup_{j \to \infty} D_{k,j}$, and that 
\begin{eqnarray*}
\PP (\limsup_{i \to \infty} C_i) 
 & \le & \PP (\cup_{k = 1}^K \limsup_{j \to \infty} D_{k,j}) \\
 & \le & \sum_{k = 1}^K \PP(\limsup_{j \to \infty} D_{k,j}).
\end{eqnarray*} 

The next step is to show that $\PP(\limsup_{j \to \infty} D_{k,j}) = 0$ holds for all $k \in \{1,2, \dots, K \}$,
which implies the lemma.
Let $k$ be any index from $\{1,2, \dots, K\}$. 
Let $V_{k,j}$ denote the set of vertices inside cone ${\cal C}_k$ right after the connection of $j$th
node to $x_\mathrm{init}$ from inside the cone ${\cal C}_k$.
Let $r_{\min, k, j}$ be the distance of the node that is inside $V_{k,j}$ and has minimum distance
to $x_\mathrm{init}$ among all the vertices in $V_{k,j}$, i.e.,
$$
r_{\min, k, j} = \min_{v \in V_{k,j}} \Vert v - x_\mathrm{init} \Vert.
$$
This distance is a random variable for fixed values of $k$ and $j$. Let
$f_{r_\mathrm{min}, k, j}$ denote its probability density function.
Also, given two points $x, y \in X$, let ${\cal D}_{x,y}$ denote the set of all points in $X$ that
are closer to $y$ than they are to $x$, i.e.,
$$
{\cal D}_{x,y} = \big\{ z \in X \,\big\vert\, \Vert x-z \Vert > \Vert y - z \Vert \big\}.
$$

For any $j \in \mathbb{N}_{>0}$, conditioning on $r_{\min, k, j} = z$, the probability that a node
from ${\cal C}_k$ is connected to $x_\mathrm{init}$ for the $j$th time is upper bounded by the
measure of the region ${\cal C}_k \setminus {\cal D}_{x_\mathrm{init}, y}$, where $y$ is any point
with distance $z$ to $x_\mathrm{init}$ (see Figure~\ref{figure:coneinclusion}). Hence, $\PP (D_{k,j}
\given r_{\min, k, j} = z) \le \alpha \mu ({\cal C}_k \setminus {\cal D}_{x_\mathrm{init}, y})$ for
some contant $\alpha \in \mathbb{R}_{>0}$ by Assumption~\ref{assumption:sampling}. Furthermore,
since ${\cal C}_k$ has aperture at most $\pi / 3$, the region ${\cal C}_k \setminus {\cal
  D}_{x_\mathrm{init}, y}$ is included within the ball of radius $z$ centered at $x_\mathrm{init}$
(also illustrated in Figure~\ref{figure:coneinclusion}). Hence,
\begin{eqnarray*}
  \PP (D_{k,j}) & = & \int_{z = 0}^\infty \PP(D_{k,j} \given r_{\min, k, j} = z) f_{r_{\min,k,j}}
  (z) dz \\
  & \le & \int_{z = 0}^\infty \alpha \, \mu({\cal C}_k \setminus {\cal D}_{x_\mathrm{init}, y})
  f_{r_{\min,k,j}}(z) dz \\
  & \le & \int_{z = 0}^\infty \alpha \, \mu({\cal B}_{x_\mathrm{init},z}) f_{r_{\min,k,j}}(z) dz.
\end{eqnarray*}
Note that $\mu ({\cal B}_{x_\mathrm{init}, z}) \le \zeta_d z^d$, where $d$ is, recall, the
dimensionality of the smallest Euclidean space containing $X$, and $\zeta_d$ is the volume of the
unit ball in this space. Hence,
\begin{eqnarray*}
  \PP (D_{k,j}) \le \alpha\, \zeta_d \,  \int_{z = 0}^\infty z^d f_{r_{\min,k,j}}(z) dz 
  = \alpha\, \zeta_d \; \EE [(r_{\min, k, j})^d],
\end{eqnarray*}
where the last equality follows merely by the definition of expectation. By the order statistics of
the minimum distance, we have that $\EE [(r_{\min, k, j})^d]$ evaluates to $\frac{\beta}{j^d}$ for
some constant $\beta \in \mathbb{R}_{>0}$, under Assumption~\ref{assumption:sampling} (see,
e.g.,~\cite{david.nagaraja.book03}). Hence, we have that, for all $k$, $\sum_{j = 1}^\infty \PP
(D_{k,j}) < \infty$, since $d \ge 2$. Then, by the Borel-Cantelli
Lemma~\cite{grimmett.stirzaker.book01}, one can conclude that the probability that $D_{k,j}$ occurs for
infinitely many $j$ is zero, i.e., $\PP(\limsup_{j \to \infty} D_{k,j}) = 0$, which implies the
claim.
\end{proof}

The theorem is immediate from Lemmas~\ref{lemma:necessary_condition} and
\ref{lemma:infinite_often_connection}.

\begin{figure}[ht]
  \begin{center}
    \includegraphics[height = 4cm]{./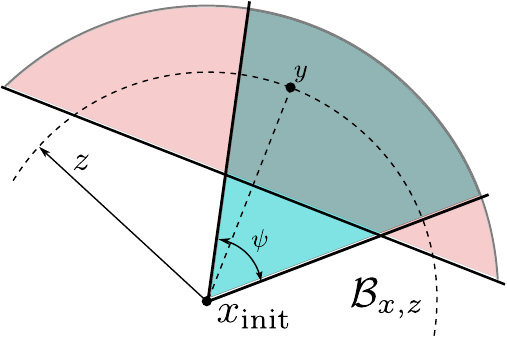}
    \caption{Illustration of intersection of ${\cal C}_k$ (shaded in blue) of aperture $\psi$ at
      least $\pi / 3$ with the region $ {\cal D}_{x_\mathrm{init}, y}$. The ball ${\cal
        B}_{x_\mathrm{init}, z}$ of radius $z$ that includes ${\cal C}_k \setminus {\cal
        D}_{x_\mathrm{init}, y}$ is also shown with dotted lines in the figure.}
    \label{figure:coneinclusion}
  \end{center}
\end{figure}

\subsection*{Proof of Theorem~\ref{theorem:rrgoptimality}}

In order to prove that ${\cal Y}^\mathrm{RRG}_i$ converges to $c^*$ almost surely, it will be shown
that $\sum_{i = 0}^\infty \PP ({\cal Y}^\mathrm{RRG}_i > c^* + \varepsilon)$ is finite for all
$\varepsilon > 0$ (see, for instance,~\cite{grimmett.stirzaker.book01} for justification of this
implication).

Recall that the volume of the unit ball in $d$ dimensions is denoted by $\zeta_d$.
Recall also that $r_i$ denotes the radius of the ball with volume $\gamma \frac{\log i}{i}$ used in
the ${\tt \NearNodes}$ procedure and that we have $r_i = \left(\frac{\gamma}{\zeta_d} \frac{\log
    i}{i} \right)^{1/d}$. 
Let $\sigma^* : [0, s^*] \to \Cl(X_\mathrm{free})$ be an optimal path with length $s^*$.
Recall that $i$ denotes the iteration number (see Algorithm~\ref{algorithm:motionplanning}).
The proof is first outlined in the following section, and then  described in detail. 

\subsubsection*{Outline of the proof}

First, construct a sequence $\{ X_i \}_{i \in \mathbb{N}}$ of subsets of $X_\mathrm{free}$ such that
(i) the sequence is monotonically non-decreasing in the partial subset ordering, i.e., $X_{i}
\subseteq X_{i + 1}$ for all $i \in \mathbb{N}$, (ii) for all large enough $i \in \mathbb{N}$, any
point in $X_i$ is at least a certain fraction of $r_i$ away from obstacles, i.e., $\Vert x - y \Vert
\ge \lambda_i$ for all $x \in X_i$ and all $y \in X_\mathrm{obs}$, where $\lambda_i = \alpha r_i$
for some constant $\alpha$ (see Figures~\ref{figure:obstacles_only} and \ref{figure:X_n}), in
particular, the set $X_i$ is included in $X_\mathrm{free}$, and (iii) the sequence converges to
$X_\mathrm{free}$ in the sense that $\cup_{i \in \mathbb{N}} X_i = X_\mathrm{free}$.

Second, construct a sequence $\{\sigma_i \}_{i \in \mathbb{N}}$ of paths such that (i) for all $i
\in \mathbb{N}$, the path $\sigma_i$ lies completely inside $X_i$, and (ii) the sequence of paths
converges to the optimal path $\sigma^*$, i.e., $\lim_{i \to \infty} \Vert \sigma_i - \sigma^* \Vert
= 0$. This can be done by dividing $\sigma^*$ into segments and constructing an approximation to any segment
that lies outside $X_i$ (see Figure~\ref{figure:segments}). Note that, since for all large enough
$i$, any point in $X_i$ is at least a certain fraction of $r_i$ away from the obstacles, in
particular, any point on the path $\sigma_i$ is at least a certain fraction of $r_i$
away from the obstacles, for all such $i \in \mathbb{N}$.

Third, for any $i \in \mathbb{N}$, construct a set $B_i$ of overlapping balls, each with radius
$q_i$ and centered at a point on the path $\sigma_i$, such that the balls in $B_i$ collectively
``cover'' the path $\sigma_i$ (see Figure~\ref{figure:allballs}). Moreover, the radius $q_i$ is
chosen in such a way that for all large enough $i$, we have that $q_i$ is less than $r_i$. Hence,
for all such large enough $i$, all the balls in $B_i$ completely lie inside the obstacle-free
region.

Finally, compute the probability of the event that at least one ball in $B_i$ contains no node of
the RRG in iteration $i$; this event is denoted by $A_i$. Subsequently, one can show that $A_i$ can not
occur infinitely often as $i$ approaches infinity, which implies that its complement, $A_i^c$, must
occur infinitely often (with probability one). That is, one can conclude that the RRG will include a node
in each ball in the set $B_i$ of balls for infinitely many $i$'s, with probability one.
Joining the vertices in subsequent balls in $B_i$, one can generate a path, whose cost is shown to be close to $c^*$. Moreover, as $i$ approaches infinity, the costs of such paths converge
$c^*$. Appropriately chosing $q_i$ guarantees that the RRG algorithm joins the points in subsequent
balls by an edge, i.e., the path is included in the RRG. Hence, the result follows.

The assumptions on the cost function are mostly used to make the convergence arguments possible. The
assumptions on the obstacles (i.e., environment), on the other hand, are primarily used for ensuring
that the paths in the sequence $\{ \sigma_i \}_{i \in \mathbb{N}}$ are collision-free.

The proof technique is based on the bin-covering technique \cite{mitzenmacher.upfal.book05}, which
is widely used for analyzing almost-sure properties of random geometric graphs (see,
e.g.,~\cite{muthukrishnan.pandurangan.symp_disc_alg05}). Similar methods were also used for
analyzing PRMs (see~\cite{ladd.kavraki.tro04} and the references therein).

\begin{figure}[ht]
  \begin{center}
    \mbox{ \subfigure[]{\scalebox{0.28}{\includegraphics{./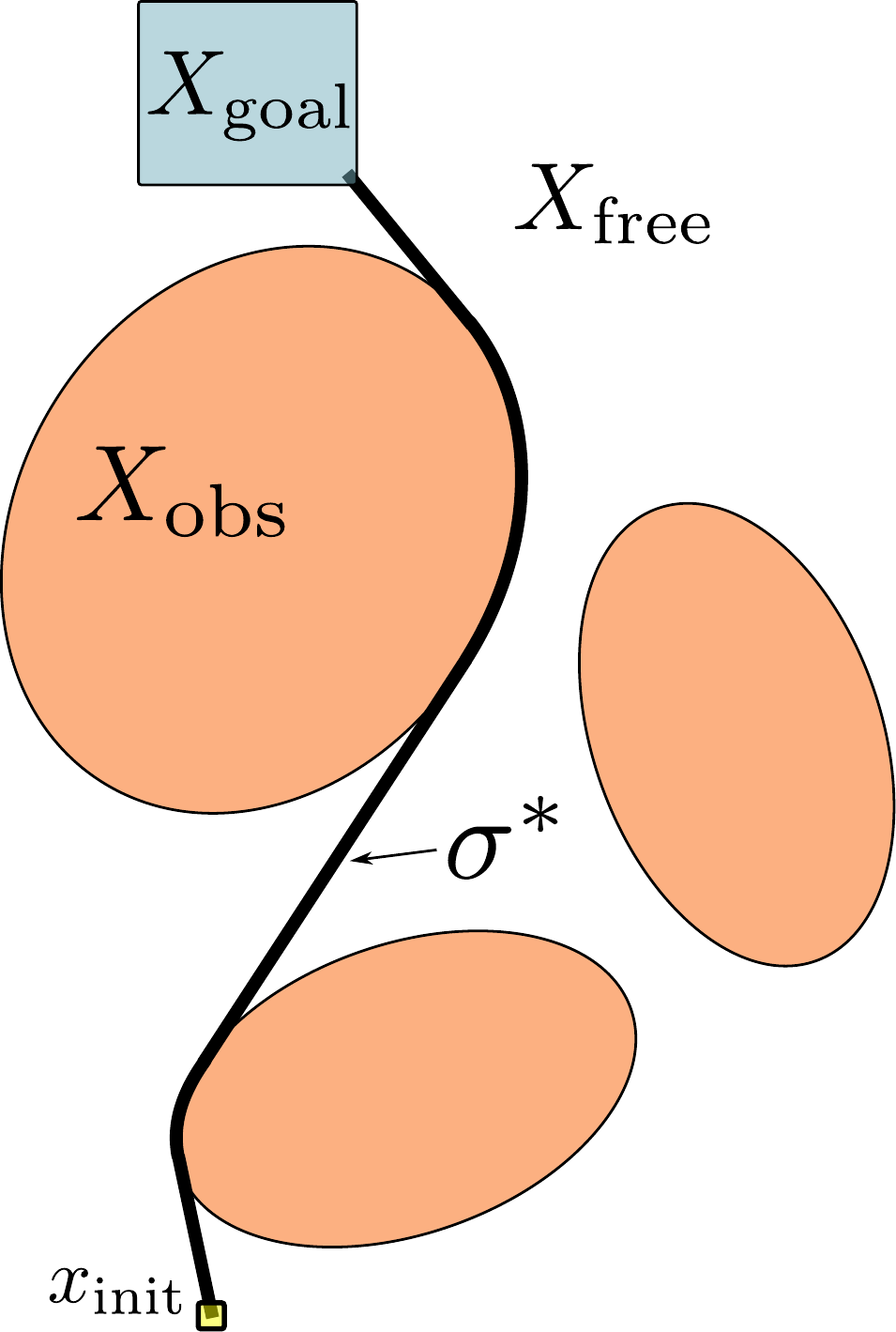}} 
        \label{figure:obstacles_only}} 
      \subfigure[]{\scalebox{0.28}{\includegraphics{./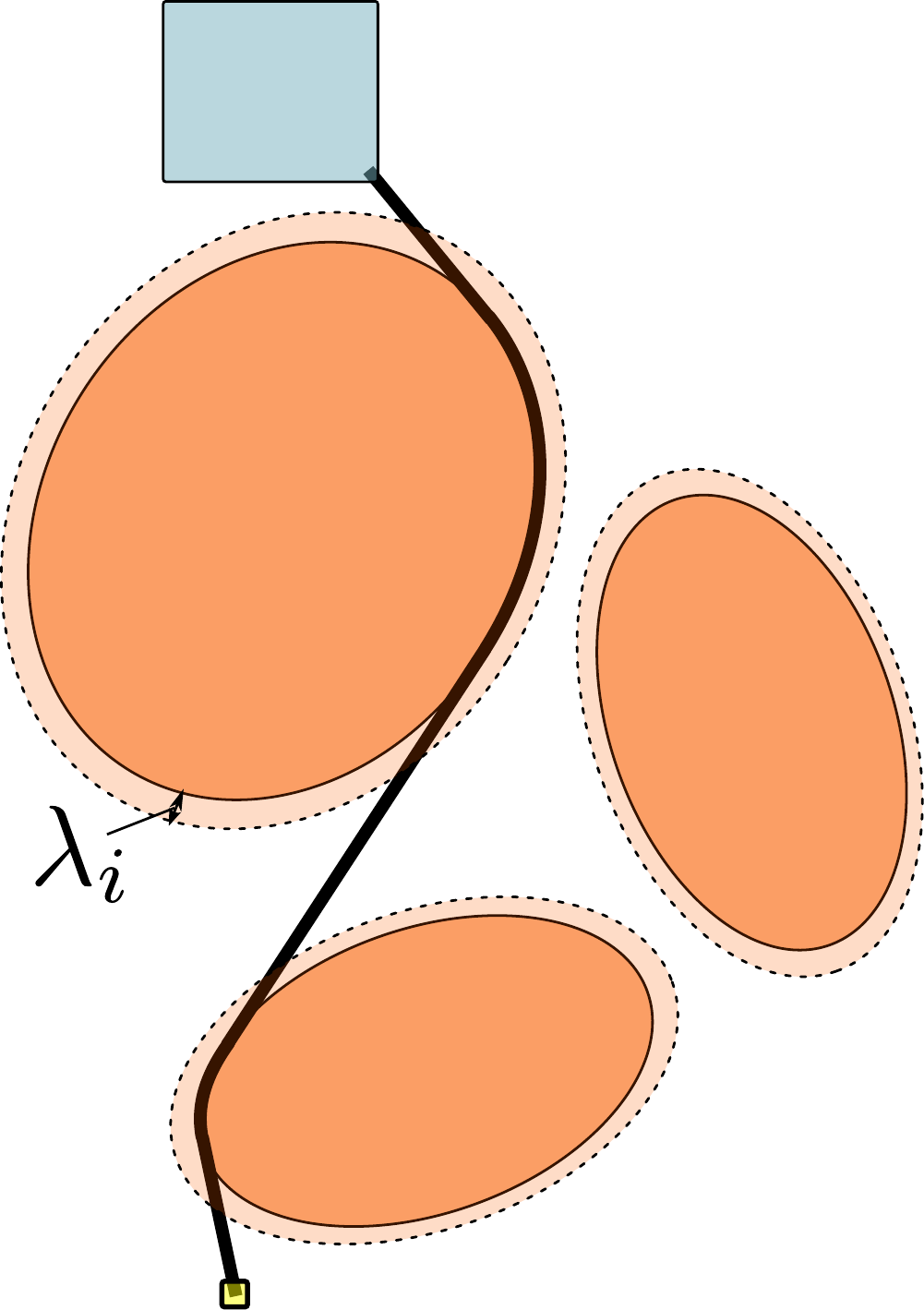}} 
        \label{figure:X_n}}
      \subfigure[]{\scalebox{0.28}{\includegraphics{./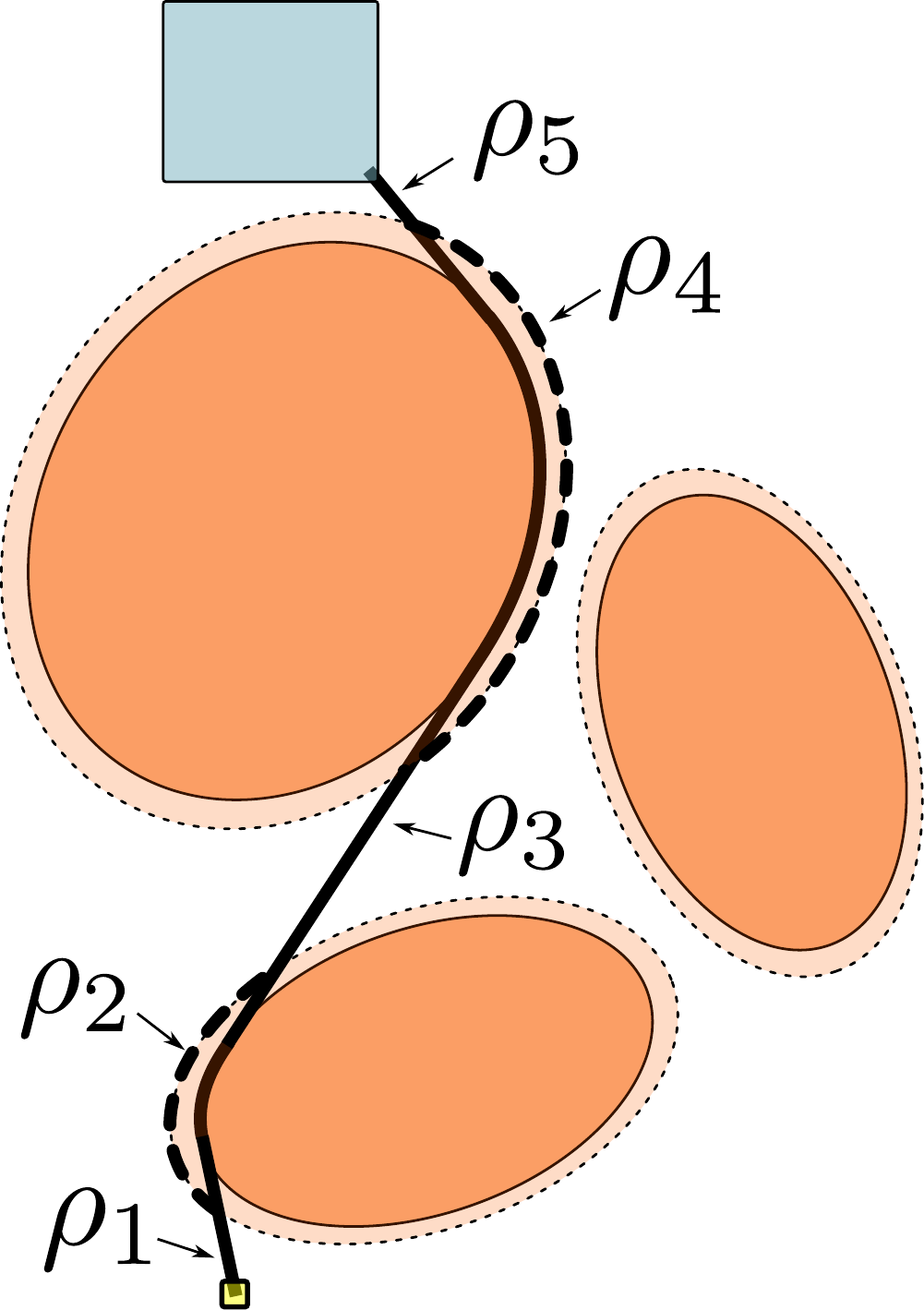}}
        \label{figure:segments}} } 
    \\ \mbox{
      \subfigure[]{\scalebox{0.35}{\includegraphics{./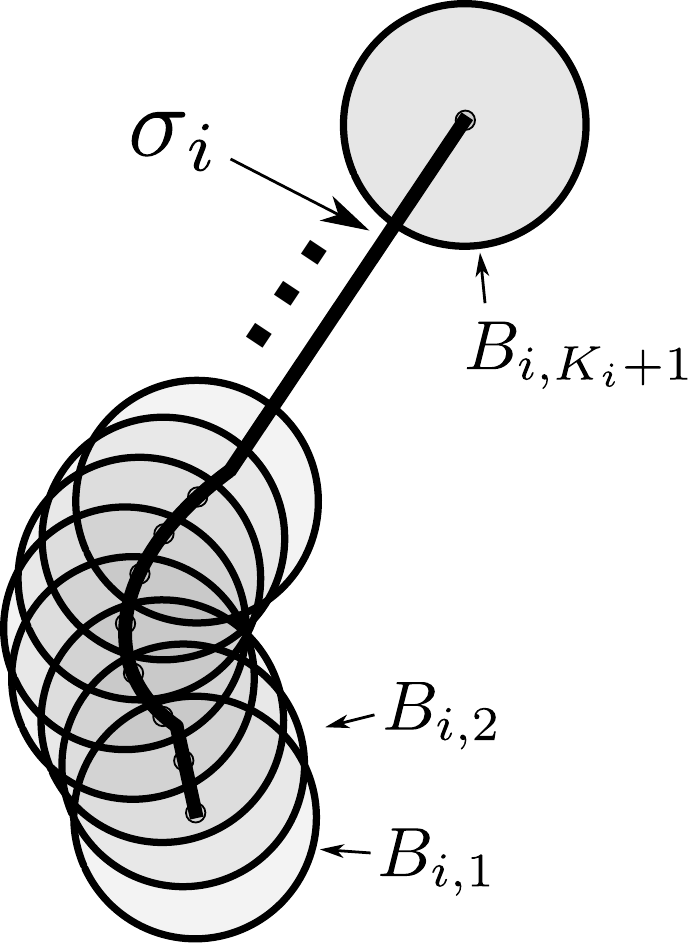}}
        \label{figure:allballs}} \quad\quad\quad
      \subfigure[]{\scalebox{0.22}{\includegraphics{./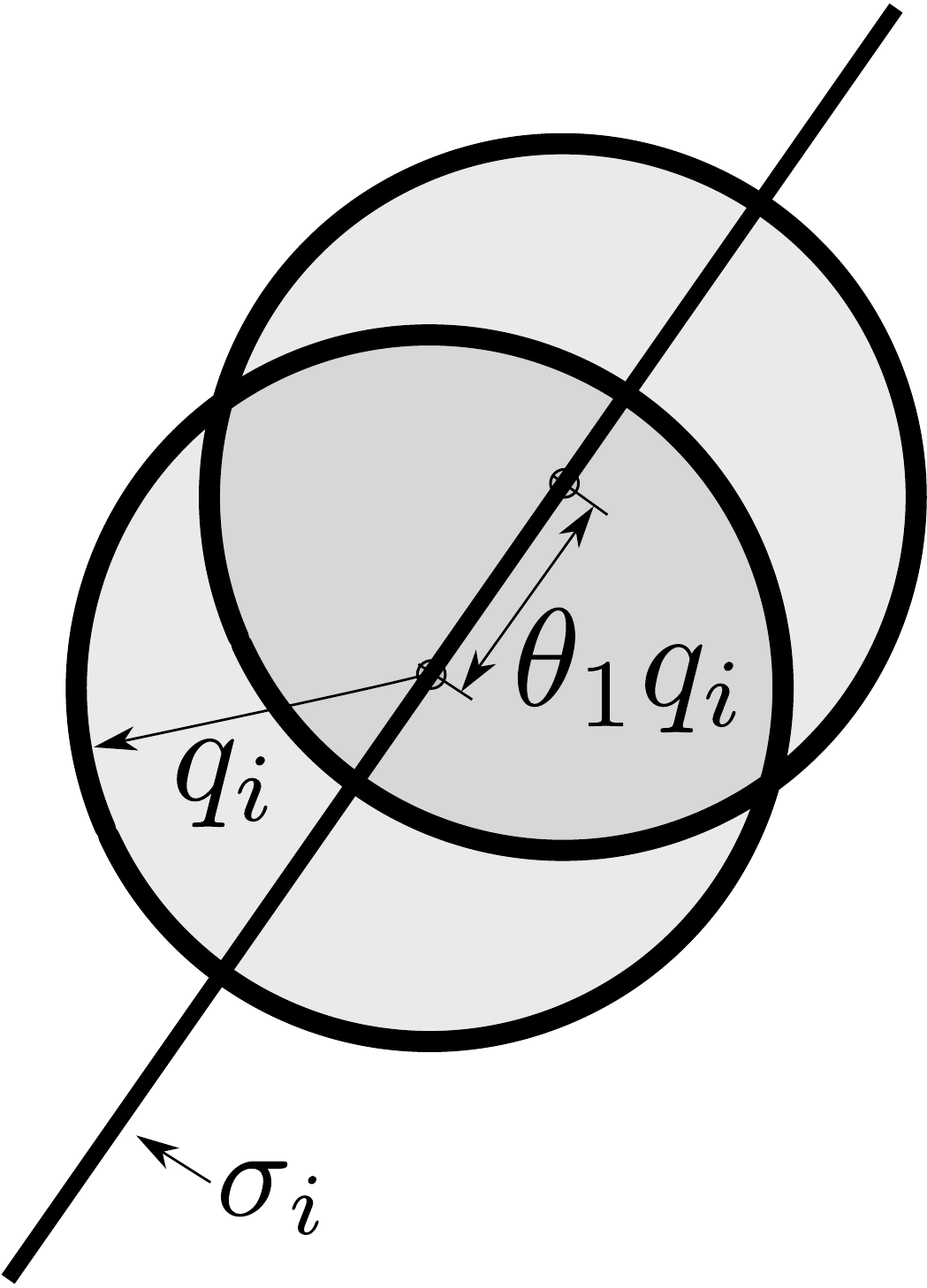}} 
        \label{figure:twoballs}} }
    \caption{A region with obstacles is illustrated in Figure~\ref{figure:obstacles_only}. An
      optimal path is also shown and denoted as $\sigma^*$. In Figure~\ref{figure:X_n}, the region
      that lies outside the light orange shaded shaded area (enclosed by the dotted lines)
      illustrates the set $X_i$. Notice that the dotted lines, hence all the points in the set
      $X_i$, have distance at least $\lambda_i$ to any point in the obstacle region. In
      Figure~\ref{figure:segments}, the optimal path is divided into parts and the segments $\rho_1,
      \rho_2, \dots, \rho_5$ are constructed so that $\rho_1$, $\rho_3$, and $\rho_5$ (solid lines)
      lie inside $X_i$, whereas $\rho_2$ and $\rho_4$ (dotted lines) lie inside $X_\mathrm{free}
      \setminus X_i$. Note that these path segments together make up the path $\sigma_i$, i.e.,
      $\sigma_i = \rho_1 \vert \rho_2 \vert \dots \vert \rho_5$. In Figure~\ref{figure:allballs},
      the tiling of $\sigma_i$ with overlapping balls in $B_i$ are illustrated. Finally, in
      Figure~\ref{figure:twoballs}, this tiling is shown in detail.}
    \label{figure:proof1}
  \end{center}
\end{figure}

\subsubsection*{Construction of the sequence $\{X_i\}_{i \in \mathbb{N}}$}
Let $\theta_1 \in (0,1)$ be a constant. First,  define a sequence
$\{\lambda_i \}_{i \in \mathbb{N}}$ of real numbers as $\lambda_i = \min\left\{ \delta,
\frac{1+\theta_1}{2+ \theta_1} r_i \right\}$, where $\delta$ is given by
Assumption~\ref{assumption:obstacles}.
Construct a sequence $\{ X_i\}_{i \in \mathbb{N}}$ of subsets of $X_\mathrm{free}$ as follows
(see Figure~\ref{figure:X_n}):
$$
X_i := X_\mathrm{free} \setminus \{x \in X_\mathrm{free} \vert \exists x' \in X_\mathrm{obs} .
\Vert x - x' \Vert < \lambda_i \}.
$$
Denote the boundary of $X_i$ by $\partial X_i$. The precise value of the constant $\theta_1$
will become clear shortly.

\subsubsection*{Construction of the sequence $\{ \sigma_i\}_{i \in \mathbb{N}}$}
Assume, without loss of any generality, that the beginning and the end of the optimal path
$\sigma^*$ are at least $\delta$  away from the obstacles, i.e., for $\tau \in \{0, s^*\}$, it is the case
that $\Vert \sigma^*(\tau) - x \Vert \ge \delta$ for all $x \in X_\mathrm{obs}$.

Construct a sequence $\{\sigma_i\}_{i \in \mathbb{N}}$ of paths, where $\sigma_i : [0, s_i]
\to X_i$, as follows (see Figure~\ref{figure:segments} for an illustration).
First,  break up the optimal path $\sigma^*$ into a minimum number of segments such that each
segment completely lies either inside $X_i$ or inside $\Cl(X_\mathrm{free})\setminus X_i$. For this
purpose, define a sequence $\{\tau_0,\tau_1, \dots, \tau_{m+1} \}$ of scalars as follows:
\begin{itemize}
\item $\tau_0 = 0$ and $\tau_{m + 1} = s^*$, 
\item for all $j \in \{1, 2, \dots, m\}$,  $\tau_j \in \partial X_i$, and
\item for all $\{0, 1, \dots, m \}$, either $\sigma^*(\tau') \in X_i$ for all
  $\tau \in (\tau_j, \tau_{j+1})$ or $\sigma^*(\tau') \notin X_i$ for all $\tau \in (\tau_j,
  \tau_{j + 1})$.
\end{itemize}
Second, for each segment of $\sigma^*$ defined by its end points $\sigma^*(\tau_j)$ and
$\sigma^*(\tau_{j+1})$,  construct a path that lies in $X_i$ and agrees with this segment of
$\sigma^*$ in its endpoints. More precisely,  define a set $\{\rho_0, \rho_1, \dots, \rho_m\}$
of paths as follows: (i) if $\sigma^*(\tau') \in X_i$ for $\tau' \in (\tau_j, \tau_{j+1})$, then
let $\rho_j$ be the $j$th segment of $\sigma^*$, i.e., $\rho_j(\tau' - \tau_j) = \sigma^*(\tau')$
for all $\tau' \in [\tau_j, \tau_{j+1}]$, (ii) if, on the other hand, $\sigma^*(\tau') \notin X_i$
for all $\tau' \in (\tau_j, \tau_{j+1})$, then let $\rho_i$ be a minimum-cost continuous path in
$X_i$ starting from $\sigma^*(\tau_j)$ and ending in $\sigma^*(\tau_{j+1})$. Such a path
always exists by the construction of $X_i$ and Assumption~\ref{assumption:obstacles}.
Finally,  define $\sigma_i$ as the concatenation of path segments $\rho_1, \rho_2, \dots,
\rho_m$, i.e.,  define $\sigma_i := \rho_0 \vert \rho_1 \vert \dots \vert \rho_m$.

Two important properties of the sequence of paths $\{\sigma_i\}_{i \in \mathbb{N}}$  follow.
First, each path $\sigma_i$ is at least $\lambda_i$ away from the obstacles. More
precisely $\Vert \sigma_i(\tau) - x \Vert \ge \lambda_i$
 for all $i \in \mathbb{N}$, $\tau \in [0, s_i]$, and $x \in X_\mathrm{obs}$. This claim follows from
Assumption~\ref{assumption:obstacles} and the fact that $\lambda_i \le \delta$ for all $i$.
The second property,  regarding the costs of the paths in $\{\sigma_i \}_{i \in
  \mathbb{N}}$, is stated in the following lemma:
\begin{lemma} \label{lemma:continuous_sigma_n} %
  Under Assumptions~\ref{assumption:additivecost} and \ref{assumption:continuouscost},  $\lim_{i \to \infty} c(\sigma_i) = c (\sigma^*)$ holds.
\end{lemma}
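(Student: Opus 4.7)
The plan is to exploit additivity (Assumption~\ref{assumption:additivecost}) and Lipschitz continuity (Assumption~\ref{assumption:continuouscost}) of the cost, together with the fact that $\lambda_i\to 0$ as $i\to\infty$ (since $r_i\to 0$). By additivity,
\[
c(\sigma_i)-c(\sigma^*) \;=\; \sum_{j=0}^{m}\Bigl[c(\rho_j)-c\bigl(\sigma^*\vert_{[\tau_j,\tau_{j+1}]}\bigr)\Bigr].
\]
For indices $j$ with $\sigma^*\vert_{[\tau_j,\tau_{j+1}]}\subset X_i$, the construction gives $\rho_j = \sigma^*\vert_{[\tau_j,\tau_{j+1}]}$ and the summand vanishes. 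It therefore suffices to bound the contribution of the ``modified'' indices, namely those for which the corresponding piece of $\sigma^*$ lies in $\Cl(X_\mathrm{free})\setminus X_i$.

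For each modified index $j$, I would exhibit an auxiliary path $\tilde\rho_j$ lying in $X_i$ and joining $\sigma^*(\tau_j)$ to $\sigma^*(\tau_{j+1})$ whose uniform distance from $\sigma^*\vert_{[\tau_j,\tau_{j+1}]}$ is $O(\lambda_i)$. The existence of such a $\tilde\rho_j$ is where the obstacle-spacing hypothesis (Assumption~\ref{assumption:obstacles}) is used: around every point of $\sigma^*$ there is a free $\delta$-ball, so the short excursion of $\sigma^*$ outside $X_i$ can be deformed by at most a constant multiple of $\lambda_i$ into a path that lies at depth $\lambda_i$ inside $X_\mathrm{free}$. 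The Lipschitz estimate then yields
\[
c(\tilde\rho_j) \;\le\; c\bigl(\sigma^*\vert_{[\tau_j,\tau_{j+1}]}\bigr)+\kappa\,O(\lambda_i),
\]
and since $\rho_j$ is a minimum-cost path in $X_i$ with the same endpoints, $c(\rho_j)\le c(\tilde\rho_j)$. Combining these inequalities with the bound $c(\sigma^*)\le c(\sigma_i)$, which follows from optimality of $\sigma^*$ together with the feasibility of $\sigma_i$, gives
\[
0 \;\le\; c(\sigma_i)-c(\sigma^*) \;\le\; \kappa\,O(\lambda_i)\,m_i,
\]
where $m_i$ is the number of modified segments.

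The hardest part will be the geometric construction of $\tilde\rho_j$ with the claimed sup-distance and the control of $m_i$. For the geometric step I would cover the original excursion by a finite chain of free $\delta$-balls supplied by Assumption~\ref{assumption:obstacles} and build the detour by local radial pushes of size $\lambda_i$; the factor $\frac{1+\theta_1}{2+\theta_1}<1$ appearing in the definition of $\lambda_i$ is precisely the slack that keeps the pushed path strictly inside $X_i$ rather than on its boundary. For $m_i$, I would argue that the modified indices correspond to the connected components of the open set $\{\tau\in[0,s^*]:\mathrm{dist}(\sigma^*(\tau),X_\mathrm{obs})<\lambda_i\}$, which shrinks monotonically as $i\to\infty$ toward the compact set where $\sigma^*$ actually meets $X_\mathrm{obs}$; hence $m_i$ is uniformly bounded for all sufficiently large $i$. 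Once $m_i=O(1)$, the right-hand side above is $O(\lambda_i)=O(r_i)\to 0$, which gives the claim.
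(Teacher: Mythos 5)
Your proposal is correct and follows essentially the same route as the paper: decompose the cost difference segment-by-segment via Assumption~\ref{assumption:additivecost}, bound each modified segment's contribution by $\kappa\cdot O(\lambda_i)$ via Assumption~\ref{assumption:continuouscost}, and conclude from $m_i$ being eventually bounded and $\lambda_i\to 0$. The only difference is one of rigor rather than strategy: you make explicit the comparison path $\tilde\rho_j$ and the minimality of $\rho_j$ where the paper simply asserts that ``the cost of each such segment is bounded by $\kappa\,\lambda_i$,'' and your justification that $m_i$ stays bounded is at the same level of detail as the paper's unproved claim that $m_i$ is non-increasing.
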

\begin{proof}
  First, for all $i \in \mathbb{N}$, $\sigma_i$ is continuous by construction.
  Second, note that  $X_i \subseteq X_{i+1}$ for all $i \in \mathbb{N}$, and that
  $\cup_{i \in \mathbb{N}} X_i = X_\mathrm{free}$.
  Let $m_i$ be the number of segments of $\sigma^*$ that lie in $X_\mathrm{free} \setminus X_i$.
  By Assumption~\ref{assumption:continuouscost}, the cost of each such segment is bounded by $\kappa
  \, \lambda_i$. Then, by Assumption~\ref{assumption:additivecost}, $\vert c(\sigma_i) - c(\sigma^*)
  \vert \le m_i \kappa \lambda_i$.
  Finally, since $m_i$ is a monotonically non-increasing function of $i$, and $\lambda_i$ converges
  to zero as $i$ approaches infinity, the lemma follows.
\end{proof}

\subsubsection*{Construction of the sequence $\{ B_i \}_{i \in \mathbb{N}}$ of sets of balls}

Next, for each $i \in \mathbb{N}$, construct a set $B_i$ of equal-radius overlapping balls
that collectively ``cover'' the path $\sigma_i$.

Recall the constant $\theta_1\in (0,1)$, introduced above. The amount of overlap between two
consecutive balls will be parametrized by this constant $\theta_1$. Denote the radius of each ball
in $B_i$ by $q_i$, which we define as $q_i := \frac{\lambda_i}{(1 + \theta_1)}$ for reasons to
become clear shortly.

Construct the set $B_i = \{B_{i,1}, B_{i,2}, \dots, B_{i, K_i+1} \}$ of balls recursively
as follows (see Figures~\ref{figure:allballs} and \ref{figure:twoballs}):
\begin{itemize}
\item Let $B_{i,1}$ be centered at $\sigma_i(0)$.
\item For all $k > 1$, let $B_{i,k}$ centered at $\sigma_i(\tau)$, where $\tau$ is such that the
  centers of $B_{i,k}$ and $B_{i, k-1}$ are exactly $\theta_1\, q_i$ apart from each other.
\item Let $K_i$ be the number of balls that can possibly be generated in this manner and, finally,
  let $B_{i,K_{i}+1}$ be the ball centered at $\sigma_i(s_i)$.
\end{itemize}

Now, consider paths that can be constructed by connecting points from balls in $B_{i}$ in a
certain way.
More precisely, for all $k \in \{1,2, \dots, K_i+1\}$, let $x_k$ be a point from the ball $B_{i,k}$.
Consider a path $\sigma_i'$ that is constructed by joining the line segments connecting two
consecutive points in $\{x_1, x_2, \dots, x_{K_i + 1}\}$, i.e., $\sigma_i' = {\tt Line}(x_1, x_2)
\vert {\tt Line}(x_2, x_3) \vert \dots \vert {\tt Line}(x_{K_i}, x_{K_i + 1})$. With an abuse of
notation, let $\Sigma_i'$ denote the set of all such paths.

First, notice that, by the choice of the radius $q_i$ of the balls in $B_i$, any path $\sigma_i' \in
\Sigma_i'$ is collision-free.
\begin{lemma}
  Let Assumption~\ref{assumption:obstacles} hold. For any path $\sigma_i' \in \Sigma_i'$,
  where $\sigma_i' : [0,s_i'] \to X$, $\sigma_i'(\tau) \in X_\mathrm{free}$ for all $\tau \in [0,s_i']$.
\end{lemma}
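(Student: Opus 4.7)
The plan is to show that every individual line segment $[x_k, x_{k+1}]$ composing $\sigma_i'$ lies entirely in $X_\mathrm{free}$; since $\sigma_i'$ is the concatenation of such segments, the conclusion follows. Let $c_k$ denote the center of $B_{i,k}$, so that $c_k$ is a point on $\sigma_i$. Two facts from the construction will be used: (a) every point on $\sigma_i$ lies in $X_i$, hence is at distance at least $\lambda_i$ from every point of $X_\mathrm{obs}$, and (b) consecutive centers satisfy $\|c_{k+1} - c_k\| \le \theta_1\, q_i$, while each chosen point satisfies $\|x_k - c_k\| \le q_i$.

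The key estimate is a bound on the distance from an arbitrary point $p = (1-t)x_k + t\, x_{k+1}$, $t \in [0,1]$, on the segment $[x_k, x_{k+1}]$ to the nearer of the two centers $c_k,\, c_{k+1}$. Applying the triangle inequality twice,
\[
\|p - c_k\| \le (1-t)\|x_k - c_k\| + t\bigl(\|x_{k+1} - c_{k+1}\| + \|c_{k+1} - c_k\|\bigr) \le q_i + t\,\theta_1 q_i,
\]
and symmetrically $\|p - c_{k+1}\| \le q_i + (1-t)\theta_1 q_i$. Whichever of $t \le 1/2$ or $t \ge 1/2$ holds, at least one of these bounds is at most $q_i(1+\theta_1/2)$. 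Substituting $q_i = \lambda_i/(1+\theta_1)$ yields
\[
q_i\bigl(1+\tfrac{\theta_1}{2}\bigr) \;=\; \lambda_i - \tfrac{\theta_1}{2}\,q_i \;<\; \lambda_i.
\]
Thus $p$ lies strictly within $\lambda_i$ of some point on $\sigma_i$, which is itself at distance at least $\lambda_i$ from $X_\mathrm{obs}$. Hence $p$ is at distance at least $\theta_1 q_i/2 > 0$ from every obstacle, in particular $p \in X_\mathrm{free}$. The argument applies uniformly to every $k \in \{1,\dots,K_i\}$ (the only modification for the last pair $(c_{K_i}, c_{K_i+1})$ is that $\|c_{K_i+1} - c_{K_i}\| \le \theta_1 q_i$ rather than equality, which only improves the bounds above), so $\sigma_i'(\tau) \in X_\mathrm{free}$ for every $\tau \in [0, s_i']$.

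There is no serious obstacle in this proof; the entire content is bookkeeping of the choice $q_i = \lambda_i/(1+\theta_1)$, which was made precisely so that the overlap $\theta_1 q_i$ between consecutive balls is strictly less than the clearance $\lambda_i - q_i = \theta_1 q_i \cdot (1+\theta_1)/\theta_1$ of a ball from the obstacle region, leaving a uniform positive safety margin $\theta_1 q_i/2$. Assumption~\ref{assumption:obstacles} itself is not invoked directly here, having already been consumed in guaranteeing that the paths $\sigma_i$ and hence the centers $c_k$ lie in $X_i$.
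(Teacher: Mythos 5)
Your proof is correct and follows essentially the same route as the paper's: decompose $\sigma_i'$ into the segments $[x_k,x_{k+1}]$ and use the triangle inequality with the ball centers to show each segment keeps clear of $X_\mathrm{obs}$, given that the centers lie on $\sigma_i\subset X_i$ and hence have clearance $\lambda_i$. The only (harmless) difference is that the paper bounds the distance from every point of the segment to the single center $y_k$ by exactly $(1+\theta_1)q_i=\lambda_i$, whereas you bound the distance to the nearer of the two centers by $q_i(1+\theta_1/2)<\lambda_i$, which yields a strictly positive safety margin $\theta_1 q_i/2$ but does not change the substance of the argument.
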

  
\begin{proof}
  Let $\{x_1, x_2, \dots, x_{K_i + 1} \}$ be the set of points that forms $\sigma_i' = {\tt
    Line}(x_1, x_2) \vert \dots \vert {\tt Line}(x_{K_i}, x_{K_i+1})$. It will be shown that each segment
  ${\tt Line}(x_{k}, x_{k+1})$ is obstacle-free, i.e., lies in $X_\mathrm{free}$, which implies the
  lemma. Let $y_k$ denote the center of the ball $B_{i,k}$.
  Note that $x_k$ and $y_k$ have distance at most $q_i$.
  Similarly, noting that $x_{k+1}$ has distance at most $q_i$ to $y_{k+1}$ and that $y_k$ and
  $y_{k+1}$ are at most $\theta_1 \, q_i$ apart from each other, one can conclude that $x_{k + 1}$ has
  distance at most $(1 + \theta_1) q_i$ to $y_k$. Indeed, $y_k$ has distance at most $(1+\theta_1)
  \, q_i = \lambda_i$ to any point in the convex combination $[x_k, x_{k + 1}]$.
  Finally, recall that, as a consequence of Assumption~\ref{assumption:obstacles}, any point with
  distance at most $\lambda_i$ from $y_k$ lies in $X_\mathrm{free}$, which completes the proof.
\end{proof}

Second, notice that the cost of $\sigma_i'$ is ``close'' to that of $\sigma_i$. Indeed, the
following lemma holds.
\begin{lemma} \label{lemma:convergence_of_sigma_n_prime} Let
  Assumptions~\ref{assumption:additivecost} and \ref{assumption:continuouscost} hold. Let $\{
  \sigma_i' \}_{i \in \mathbb{N}}$ be any sequence of paths such that $\sigma_i' \in \Sigma_i'$ for
  all $i \in \mathbb{N}$, then $\lim_{i \to \infty} \vert c(\sigma_i') - c(\sigma^*) \vert = 0$.
\end{lemma}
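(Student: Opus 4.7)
The plan is to use the triangle inequality to reduce the claim to two subgoals, one of which is already in hand:
$$|c(\sigma_i') - c(\sigma^*)| \le |c(\sigma_i') - c(\sigma_i)| + |c(\sigma_i) - c(\sigma^*)|.$$
By Lemma~\ref{lemma:continuous_sigma_n} the second term vanishes as $i \to \infty$, so the remaining task is to show that $|c(\sigma_i') - c(\sigma_i)| \to 0$, uniformly over all admissible choices of the points $x_1, \dots, x_{K_i+1}$ defining $\sigma_i' \in \Sigma_i'$.

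To control this difference I would use Assumption~\ref{assumption:additivecost} to decompose both paths along the common chain of balls. Write $y_1, \dots, y_{K_i+1}$ for the centers of the balls in $B_i$, and $\sigma_i^{(k)}$ for the subpath of $\sigma_i$ running from $y_k$ to $y_{k+1}$; by construction $\sigma_i'$ is the concatenation ${\tt Line}(x_1,x_2)\,|\,\cdots\,|\,{\tt Line}(x_{K_i},x_{K_i+1})$. Additivity then yields
$$c(\sigma_i) - c(\sigma_i') \;=\; \sum_{k=1}^{K_i}\bigl(c(\sigma_i^{(k)}) - c({\tt Line}(x_k,x_{k+1}))\bigr).$$
Each pair of corresponding subpaths lives in a locally small region: $\sigma_i^{(k)}$ is contained in the closed ball of radius $\theta_1 q_i$ about $y_k$ (because $y_{k+1}$ was chosen as the first point past $y_k$ at Euclidean distance exactly $\theta_1 q_i$), while ${\tt Line}(x_k,x_{k+1})$ is contained in the closed ball of radius $(1+\theta_1) q_i$ about $y_k$ since $\|x_\ell - y_\ell\| \le q_i$ for each $\ell$. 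Applying Assumption~\ref{assumption:continuouscost} segment-by-segment then bounds each summand by a constant multiple of $q_i$.

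The main obstacle is that this naive aggregation of $K_i = O(1/q_i)$ per-segment errors of size $O(q_i)$ yields only a bounded, not asymptotically vanishing, quantity. To close this gap, I would instead apply Assumption~\ref{assumption:continuouscost} globally to the full paths $\sigma_i$ and $\sigma_i'$, arguing that under the arc-length parameterization the pointwise deviation $\|\sigma_i(\tau s_i) - \sigma_i'(\tau s_i')\|$ is itself $O(q_i)$ uniformly in $\tau \in [0,1]$. The key observation is that both paths traverse, in the same order, points within $q_i$ of each common center $y_k$, and the uniform spacing $\theta_1 q_i$ between consecutive centers prevents the two arc-length normalizations from drifting by more than a bounded number of balls; hence at every $\tau$ the two paths are simultaneously confined to the same small cluster of overlapping balls around some common center $y_{k(\tau)}$, bounding $\|\sigma_i(\tau s_i) - \sigma_i'(\tau s_i')\|$ by a constant multiple of $q_i$. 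Combining this global sup-distance bound with Assumption~\ref{assumption:continuouscost} gives $|c(\sigma_i') - c(\sigma_i)| = O(\kappa\,q_i)$, and since $q_i = \lambda_i/(1+\theta_1) \to 0$ by the construction of $\lambda_i$, the claim follows.
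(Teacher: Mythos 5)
Your proposal is correct and follows essentially the same route as the paper: the paper's own (very terse) proof likewise asserts that the distance between $\sigma_i'$ and $\sigma_i$ vanishes, invokes Assumptions~\ref{assumption:additivecost} and \ref{assumption:continuouscost} to conclude $\vert c(\sigma_i') - c(\sigma_i)\vert \to 0$, and then appeals to Lemma~\ref{lemma:continuous_sigma_n} via the triangle inequality. Your writeup is in fact more careful than the paper's, since you explicitly identify why a naive per-segment aggregation of $O(q_i)$ errors over $K_i = O(1/q_i)$ segments would fail and instead justify the global sup-distance bound that the paper takes for granted.
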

\begin{proof}
  The distance between a path $\sigma_i' \in \Sigma_i'$ and the path $\sigma_i$ approaches
   zero as $i$ approaches infinity. Moreover, by Assumptions~\ref{assumption:additivecost} and
  \ref{assumption:continuouscost}, $\vert c(\sigma_i') - c(\sigma_i) \vert$ converges to zero.
  Hence, the result follows from Lemma~\ref{lemma:continuous_sigma_n}.
\end{proof}

Let us define $\varepsilon_i = \sup_{\sigma_i' \in \Sigma_i'} \vert c(\sigma_i') - c^* \vert$.
Hence, Lemma~\ref{lemma:convergence_of_sigma_n_prime} establishes that $\lim_{i \to \infty}
\varepsilon_i = 0$.

Third, according to our choice of $q_i$,  any path in $\Sigma_i'$ is ``constructable'' by the
RRG in the following sense.
\begin{lemma} \label{lemma:constructibility} %
  For all $i \in \mathbb{N}$ and for all $k \in \{1,2, \dots, K_i\}$,  $\Vert x_{k+1}
  - x_k \Vert \le r_i$, where $x_k \in B_{i,k}$.
\end{lemma}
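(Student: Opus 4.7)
The plan is to apply the triangle inequality together with the definitions of $q_i$ and $\lambda_i$. Let $y_k$ and $y_{k+1}$ denote the centers of $B_{i,k}$ and $B_{i,k+1}$ respectively. By the construction of the sequence $B_i$, consecutive centers are placed along $\sigma_i$ at distance $\theta_1\,q_i$ from each other when $k < K_i$, and the final ball $B_{i,K_i+1}$ is placed at distance at most $\theta_1\,q_i$ from $y_{K_i}$ (otherwise the recursive procedure would have produced one more ball before terminating). Hence, in all cases, $\|y_{k+1} - y_k\| \le \theta_1\,q_i$ for $k \in \{1, \ldots, K_i\}$.

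Next, since $x_k \in B_{i,k}$ and $x_{k+1} \in B_{i,k+1}$, we have $\|x_k - y_k\| \le q_i$ and $\|x_{k+1} - y_{k+1}\| \le q_i$. Three applications of the triangle inequality then give
$$
\|x_{k+1} - x_k\| \;\le\; \|x_{k+1} - y_{k+1}\| + \|y_{k+1} - y_k\| + \|y_k - x_k\| \;\le\; (2 + \theta_1)\,q_i.
$$

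It remains to verify that $(2+\theta_1)\,q_i \le r_i$. By definition, $q_i = \lambda_i/(1+\theta_1)$, and $\lambda_i = \min\{\delta,\, \tfrac{1+\theta_1}{2+\theta_1}\,r_i\} \le \tfrac{1+\theta_1}{2+\theta_1}\,r_i$. Therefore $(2+\theta_1)\,q_i = \tfrac{2+\theta_1}{1+\theta_1}\,\lambda_i \le r_i$, and the lemma follows.

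I do not foresee a genuine obstacle here: the entire statement is essentially a bookkeeping check that the radius $q_i$ and the overlap parameter $\theta_1$ were chosen precisely so that this estimate holds with room to spare. The only subtlety is handling the terminal ball $B_{i,K_i+1}$, whose center may lie closer than $\theta_1\,q_i$ to $y_{K_i}$, but this only strengthens the inequality.
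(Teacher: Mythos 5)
Your proof is correct and follows essentially the same argument as the paper: a triangle inequality through the two ball centers giving $(2+\theta_1)q_i$, followed by the observation that $(2+\theta_1)q_i = \tfrac{2+\theta_1}{1+\theta_1}\lambda_i \le r_i$ by the definitions of $q_i$ and $\lambda_i$. Your explicit handling of the terminal ball $B_{i,K_i+1}$ is a small extra care the paper leaves implicit, but it does not change the argument.
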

\begin{proof}
  Recall that the radius of each ball in $B_i$ is $q_i = \frac{\lambda_i}{(1 + \theta_1)}$. Given
  any two points $x_{k}$ and $x_{k+1}$, both $x_k$ and $x_{k+1}$ have distances at most
  $q_i$ to the centers of the balls $B_{i, k}$ and $B_{i, k+1}$, respectively. Note also that the
  centers of these balls have distance at most $\theta_1 \, q_i$ to each other. Using the triangle
  inequality together with the definitions of $\lambda_i$ and $q_i$, one obtains $\Vert x_{k+1} - x_k
  \Vert \le (2 + \theta_1) q_i = \frac{2 + \theta_1}{1 + \theta_1} \lambda_i \le r_i$.
\end{proof}
Intuitively, Lemma~\ref{lemma:constructibility} establishes that if each ball in $B_i$ contains at
least one node of the RRG by the end of iteration $i$, then the RRG algorithm will indeed connect
these vertices with edges and construct a path $\sigma_i'$ from $\Sigma_i'$.

\subsubsection*{The probability that a path from $\Sigma_i'$ is constructed}
Let $A_{i,k}$ be the event that the RRG has no node inside the ball $B_{i,k}$ at the end of
iteration $i$.
Moreover, let $A_{i}$ be the event that at least one of the balls in $B_i$ contains no node of the
RRG at the end of iteration $i$, i.e., $A_i = \bigcup_{k = 1}^{K_i+1} A_{i,k}$.

It is appropriate to explain how the main result of this theorem is related to the sequence $\{A_{i}\}_{i \in
  \mathbb{N}}$ of events. Recall that $\varepsilon_i$ was defined as $\varepsilon_i =
\sup_{\sigma_i' \in \Sigma_i'} \vert c(\sigma_i') - c^* \vert$ and that it was already established
that $\varepsilon_i$ converges to zero as $i$ approaches infinity. Hence, given any $\varepsilon$,
one can find a number $i_1 \in \mathbb{N}$ such that for all $i \ge i_1$,
$\varepsilon_i < \varepsilon$.
Recall also that $r_i$ converges to zero as $i$ approaches infinity. Hence, there exists some $i_2
\in \mathbb{N}$ such that $r_i \le \delta$ for all $i \ge i_2$.
Let $i_0 = \max\{i_1, i_2 \}$ and note the following upper bound:
\begin{eqnarray*}
  & & \hspace{-0.25in} \sum_{i = 0}^\infty \PP\left(\left\{{\cal Y}^\mathrm{RRG}_i 
  > c^*+\varepsilon\right\}\right) \\
  & & \hspace{-0.22in} = \sum_{i = 0}^{i_0 - 1} \PP \left(\left\{ {\cal Y}^\mathrm{RRG}_i > c^*
      + \varepsilon \right\}\right) 
  + \sum_{i = i_0}^\infty \PP\left(\left\{ {\cal Y}^\mathrm{RRG}_i > c^*+\varepsilon \right\}\right) \\
  & & \hspace{-0.22in} \le  i_0 + \sum_{i = i_0}^\infty \PP\left(\left\{{\cal Y}^\mathrm{RRG}_i >
      c^* + \varepsilon_i\right\}\right) \le  i_0 + \sum_{i = i_0}^\infty \PP(A_i).
\end{eqnarray*}

To complete the proof, it will be established that $\sum_{i = 0}^\infty \PP (A_i) < \infty$, which implies
that $\sum_{i = 0}^\infty \PP (\{{\cal Y}^\mathrm{RRG}_i > c^* + \varepsilon \}) \le \infty$ for all
$\varepsilon > 0$, which in turn implies the result.

To formally show this claim,  define the following sequence of events. For all
$i \in \mathbb{N}$, let $C_i$ be the event that for any point $x \in X_\mathrm{free}$ the RRG
algorithm includes a node $v$ such that $\Vert x - v \Vert \le \eta$ and that the line segment
${\tt Line}(x, v)$ joining $v$ and $x$ is obstacle-free.
Under the assumptions of Theorem~\ref{theorem:rrtrrgequivalency}, the following lemma holds:
\begin{lemma} \label{lemma:proof_pos:exponential_coverage} %
  Let the assumptions of Theorem~\ref{theorem:rrtrrgequivalency} and
  Assumption~\ref{assumption:obstacles} hold, then there exist constants $a, b \in \mathbb{R}_{>0}$
  such that $\PP (C_i^c) \le a e^{-b i}$ for all $i \in \mathbb{N}$.
\end{lemma}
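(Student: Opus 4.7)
My strategy is to reduce the event $C_i^c$ to a finite union of ``missed ball'' events and then apply Theorem~\ref{theorem:exponentialdecayrrt} via Lemma~\ref{lemma:vertexedge}. As a preliminary step, I observe that Assumption~\ref{assumption:obstacles} with parameter $\delta$ implies the analogous statement with any smaller $\delta' \in (0,\delta]$: given the certificate $x'$ for $x$ with ${\cal B}_{x',\delta} \subset X_\mathrm{free}$, a point $x''$ on the segment $[x,x']$ (possibly $x'' = x$) can be chosen so that $\|x - x''\| \le \delta'$ and ${\cal B}_{x'',\delta'} \subset {\cal B}_{x',\delta} \subset X_\mathrm{free}$. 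Hence I may assume without loss of generality that $\delta \le \eta/3$.

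Next, I construct a finite collection of ``witness'' balls $\{W_j\}_{j=1}^M$. Let $S_\delta := \{y \in X_\mathrm{free} : {\cal B}_{y,\delta} \subset X_\mathrm{free}\}$; by Assumption~\ref{assumption:obstacles}, the family $\{{\cal B}_{y,\delta} : y \in S_\delta\}$ is an open cover of $X_\mathrm{free}$. Using boundedness of $S_\delta$ together with a packing argument, I extract a finite subcollection $\{y_1, \dots, y_M\} \subset S_\delta$ such that $X_\mathrm{free} \subset \bigcup_{j=1}^M {\cal B}_{y_j,\delta}$, and define $W_j := {\cal B}_{y_j,\delta/2}$; by construction $W_j \subset {\cal B}_{y_j,\delta} \subset X_\mathrm{free}$. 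Now suppose every $W_j$ contains at least one RRG vertex at iteration $i$: given any $x \in X_\mathrm{free}$, pick $j$ with $x \in {\cal B}_{y_j,\delta}$ and any $v \in W_j \cap {\cal V}^\mathrm{RRG}_i$. Then $\|v - x\| \le \delta/2 + \delta = 3\delta/2 \le \eta$, and, by convexity of the ball, the segment $[v,x]$ lies in ${\cal B}_{y_j,\delta} \subset X_\mathrm{free}$, so $C_i$ holds. This proves
$$
C_i^c \subset \bigcup_{j=1}^M \{W_j \cap {\cal V}^\mathrm{RRG}_i = \emptyset\}.
$$

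For each $j$, I exhibit an attraction sequence ending inside $W_j$ by chaining overlapping free balls of radius at most $\delta/2$ from $x_\mathrm{init}$ along a path to $y_j$ in the connected component of $X_\mathrm{free}$ reachable from $x_\mathrm{init}$ (existence of such a chain follows from path-connectedness together with Assumption~\ref{assumption:obstacles}). This yields an attraction sequence ${\cal A}^{(j)}$ of length $k_j$ with associated parameter $p_{k_j} > 0$. Theorem~\ref{theorem:exponentialdecayrrt} and Lemma~\ref{lemma:vertexedge} then give
$$
\PP\left(\{W_j \cap {\cal V}^\mathrm{RRG}_i = \emptyset\}\right) \le e^{-\frac{1}{2}(i\,p_{k_j} - 2k_j)}.
$$
A union bound over $j = 1, \dots, M$ produces $\PP(C_i^c) \le a\,e^{-b i}$ with $a,b > 0$ depending only on the problem instance.

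The main obstacle is the second step: extracting a finite cover of $X_\mathrm{free}$ by safe $\delta$-balls centered in $S_\delta$, since $X_\mathrm{free}$ is bounded but not necessarily compact and a naive packing argument tends to inflate ball radii beyond $\delta$. The delicate point is to exploit the fact that, by Assumption~\ref{assumption:obstacles}, $X_\mathrm{free}$ is a union of $\delta$-balls, together with boundedness, to guarantee that finitely many suffice while keeping the cover inside $X_\mathrm{free}$; reducing the working $\delta$ again if necessary is what allows the overlap slack needed for the argument.
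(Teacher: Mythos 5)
Your proposal is correct in its overall architecture and follows the same high-level strategy as the paper's proof (reduce $C_i^c$ to finitely many ``some region contains no vertex'' events, bound each exponentially, union bound, and use convexity of the region to certify the obstacle-free segment of length at most $\eta$), but the implementation differs in two ways that are worth noting. First, the paper partitions $X_\mathrm{free}$ into finitely many \emph{convex} cells each bounded by an $\eta$-ball, whereas you cover it by the safe $\delta$-balls supplied by Assumption~\ref{assumption:obstacles}. Your route is actually more robust: a finite convex partition of $X_\mathrm{free}$ need not exist when obstacle boundaries are curved (a convex set in the exterior of a disc meets the bounding circle in at most one point, so no finite convex partition can exist there), while Assumption~\ref{assumption:obstacles} hands you exactly the convex free neighborhoods you need. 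Second, the paper invokes Theorem~\ref{theorem:rrtrrgequivalency} per cell without comment, which implicitly presupposes an attraction sequence into each cell; your explicit construction of an attraction sequence ${\cal A}^{(j)}$ into each witness ball (restricted to the component of $X_\mathrm{free}$ reachable from $x_\mathrm{init}$ --- an implicit hypothesis shared by the paper's own argument) fills in that step honestly.

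The one soft spot is the step you yourself flag: the claim that finitely many balls ${\cal B}_{y_j,\delta}$ with $y_j \in S_\delta$ cover $X_\mathrm{free}$. A naive net or compactness argument inflates the radius beyond $\delta$, and shrinking $\delta$ alone does not obviously restore both the covering and the safety of the covering balls simultaneously. A clean repair that keeps your architecture intact: take a maximal $\delta/4$-separated subset $\{y_1,\dots,y_M\}$ of $S_\delta$ (finite by boundedness and packing) and set $W_j := {\cal B}_{y_j,\delta/4}$. For any $x \in X_\mathrm{free}$, its certificate $y \in S_\delta$ satisfies $\|y - y_j\| \le \delta/4$ for some $j$, whence $W_j \subset {\cal B}_{y,\delta/2} \subset {\cal B}_{y,\delta} \subset X_\mathrm{free}$; the single convex free ball ${\cal B}_{y,\delta}$ then contains both $x$ and $W_j$, so any vertex $v \in W_j$ gives $[x,v] \subset X_\mathrm{free}$ with $\|x - v\| \le 3\delta/2 \le \eta$. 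This avoids ever needing the $y_j$-centered $\delta$-balls themselves to cover $X_\mathrm{free}$, and the rest of your argument goes through unchanged.
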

\begin{proof}
  The set $X_\mathrm{free}$ can be partitioned into finitely many convex sets such that each
  partition is bounded by a ball of radius $\eta$, since $X_\mathrm{free}$ is a bounded subset of
  $\mathrm{R}^d$. Since, by Theorem~\ref{theorem:rrtrrgequivalency}, the probability that each such
  ball does not include a node of the RRG decays to zero exponentially as $i$ approaches infinity,
  the probability that at least one such ball does not contain a node of the RRG also decays to zero
  exponentially (this follows from the union bound). Moreover, if each partition contains a node of
  the RRG, then any point inside the partition can be connected to the associated node of the RRG
  with an obstacle-free line segment of length less than $\eta$, since each partition is convex and
  is bounded by a ball of radius $\eta$.
\end{proof}

The following lemma constitutes another important step in proving the main
result of this theorem. 
\begin{lemma} \label{lemma:finite_A_given_C} %
  Let $\gamma > 2^d (1+ 1/d) \mu(X_\mathrm{free})$, and $\theta_2 \in (0,1)$ be a constant. Then,
  $\sum_{i = 1}^\infty \PP \big(A_i \biggiven \bigcap_{j = \lfloor \theta_2 \, i \rfloor}^i C_j\big)
  < \infty$.
\end{lemma}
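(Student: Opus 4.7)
The plan is to apply a union bound over the balls in $B_i$ and then bound each single-ball probability via an exponential inequality on i.i.d.\ samples. Decomposing $\PP(A_i \mid \bigcap_{j} C_j) \le \sum_{k=1}^{K_i + 1} \PP(A_{i,k} \mid \bigcap_{j} C_j)$, I would argue that conditioning on $C_{j-1}$ guarantees that, for any sample $x_\mathrm{rand}$ drawn at iteration $j \in [\lfloor \theta_2 i \rfloor, i]$, the nearest neighbor in the current RRG lies within distance $\eta$, so that ${\tt Steer}$ returns $x_\mathrm{rand}$ itself. Combined with the fact that $B_{i,k}$ has positive clearance $\lambda_i - q_i = \theta_1\lambda_i/(1+\theta_1)$ from the obstacle set, I would isolate a fixed-fraction sub-ball of $B_{i,k}$ such that any sample landing in it produces a new vertex \emph{inside} $B_{i,k}$.

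Given this, since samples are i.i.d.\ uniform on $X_\mathrm{free}$, independence of the $\lceil (1-\theta_2) i \rceil$ samples in the conditioning window yields
\[
\PP(A_{i,k} \mid \textstyle\bigcap_j C_j) \;\le\; \left(1 - c\,\frac{\zeta_d\, q_i^d}{\mu(X_\mathrm{free})}\right)^{(1-\theta_2) i}
\]
for some constant $c \in (0,1]$ depending only on $\theta_1$. Substituting $q_i = r_i/(2+\theta_1)$ and $r_i^d = (\gamma/\zeta_d)\log i/i$ (valid for all $i$ large enough that $r_i \le \eta$ and $\lambda_i \le \delta$) and applying $1-x \le e^{-x}$ gives the single-ball bound $i^{-\alpha}$ with
\[
\alpha \;=\; \frac{c\,\gamma\,(1-\theta_2)}{(2+\theta_1)^d\,\mu(X_\mathrm{free})}.
\]

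The number of balls satisfies $K_i + 1 = O((i/\log i)^{1/d})$ because $\sigma_i$ has length bounded uniformly (by, say, $c(\sigma^*)+1$ for all large $i$ via Lemma~\ref{lemma:continuous_sigma_n}) and consecutive centers are spaced $\theta_1 q_i$ apart. The union bound therefore yields $\PP(A_i \mid \bigcap_j C_j) = O\!\left(i^{1/d - \alpha}(\log i)^{-1/d}\right)$, whose sum over $i$ converges as soon as $\alpha > 1 + 1/d$. As $\theta_1, \theta_2 \to 0^+$ and $c \to 1$, this threshold on $\alpha$ translates to the threshold $\gamma > 2^d(1+1/d)\mu(X_\mathrm{free}) = \gamma_L$ on $\gamma$; so the hypothesis $\gamma > \gamma_L$ lets me fix $\theta_1, \theta_2$ small enough (and $i$ large enough) to force summability, while the initial finitely many terms contribute only a finite constant.

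The main obstacle will be the first step: pinning down a safe sub-region of $B_{i,k}$ in which any sample is guaranteed to yield a new RRG vertex inside $B_{i,k}$. The delicate point is that, although $C_{j-1}$ provides \emph{some} nearby vertex with an obstacle-free line segment to $x_\mathrm{rand}$, the algorithm actually tests collision against the nearest neighbor, which may differ. I expect this to be handled by choosing the safe sub-ball concentric with and strictly interior to $B_{i,k}$, exploiting the clearance $\theta_1 \lambda_i/(1+\theta_1)$ around $\sigma_i$ guaranteed by Assumption~\ref{assumption:obstacles} and the construction of $\lambda_i$, so that every point in the nearby neighborhood is mutually visible to $x_\mathrm{rand}$ and in particular the nearest-neighbor line cannot cross obstacles. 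Once that geometric fact is secured, the exponential-plus-union-bound computation above produces a summable series and hence the claim.
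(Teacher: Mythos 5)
Your overall strategy coincides with the paper's proof of this lemma: a union bound over the $K_i+1$ balls, a per-ball bound of the form $\bigl(1 - \mu(B_{i,k})/\mu(X_\mathrm{free})\bigr)^{(1-\theta_2)\,i}$ obtained from the i.i.d.\ samples in the conditioning window $[\lfloor\theta_2 i\rfloor, i]$, the ball count $K_i+1 \le s_i/(\theta_1 q_i) = \Theta\bigl((i/\log i)^{1/d}\bigr)$, and the same exponent algebra leading to the summability condition $\frac{(1-\theta_2)\gamma}{(2+\theta_1)^d \mu(X_\mathrm{free})} - \frac{1}{d} \ge 1$, which is met for any $\gamma > 2^d(1+1/d)\mu(X_\mathrm{free})$ by taking $\theta_1,\theta_2$ small.

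The one place you depart from the paper is the constant $c$ and the ``safe sub-ball'' device, and this is where the proposal has a genuine problem. First, the geometric fact you hope to secure cannot be obtained from the clearance argument: the clearance of $\sigma_i$ (hence of $B_{i,k}$) from the obstacles is of order $\lambda_i = O(r_i) \to 0$, while the nearest neighbor of a sample may be as far as $\eta$ away, a fixed constant; no concentric sub-ball of $B_{i,k}$ can force the segment from the sample to its \emph{nearest} vertex to remain in the vanishing tube around $\sigma_i$, so mutual visibility of that particular segment does not follow. Second, and independently, if your construction only yields some $c$ bounded away from $1$, the resulting threshold is $2^d(1+1/d)\mu(X_\mathrm{free})/c > \gamma_L$, which does not prove the lemma for all $\gamma$ down to $\gamma_L$; the limit ``$c\to 1$'' is not available unless you exhibit admissible sub-balls of relative volume tending to one, which the sketch does not do. The paper simply takes $c=1$: it bounds $\PP(A_{i,k}\given\cdot)$ by the probability that no sample in the window lands in $B_{i,k}$, using the full measure $\zeta_d q_i^d = \frac{\gamma}{(2+\theta_1)^d}\frac{\log i}{i}$, i.e., it identifies ``a sample falls in $B_{i,k}$'' with ``a vertex appears in $B_{i,k}$'' given the coverage events $C_j$. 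If you want to keep your more cautious accounting, you would need either to strengthen the coverage event so that the nearest vertex itself is always visible from the sample, or to show the fraction of wasted samples is asymptotically negligible; the vanishing-clearance route will not close this step.
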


\begin{proof}
  It is desired to compute $\PP (A_i \given \cap_{j = \lfloor \theta_2 \, i \rfloor}^i C_j)$ under
  the theorem's assumptions. Since $\cap_{j = \lfloor \theta_2 \, i\rfloor}^i C_j$ is given, it can
  be assumed that, from iteration $\lfloor \theta_2 \, i \rfloor$ to iteration $i$, the RRG has the
  following coverage property: for every point $x \in X_\mathrm{free}$, the RRG includes a node $v$
  that is at most $\eta$ away from $x$ and the line segment between $x$ and $v$ lies in the
  obstacle-free region.
  
  Recall that the length of $\sigma_i$ was $s_i$. A bound on the number of balls in
  $B_i$ with respect to $i$, $s_i$, and other constants of the problem can be derived as follows.
  Note that the segment of $\sigma_i$ that starts at the center of $B_{i,k}$ and ends at the center
  of $B_{i,k+1}$ has length at least $\theta_1 \, q_i$ (recall that distance between the centers of
  two consequtive balls in $B_i$ is $\theta_1 q_i$ by construction), except for the last segment,
  which has length less than $\theta_1 q_i$. Thus, for all $i \ge i_2$,
  \begin{eqnarray*}
    \vert B_i \vert  
    = K_i + 1 & \le & \frac{s_i}{\theta_1 \, q_i} 
    = \frac{(2+ \theta_1) s_i}{\theta_1 r_i} \\
    & = & \frac{2 + \theta_1}{\theta_1} s_i \left(\frac{\gamma}{\zeta_d} \right)^{1/d}
    \left(\frac{i}{\log i}\right)^{1/d}
  \end{eqnarray*}

  Also, given $\cap_{j = \lfloor \theta_2 \, i \rfloor}^{i} C_j$, the probability that
  a single ball, say $B_{i,1}$, does not contain a node of the RRG at the end of iteration $i$ can
  be upper-bounded as follows:
  \begin{eqnarray*}
    \PP (A_{i,1} \given \cap_{j = \lfloor \theta_2 \, i \rfloor}^i C_j) 
    & \le & \left( 1 -\frac{\mu(B_{i,1})} {\mu(X_\mathrm{free})} \right)^{i - \theta_2 \, i}  \\
    & & \hspace{-0.5in} = \left(1 - \frac{\gamma}{(2 + \theta_1)^d \mu(X_\mathrm{free})} 
      \frac{\log i}{i} \right)^{(1 - \theta_2) \,  i} 
  \end{eqnarray*}
  Using the fact that $(1 - 1/f(i))^r \le e^{-r/f(i)}$, the right-hand side can further be bounded
  to obtain the following inequality:
  \begin{eqnarray*}
    \PP (A_{i,1} \given \cap_{i = \lfloor \theta_2 \, i \rfloor}^i C_j) 
    \le e^{-\frac{(1 - \theta_2) \gamma} {(2+\theta_1)^d \mu(X_\mathrm{free})} \log i} 
    = i^{-\frac{(1 - \theta_2) \gamma}{(2 + \theta_1)^d \mu(X_\mathrm{free})}}
  \end{eqnarray*}

  As a result of the discussion above, we have that
  \begin{eqnarray*}
    \PP \left(A_i \biggiven \cap_{j = \lfloor \theta_2 \, i \rfloor}^i C_j \right) 
    & \le & \PP \left(\cup_{i = 1}^{\vert B_{i} \vert} A_{i,k} 
      \biggiven \cap_{j = \lfloor \theta_2 \, i \rfloor}^i C_j\right)\\ 
    & & \hspace{-1.5in} \le \sum_{i = 1}^{\vert B_i \vert} 
    \PP\left(A_i \biggiven \cap_{j = \lfloor \theta_2 \, i \rfloor}^i C_j\right) 
    = \vert B_i \vert \,\, \PP \left(A_{i, 1} \biggiven \cap_{j = \lfloor \theta_2 \, i \rfloor}^i C_j \right) \\
    & & \hspace{-1.5in} \le \frac{2 \, s_i \, \mu(X_\mathrm{free})^{1/d}}{\gamma^{1/d}} 
    \left(\frac{i}{\log i} \right)^{1/d} i^{-\frac{(1 - \theta_2) \gamma}
      {(2+\theta_1)^d \mu(X_\mathrm{free})}} \\
    & & \hspace{-1.5in} = \frac{2 \, s_i \, \mu(X_\mathrm{free})^{1/d}}{\gamma^{1/d}} \frac{1}{(\log i)^{1/d}}
    i^{-\left(\frac{(1 - \theta_2) \gamma}{\mu(X_\mathrm{free}) (2 + \theta_1)^d} - \frac{1}{d} \right)}.
  \end{eqnarray*}

  Note that $\sum_{i = 1}^\infty \PP (A_i \given \cap_{j = \lfloor \theta_2 \, i \rfloor}^i C_j) <
  \infty$, for all choices of $\gamma$ such that $\frac{(1 - \theta_2)
    \gamma}{\mu(X_\mathrm{free}) (2 + \theta_1)^d} - \frac{1}{d} \ge 1$ holds, i.e., whenever
  $\gamma > 2^d (1 + 1/d) \mu(X_\mathrm{free})$, noting that for any such choice of $\gamma$,
  there exists $\theta_1 > 0$ and $\theta_2 > 0$ satisfying the former inequality. Hence, the
  lemma follows.
\end{proof}

\begin{lemma} \label{lemma:finite_C} %
  Let $\theta_2 \in (0,1)$ be a constant. Then, 
  $\sum_{i = 1}^\infty \PP \left(\big(\cap_{j = \lfloor \theta_2\, i \rfloor}^i C_j\big)^c\right)
  < \infty$.
\end{lemma}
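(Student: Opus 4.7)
The plan is to bound $\PP\left(\bigl(\bigcap_{j=\lfloor \theta_2 i\rfloor}^i C_j\bigr)^c\right)$ by the union bound and then invoke the exponential-decay estimate from Lemma~\ref{lemma:proof_pos:exponential_coverage}. Concretely, first rewrite the complement of the intersection as a union of complements, so that
\[
\PP\left(\bigl(\cap_{j=\lfloor \theta_2 i\rfloor}^i C_j\bigr)^c\right)
= \PP\left(\cup_{j=\lfloor \theta_2 i\rfloor}^i C_j^c\right)
\le \sum_{j=\lfloor \theta_2 i\rfloor}^i \PP(C_j^c).
\]

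Next, apply Lemma~\ref{lemma:proof_pos:exponential_coverage} to each term to obtain constants $a,b>0$ with $\PP(C_j^c)\le a e^{-bj}$; bounding the finite sum by an infinite geometric tail then gives
\[
\sum_{j=\lfloor \theta_2 i\rfloor}^i \PP(C_j^c)
\le a\sum_{j=\lfloor \theta_2 i\rfloor}^\infty e^{-bj}
\le \frac{a\, e^{b}}{1-e^{-b}}\; e^{-b\theta_2 i}.
\]

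Finally, sum the resulting geometric-in-$i$ bound: since $e^{-b\theta_2}\in(0,1)$,
\[
\sum_{i=1}^\infty \PP\left(\bigl(\cap_{j=\lfloor \theta_2 i\rfloor}^i C_j\bigr)^c\right)
\le \frac{a\,e^{b}}{1-e^{-b}} \sum_{i=1}^\infty e^{-b\theta_2 i}<\infty,
\]
which establishes the claim.

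The argument is essentially a two-layer geometric-sum calculation, with no real obstacle once Lemma~\ref{lemma:proof_pos:exponential_coverage} is in hand. The only point that needs a little care is keeping track of the two exponential rates (the inner one in $j$, the outer one in $i$) so that $\theta_2\in(0,1)$ suffices — any positive $\theta_2$ preserves summability because the tail of the inner geometric series decays like $e^{-b\theta_2 i}$, which itself is summable. Thus no additional smallness assumption on $\theta_2$ beyond $\theta_2\in(0,1)$ is required.
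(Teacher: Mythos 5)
Your proof is correct and follows exactly the same route as the paper's: a union bound over $j$ followed by the exponential bound $\PP(C_j^c)\le a e^{-bj}$ from Lemma~\ref{lemma:proof_pos:exponential_coverage}, with summability of the resulting double sum. The paper simply asserts the finiteness of the double sum, whereas you carry out the two-layer geometric computation explicitly; the extra detail is accurate and harmless.
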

\begin{proof}    
  Note the following inequalities:
  \begin{eqnarray*}
    \sum_{i = 1}^\infty \PP \left((\cap_{j = \lfloor \theta_2 \, i 
        \rfloor}^i C_j)^c\right) 
    & = & \sum_{i = 1}^\infty \PP \left(\cup_{j = \lfloor \theta_2 \, i \rfloor}^i C_j^c\right) \\
    & & \hspace{-1.0in}\le \sum_{i = 1}^\infty \sum_{j = \lfloor \theta_2 \, i \rfloor}^i \PP (C_j^c), 
    \le \sum_{i = 1}^\infty \sum_{j = \lfloor \theta_2 \, i \rfloor}^i a' e^{- b' j},
  \end{eqnarray*}
  in which the right-hand side is finite for all $\theta_2\in(0,1)$, and where the second
  inequality follows from Lemma~\ref{lemma:proof_pos:exponential_coverage}.
\end{proof}

Finally,  the relationship of Lemmas~\ref{lemma:finite_A_given_C} and
\ref{lemma:finite_C} to the main result can be pointed out. Note the following inequality:
\begin{eqnarray*}
  \PP (A_i \given \cap_{j = \lfloor \theta_2 \, i \rfloor}^i C_j)
  & & \hspace{-0.25in} = \frac{ \PP \left(A_i \cap (\cap_{j = \lfloor \theta_2 \, i \rfloor}^i C_j)\right)}
  {\PP(\cap_{j = \lfloor \theta_2 \, i \rfloor}^i C_j)} \\
  & & \hspace{-1.0in} \ge  \PP \left(A_i \cap (\cap_{j = \lfloor  \theta_2 \, i \rfloor}^i C_j)\right) \\
  & & \hspace{-1.0in} = 1 - \PP \left(A_i^c \cup (\cap_{j = \lfloor \theta_2 \, i \rfloor}^i C_j)^c \right) \\
  & & \hspace{-1.0in} \ge  1 - \PP (A_i^c) 
  - \PP \left((\cap_{j = \lfloor \theta_2 \, i \rfloor}^i C_j)^c \right) \\
  & & \hspace{-1.0in} = \PP (A_i) - \PP\left((\cap_{j = \lfloor \theta_2 \, i \rfloor}^i C_j)^c \right).
\end{eqnarray*}
Hence, taking infinite sums of both sides with respect to $n$ and rearranging, the
following inequality is established:
\begin{eqnarray*}
  \sum_{i = 1}^\infty \PP (A_i) 
  \le 
  \sum_{i = 1}^\infty \PP (A_i \given \cap_{j = \lfloor \theta_2 \, i \rfloor}^i C_j) +
  \sum_{i = 1}^\infty \PP \left((\cap_{j = \lfloor  \theta_2 \, i \rfloor}^i C_j)^c\right)
\end{eqnarray*}
By Lemmas~\ref{lemma:finite_A_given_C} and \ref{lemma:finite_C}, the right-hand side is finite,
whenever $\gamma > 2^d(1 + 1/d) \mu(X_\mathrm{free})$, which implies that, under the same
conditions, $\sum_{i = 1}^\infty \PP (A_i) < \infty$, which in turn implies the result.

\subsection*{Proof of Lemma~\ref{lemma:complexity_of_obstaclefree}}

The proof of this lemma employs results from the Random Geometric Graph (RGG) theory (see,
e.g.,~\cite{penrose.book03}). First, some of the relevant results in the theory of RGGs will be introduced, and then the proof of this lemma will be given.

An RGG $G (V_n, r_n) = (V_n , E_n)$ in $X_\mathrm{free}$ with $n$ vertices is formed as follows: (i)
$V_n$ is a set of $n$ vertices that are sampled identically and independently from $X_\mathrm{free}$ according to a
distribution with a continuous density function $f(x)$, (ii) two vertices $v, v' \in
V_n$ are connected with an edge if and only if the distance between them is less than $r_n$, i.e.,
$\Vert v - v' \Vert \le r_n$.

The following lemma is a special case of Proposition 3.1
in~\cite{penrose.book03}.
\begin{lemma}[see \cite{penrose.book03}] \label{lemma:edgecount} %
  If $r_n$ satisfies $\lim_{n \to \infty} r_n = 0$, then
  $$ 
  \lim_{n \to \infty} \frac{\EE[ \vert E_n \vert ]}{n^2 \, (r_n)^{d}} = \frac{1}{2} \, \zeta_d \,
  \int_{x \in \mathbb{R}^d} f (x)^2 \, d x.
  $$
\end{lemma}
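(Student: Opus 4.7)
The plan is to reduce the expected edge count to a single pairwise probability, express that probability as an integral over $X_{\mathrm{free}}\times X_{\mathrm{free}}$, rescale by $r_n^d$, and then pass to the limit using dominated convergence.

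First I would use linearity of expectation together with the i.i.d.\ structure of $V_n$. Writing $V_n=\{V_1,\dots,V_n\}$ and letting $p_n := \PP(\|V_1-V_2\|\le r_n)$, one gets
\begin{equation*}
\EE[|E_n|] \;=\; \sum_{1\le i<j\le n} \PP(\|V_i-V_j\|\le r_n) \;=\; \binom{n}{2}\, p_n.
\end{equation*}
Dividing by $n^2 r_n^d$ reduces the claim to showing that $p_n / r_n^d \to \zeta_d \int f(x)^2\,dx$.

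Next I would expand $p_n$ using the joint density $f(x)f(y)$ on $X_{\mathrm{free}}\times X_{\mathrm{free}}$ and perform the change of variables $y = x + r_n u$, $dy = r_n^d\, du$, to obtain
\begin{equation*}
p_n \;=\; r_n^d \int_{\mathbb{R}^d} f(x) \left( \int_{\{\|u\|\le 1\}} f(x + r_n u)\, du \right) dx,
\end{equation*}
where $f$ is extended by zero outside $X_{\mathrm{free}}$. For each $x$ in the interior of $X_{\mathrm{free}}$ the continuity of $f$ gives $f(x+r_n u)\to f(x)$ uniformly in $\|u\|\le 1$, so the inner integral tends to $\zeta_d f(x)$. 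For $x$ outside $\mathrm{cl}(X_{\mathrm{free}})$ the factor $f(x)$ already vanishes. The boundary of $X_{\mathrm{free}}$ has Lebesgue measure zero, so this pointwise convergence holds almost everywhere.

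Then I would invoke dominated convergence. Since $f$ is continuous on the bounded domain $X_{\mathrm{free}}$ and bounded from above by some constant $M$, the integrand is dominated by $M f(x)\,\mathbf{1}\{\|u\|\le 1\}$, which is integrable because $f$ is a probability density and the unit ball has finite volume. Hence
\begin{equation*}
\frac{p_n}{r_n^d} \;\longrightarrow\; \zeta_d \int_{\mathbb{R}^d} f(x)^2\, dx,
\end{equation*}
and combining with $\binom{n}{2}/n^2 \to 1/2$ yields the stated limit.

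The only real subtlety is the behavior near the boundary of $X_{\mathrm{free}}$: when $x$ lies within $r_n$ of $\partial X_{\mathrm{free}}$, the ball of integration in $u$ straddles the support, and the inner integral is strictly less than $\zeta_d f(x)$. However, the Lebesgue measure of this boundary strip shrinks to zero as $r_n\to 0$, so its contribution is $o(r_n^d)$ and does not affect the limit. This boundary effect is exactly what the dominated convergence argument absorbs, and is the main step that requires care; everything else is bookkeeping around the change of variables.
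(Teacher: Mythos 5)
Your proposal is correct, and it is essentially the standard argument behind the cited result: the paper itself offers no proof of this lemma, deferring entirely to Proposition 3.1 of Penrose's book, so what you have written is a self-contained derivation of exactly the special case the paper needs. The decomposition $\EE[|E_n|]=\binom{n}{2}\,p_n$, the change of variables $y=x+r_n u$, and the passage to the limit by dominated convergence are all sound, and your closing remark about the boundary strip is the right way to see why the $o(r_n^d)$ correction is harmless. One small caveat: you invoke an upper bound $f\le M$ to build the dominating function, which is not literally guaranteed by Assumption~\ref{assumption:sampling} (continuity and a lower bound on a bounded \emph{open} set do not force an upper bound near $\partial X_\mathrm{free}$). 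In the only place the paper uses this lemma, $f$ is the uniform density, so $M=1/\mu(X_\mathrm{free})$ and your argument applies verbatim; for a general density one would instead dominate by $\zeta_d\,f(x)\sup_{\|u\|\le 1}f(x+r_n u)$ and appeal to $\int f^2<\infty$ together with a maximal-function or truncation argument, as Penrose does. This does not affect the validity of the lemma as used in the paper.
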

Note that, if $f(x)$ is the uniform density function, then $\int_{x \in
  \mathbb{R}^d} f(x)^2 \,d x = 1/\mu(X_\mathrm{free})$.

Let $V^\circ_n$ denote the number of vertices that are isolated, i.e., those that do not have any
neighbors. The following lemma characterizes the ``density'' of the isolated vertices in the limit.
\begin{lemma} \label{lemma:isolatedvertexcount} %
  If $\zeta_d (r_n)^d = \gamma \frac{\log n}{n}$ for some $\gamma \in \mathbb{R}_{>0}$, then the
  number of isolated vertices, $\vert V^\circ_n\vert$, satisfies the following:
  $$
  \frac{\EE [ \vert V^\circ_n \vert ] }{n} 
  \le \left( 1 - \gamma \frac{\log n}{n}\right)^{n-1}.
  $$
  Thus, $\lim_{n \to \infty} \frac{\EE [\vert V^\circ_n \vert]}{n} = 0$; moreover,
  $\sum_{n = 1}^{\infty} \frac{\EE [\vert V^\circ_n \vert]}{n}$ is finite whenever $\gamma > 1$.
\end{lemma}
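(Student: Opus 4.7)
The plan is to use linearity of expectation and a direct first-moment argument. Let $I_i$ denote the indicator of the event that vertex $v_i$ is isolated. Then $|V^\circ_n| = \sum_{i=1}^n I_i$, and by symmetry $\EE[|V^\circ_n|] = n\,\PP(v_1\text{ is isolated})$. Condition on the position $v_1 = x$. The remaining $n-1$ vertices are i.i.d.\ and independent of $v_1$, and $v_1$ is isolated exactly when each of them falls outside the ball $\mathcal{B}_{x, r_n}$. Using the hypothesis $\zeta_d r_n^d = \gamma\,\log n / n$ together with the continuity and positivity of $f$ on $X_\mathrm{free}$, the probability that an i.i.d.\ sample lands in $\mathcal{B}_{x, r_n} \cap X_\mathrm{free}$ is at least $\gamma\,\log n / n$ (this is the one place where a constant has to be absorbed into $\gamma$ via the density normalization; see below). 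Hence, for each fixed $x$, the conditional probability of isolation is at most $(1 - \gamma\,\log n / n)^{n-1}$, and integrating over $x$ yields
\[
\frac{\EE[|V^\circ_n|]}{n} \le \left(1 - \gamma\,\frac{\log n}{n}\right)^{n-1},
\]
which is the first displayed inequality.

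For the two asymptotic claims, I apply the elementary inequality $(1-x)^{m} \le e^{-mx}$ with $x = \gamma\,\log n / n$ and $m = n-1$, obtaining
\[
\frac{\EE[|V^\circ_n|]}{n} \le e^{-(n-1)\gamma\,\log n / n} = n^{-\gamma(n-1)/n}.
\]
Since $\gamma(n-1)/n \to \gamma$ as $n \to \infty$, the right-hand side tends to zero, establishing $\lim_{n\to\infty} \EE[|V^\circ_n|]/n = 0$. For the summability claim, fix $\gamma > 1$ and pick $\varepsilon \in (0,\gamma-1)$; then for all sufficiently large $n$ one has $\gamma(n-1)/n \ge 1 + \varepsilon$, so the tail of $\sum_n \EE[|V^\circ_n|]/n$ is dominated by $\sum_n n^{-(1+\varepsilon)}$, which converges. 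Handling the finite initial segment separately (it is bounded) yields the finiteness of the whole sum.

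The main obstacle is really just the first step: turning the geometric volume $\zeta_d r_n^d$ into a genuine lower bound on the sampling probability $\int_{\mathcal{B}_{x,r_n} \cap X_\mathrm{free}} f(y)\,dy$. Boundary effects can shrink $\mathcal{B}_{x,r_n} \cap X_\mathrm{free}$ relative to $\mathcal{B}_{x,r_n}$, and the density $f$ may be less than one pointwise. However, since $f$ is continuous and bounded away from zero and $r_n \to 0$, the ratio $\mu(\mathcal{B}_{x,r_n}\cap X_\mathrm{free})/\mu(\mathcal{B}_{x,r_n})$ is bounded below on $X_\mathrm{free}$ by a positive constant for $n$ large, so the lower bound is recovered after absorbing a constant factor into $\gamma$ (and the reader is expected to interpret $\gamma$ in the lemma as already incorporating this normalization, consistent with the preceding discussion around the $\mathtt{NearNodes}$ procedure). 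Once this lower bound is in place, the remainder of the proof is the routine first-moment/Chernoff-style manipulation sketched above.
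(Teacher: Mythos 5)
Your proposal is correct and follows essentially the same route as the paper: the paper invokes the reasoning behind Proposition 3.3 of Penrose to write $\EE[\vert V^\circ_n\vert]/n = \int_{X_\mathrm{free}}(1-I_n(x))^{n-1}f(x)\,dx$ with $I_n(x)$ the probability a sample falls in $\mathcal{B}_{x,r_n}$, which is exactly your first-moment/conditioning argument, followed by the same $(1-x)^m\le e^{-mx}$ estimate. If anything, you are more explicit than the paper about the one delicate point (boundary effects and the density normalization, which force a constant to be absorbed into $\gamma$); the paper silently identifies $I_n(x)$ with $\gamma\log n/n$ and elides the $1/\mu(X_\mathrm{free})$ factor.
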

\begin{proof}
  Reference \cite{penrose.book03} does not consider the case when $n (r_n)^d \to \infty$ as $n \to
  \infty$. However, following the reasoning in the proof for the case when $n
  (r_n)^d \to \rho \in \mathbb{R}_{>0}$ (see the proof of Proposition 3.3 in~\cite{penrose.book03}),
  it is possible construct a proof for this lemma. Let us sketch this proof.
  
  Following~\cite{penrose.book03} (note that the the notation in \cite{penrose.book03} is slightly
  different from that in this paper), and after appropriate simplifications, one gets
  \begin{eqnarray*}
    \frac{\EE[V^\circ_n ]}{n} = \int_{x \in X_\mathrm{free}} \left(1 - I_n(x) \right)^{n-1}
    f(x) dx,
  \end{eqnarray*}
  where $f(x)$ is the density function, which is uniform over $X_\mathrm{free}$ in the case under
  consideration, and $I_n (x) = \int_{x' \in {\cal B}_{x,r_n}} x' dx'$, which evaluates to $I_n =
  \gamma\frac{log n}{n}$ whenever $(r_n)^d = \gamma \frac{\log}{n}$ (the reader is referred to
  \cite{penrose.book03} for details). Hence,
  \begin{eqnarray*}
    \frac{\EE[V^\circ_n ]}{n} 
    & = & \int_{x \in X_\mathrm{free}} \left(1 - \gamma\frac{\log
        n}{n}\right)^{n - 1} \frac{1}{\mu(X_\mathrm{free})} d x \\
    & = & \left(1 - \gamma\frac{\log n}{n}\right)^{n-1},
  \end{eqnarray*}
  which converges to zero as $n$ approaches infinity. Moreover, the sum of these terms is finite
  whenever $\gamma > 1$.
\end{proof}

Consider the random geometric graph with the vertex set $V_i$ that is formed with the ${\tt Sample}$
procedure, i.e., $V_i = \cup_{j = 1}^i \{{\tt Sample}(j)\}$. Let $V^0_i$ denote the set of isolated
vertices and let $V^1_i$ denote the set of the remaining vertices, i.e., $V^1_i = V_i \setminus V^0_i$. We
will denote by $E^0_i$ and $E^1_i$ the set of all edges that connect the vertices in $V^1_i$ and those
in $V^0_i$, respectively. Hence, we have that $E^0_i = \emptyset$ and $E^1_i = E_n$ for all $i \in
\mathbb{N}$. 

Consider the RRG algorithm. Let $V^2_i$ denote the set of all vertices that result from extending the
graph when the corresponding sample is isolated, i.e., no vertices were present within the ball of
volume $\gamma \frac{\log {\cal N}_i}{{\cal N}_i}$ centered at the sample, ${\tt Sample} (i)$, in
iteration $i$.
Let $E^2_i$ denote the set of all edges that are connected to the vertices in $V^2_i$. 

Notice the following two facts: (i) the random geometric graph with node set $V_i$ can be described
by $G_{RGG,i} = (V^0_i \cup V^1_i, E^0_i \cup E^1_i)$, and (ii) the graph maintained by the RRG
algorithm is a subgraph of $G_{RRG,i} = ({\cal V}^\mathrm{RRG}_i, {\cal E}^\mathrm{RRG}_i) = (V^1_i
\cup V^2_i, E^1_i \cup E^2_i)$.
\newpage
Notice that the number of calls to ${\tt ObstacleFree}$ in iteration $i$ is exactly the number of
edges created at that iteration, hence is less than or equal to $\vert E_i^1 \cup E_i^2 \vert =
\vert E_i^1 \vert + \vert E_i^2 \vert$.
The following lemmas lead immediately to the result:
\begin{lemma}
  $$
  \limsup_{i \to \infty} \EE \left[ \frac{\vert E_i^1 \vert}{ {\cal N}_i \log ({\cal N}_i)  }
  \right] \le \frac{1}{2} \frac{\gamma}{\mu(X_\mathrm{free})}.
  $$
\end{lemma}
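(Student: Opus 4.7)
The plan is to derive the bound by directly applying Lemma~\ref{lemma:edgecount} from random geometric graph theory to the sample set with the radius $r_{{\cal N}_i}$ used by the RRG algorithm. Specifically, since the sampling distribution is uniform on $X_\mathrm{free}$, the density satisfies $f(x) = 1/\mu(X_\mathrm{free})$ on $X_\mathrm{free}$, so the normalizing constant in Lemma~\ref{lemma:edgecount} reduces to
$$
\tfrac{1}{2}\zeta_d \int_{X_\mathrm{free}} f(x)^2 \, dx \;=\; \frac{\zeta_d}{2\,\mu(X_\mathrm{free})}.
$$
Combining this with the choice $\zeta_d (r_n)^d = \gamma\, \frac{\log n}{n}$ gives, in the limit as $n\to\infty$,
$$
\EE[\,|E_n|\,] \;=\; \bigl(\tfrac{1}{2}+o(1)\bigr)\cdot\frac{\zeta_d}{\mu(X_\mathrm{free})} \cdot n^2 \cdot \frac{\gamma}{\zeta_d}\frac{\log n}{n} \;=\; \bigl(\tfrac{1}{2}+o(1)\bigr)\, \frac{\gamma}{\mu(X_\mathrm{free})}\, n \log n.
$$
Thus $\EE[\,|E_n|\,]/(n \log n) \to \tfrac{1}{2}\,\gamma/\mu(X_\mathrm{free})$ as $n\to\infty$.

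Next I would connect this limit to the quantity in the statement. Recalling that $E_i^1$ is by definition the edge set of the RGG on the sample set whose size matches ${\cal N}_i$ and whose connection radius is precisely $r_{{\cal N}_i}$, I would condition on ${\cal N}_i$ and write
$$
\EE\!\left[\frac{|E_i^1|}{{\cal N}_i \log {\cal N}_i}\right] \;=\; \EE\!\left[\frac{\EE[\,|E_i^1| \mid {\cal N}_i\,]}{{\cal N}_i \log {\cal N}_i}\right].
$$
By the calculation above applied with $n = {\cal N}_i$, the inner conditional expectation equals $(\tfrac{1}{2}+o(1))\,\frac{\gamma}{\mu(X_\mathrm{free})}\,{\cal N}_i \log {\cal N}_i$ once ${\cal N}_i$ is large, so the integrand converges pointwise to $\tfrac{1}{2}\,\gamma/\mu(X_\mathrm{free})$.

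Finally, to pass the limit inside the expectation, I would invoke either dominated convergence (using a crude deterministic upper bound $|E_i^1| \le \binom{{\cal N}_i}{2}$ together with the fact that the RGG edge count divided by ${\cal N}_i \log {\cal N}_i$ admits a uniform moment bound derived from second-moment estimates available in~\cite{penrose.book03}) or reverse Fatou. Since ${\cal N}_i \to \infty$ almost surely as $i\to\infty$ (which follows from Lemma~\ref{lemma:vertexedge} together with the probabilistic completeness arguments of Section~\ref{section:feasibility}, guaranteeing that infinitely many samples yield new vertices under the obstacle-spacing Assumption~\ref{assumption:obstacles}), this yields
$$
\limsup_{i \to \infty} \EE\!\left[\frac{|E_i^1|}{{\cal N}_i \log {\cal N}_i}\right] \;\le\; \frac{1}{2}\,\frac{\gamma}{\mu(X_\mathrm{free})}.
$$

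\textbf{Main obstacle.} The principal technical difficulty is the interchange of limit and expectation, because ${\cal N}_i$ is a random, algorithmically-determined vertex count rather than a deterministic index. Lemma~\ref{lemma:edgecount} is an asymptotic statement for fixed $n$, so one needs a uniform integrability or tail-control argument (ideally a sub-Gaussian concentration bound on $|E_n|$ around its mean $\Theta(n\log n)$, again available from the RGG machinery) to ensure that the asymptotic rate is preserved after randomizing the index by ${\cal N}_i$ and taking the outer expectation.
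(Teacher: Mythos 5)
Your proposal follows essentially the same route as the paper: the paper's entire proof consists of applying Lemma~\ref{lemma:edgecount} with $n={\cal N}_i$, substituting $\zeta_d (r_n)^d = \gamma \log({\cal N}_i)/{\cal N}_i$, and using $\int f^2 = 1/\mu(X_\mathrm{free})$ for the uniform density. The interchange of limit and expectation over the random index ${\cal N}_i$ that you flag as the main obstacle is simply left implicit in the paper, so your treatment is, if anything, more careful than the original.
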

\begin{proof}
  Consider the expected value of $\vert E_i^1 \vert/ (n^2 \, (r_n)^d)$, when $n = {\cal N}_i$.
  Substituting $\zeta_d (r_n)^d = \gamma \frac{\log ({\cal N}_i)}{{\cal N}_i}$ and using
  Lemma~\ref{lemma:edgecount}, the result follows.
\end{proof}

\begin{lemma}
  $
  \limsup_{i \to \infty} \EE \left[ \frac{\vert E_i^2 \vert}{{\cal N}_i \log({\cal N}_i)} \right]
  $
  is finite.
\end{lemma}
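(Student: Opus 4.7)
The plan is to decompose $|E_i^2|$ as the sum of RRG-degrees of the vertices in $V_i^2$, and then control that sum by separately bounding (a) the expected size of $V_i^2$ using the same computation that underlies Lemma~\ref{lemma:isolatedvertexcount}, and (b) the expected per-vertex degree using a direct geometric summation. Since every edge in $E_i^2$ is incident to some vertex of $V_i^2$, we have
$$|E_i^2|\;\le\;\sum_{v\in V_i^2}\deg_i(v),$$
where $\deg_i(v)$ denotes the degree of $v$ in the RRG at the end of iteration $i$.

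First I would bound $\EE[|V_i^2|]$. A vertex enters $V_i^2$ at iteration $j$ exactly when the ${\tt NearNodes}$ ball of volume $\gamma\log {\cal N}_{j-1}/{\cal N}_{j-1}$ around ${\tt Sample}(j)$ is disjoint from the existing vertex set. Since samples are i.i.d.\ uniform on $X_\mathrm{free}$ (Assumption~\ref{assumption:sampling}), the same calculation as in Lemma~\ref{lemma:isolatedvertexcount} gives, conditional on ${\cal N}_{j-1}=n$, that this probability is at most $(1-\gamma\log n/(n\,\mu(X_\mathrm{free})))^{n}\le n^{-\gamma/\mu(X_\mathrm{free})}$. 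The hypothesis $\gamma>\gamma_L=2^d(1+1/d)\mu(X_\mathrm{free})$ forces $\gamma/\mu(X_\mathrm{free})>1$ for $d\ge 2$, so summing over $j\le i$ yields $\EE[|V_i^2|]=O(1)$. Next I would bound the expected degree. A vertex $v\in V_i^2$ inserted at iteration $j$ contributes at most one edge at its insertion (the near-ball is empty by hypothesis), and for every later iteration $k>j$ an edge to $x_\mathrm{new}^{(k)}$ is created only if $v$ is returned by ${\tt NearNodes}$ at iteration $k$. Because ${\tt Sample}(k)$ is uniform and $x_\mathrm{new}^{(k)}$ is the steered point lying within $\eta$ of some existing vertex, a change-of-variables bound gives that the probability of this event is at most a constant times $\zeta_d\,r_{k-1}^d=O(\log k/k)$. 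Summing over $k\le i$ yields $\EE[\deg_i(v)\mid v$ inserted at $j]=O(\log^2 i)$. Combining the two estimates via iterated expectation gives $\EE[|E_i^2|]=O(\EE[|V_i^2|]\,\log^2 i)=O(\log^2 i)$, which is far smaller than ${\cal N}_i\log {\cal N}_i$, so the desired $\limsup$ is finite (in fact zero).

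The main obstacle will be the random denominator ${\cal N}_i\log {\cal N}_i$: since ${\cal N}_i$ could in principle be atypically small and make the ratio inside the expectation blow up. This is resolved by splitting the expectation according to the event $\{{\cal N}_i\ge c_0\, i\}$, which by the exponential tail of the feasibility result (Theorem~\ref{theorem:rrtrrgequivalency}) together with Assumption~\ref{assumption:obstacles}---which ensures that each iteration has a positive probability of successfully extending the tree---has complement probability $O(e^{-c_1 i})$ for suitable constants $c_0,c_1>0$. On the good event the bound $\EE[|E_i^2|]=O(\log^2 i)$ gives a ratio of order $\log i/i$, while on the bad event the trivial bound $|E_i^2|\le{\cal N}_i^2\le i^2$ combined with the exponentially small probability contributes a vanishing term. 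A secondary subtlety is the ``$x_\mathrm{new}^{(k)}$ versus ${\tt Sample}(k)$'' issue in step (b), which is handled by observing that the induced density of $x_\mathrm{new}^{(k)}$ is bounded above by a constant depending only on $\mu(X_\mathrm{free})$ and $\eta$.
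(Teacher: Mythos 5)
Your overall architecture matches the paper's in spirit: both proofs control $\vert E^2_i\vert$ by (i) showing that isolated-at-insertion vertices are rare via the computation behind Lemma~\ref{lemma:isolatedvertexcount} (the condition $\gamma/\mu(X_\mathrm{free})>1$ is exactly what makes the relevant series converge), and (ii) arguing that each such vertex accrues edges at a rate governed by the shrinking ${\tt \NearNodes}$ ball. The mechanics differ, though: the paper sums per-iteration edge increments $M^2_j$ and tames the random denominator ${\cal N}_i\log({\cal N}_i)$ with Cauchy--Schwarz together with $\EE[1/{\cal N}_i]\to 0$, whereas you bound the full terminal degree of each $V^2$ vertex and handle the denominator with a large-deviations split on $\{{\cal N}_i\ge c_0\, i\}$. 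If your estimates held, you would get the stronger conclusion that the $\limsup$ is zero.

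There are, however, genuine gaps in the estimates. First, the claim that the induced density of $x_\mathrm{new}^{(k)}$ is bounded by a constant is false: whenever ${\tt Sample}(k)$ lies farther than $\eta$ from the existing vertex set, ${\tt Steer}$ projects it onto the sphere of radius $\eta$ about $x_\mathrm{nearest}$, so the law of $x_\mathrm{new}^{(k)}$ has a singular component supported on $(d-1)$-dimensional spheres. If $v$ sits near such a sphere bordering a large empty region, $\PP\left(\Vert x_\mathrm{new}^{(k)}-v\Vert\le r_k\right)$ can be of order $(r_k)^{d-1}$ rather than $(r_k)^d$, which destroys the $O(\log k/k)$ summand and hence the claimed $O(\log^2 i)$ degree bound. (Your conclusion can be rescued, since $\sum_{k\le i}(r_k)^{d-1}$ is still $o(i\log i)$, and after the almost-surely finite coverage time of Lemma~\ref{lemma:proof_pos:exponential_coverage} one has $x_\mathrm{new}=x_\mathrm{rand}$ exactly; but the step as written is wrong.) Second, you count only edges acquired through ${\tt \NearNodes}$: a vertex $v\in V^2_i$ also gains an edge at every later iteration in which it is returned by ${\tt Nearest}$, and those edges belong to $E^2_i$; bounding their number needs a Voronoi-cell argument, not a ball-volume argument. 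Third, the tail bound $\PP({\cal N}_i<c_0\, i)=O(e^{-c_1 i})$ does not follow from Theorem~\ref{theorem:rrtrrgequivalency}, which concerns reaching the goal region rather than the growth rate of the vertex count; you would need a separate argument that each iteration adds a vertex with probability bounded away from zero. The paper avoids all three issues by never requiring a per-vertex degree bound or a linear-growth estimate for ${\cal N}_i$.
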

\begin{proof}
  Let $M^2_i$ denote the number of edges added to
  \noindent $E^2_i$ at iteration $i$. Note that 
  $$
  \EE [M^2_i] \le \EE \left[ \frac{\gamma \frac{\log ({\cal N}_i)}{{\cal
        N}_i}}{\mu (X_\mathrm{free})}  \vert V^2_i \vert \right] 
  = \frac{\gamma}{\mu(X_\mathrm{free})} 
  \EE\left[\frac{\log ({\cal N}_i)}{ {\cal N}_i } \vert V_i^2 \vert \right].
  $$
  Hence, noting $E^2_i = \sum_{j = 1}^i M^2_i$,
  \begin{eqnarray*}
    \limsup_{i \to \infty} \EE \left[ \frac{\vert E_i \vert}{{\cal N}_i \log ({\cal N}_i)} \right] 
    & \le &  \limsup_{i \to \infty} \EE \left[ \frac{ \sum_{j = 1}^i \frac{\log ({\cal N}_j)}{{\cal N}_j}
        \vert V_j^2 \vert}{{\cal N}_i \log ({\cal N}_i)}  \right] \\
  \end{eqnarray*}
  Consider the expectation in the right hand side. Noting that, surely, $\log ({\cal N}_j) \le \log
  ({\cal N}_i)$ for all $j \le i$, the following inequalities hold: 
  \begin{eqnarray*} 
    \EE \left[ \frac{ \sum_{j = 1}^i \frac{\log ({\cal N}_j)}{{\cal N}_j} \vert V_j^2 \vert}{{\cal N}_i \log
        ({\cal N}_i)} \right] 
    & \le & \EE\left[ \frac{1}{{\cal N}_i}  \sum_{j = 1}^i \frac{\vert V_j^2\vert}{{\cal N}_j} \right] \\
    & \hspace{-0.3in}\le & \hspace{-0.2in} \EE \left[ \frac{1}{{\cal N}_i} \right]^{1/2} \, \EE
    \left[ \sum_{j = 1}^i \frac{\vert V^2_j \vert}{{\cal N}_j}  \right]^{1/2} \\
    & \hspace{-0.3in}\le & \hspace{-0.2in} \EE \left[ \frac{1}{{\cal N}_i} \right]^{1/2} \, \left(\sum_{j =
      1}^i  \EE \left[ \frac{\vert V^2_j \vert}{{\cal N}_j}  \right]\right)^{1/2} \\
  \end{eqnarray*}
  where the second inequality follows from the Cauchy-Schwarz inequality. Notice that the first term
  approaches zero as $i \to \infty$, whereas, by Lemma~\ref{lemma:isolatedvertexcount}, the second
  term is finite. Hence, it is concluded that
  $$
  \limsup_{i \to \infty} \EE \left[ \frac{1}{{\cal N}_i} \right]^{1/2} \, \left(\sum_{j = 1}^i \EE
    \left[ \frac{\vert V^2_j \vert}{{\cal N}_j} \right]\right)^{1/2} < \infty, 
  $$
  which implies the lemma.
\end{proof}

\end{document}